\newcommand{\graph}[1]{\ensuremath{\mathcal{#1}}}
\newcommand{\vars}[1]{\ensuremath{\mathbf{#1}}}
\newcommand{\var}[1]{\ensuremath{#1}}
\newcommand\CI{{\,\perp\mkern-12mu\perp\,}}
\newcommand{\hz}{\ensuremath{\mathcal{H}_{\mathbf{z}}}}
\newcommand{\z}{\ensuremath{\mathbf{Z}}}
\newtheorem{assumption}{Assumption}
\newtheorem{definition}{Definition}
\newtheorem{theorem}{Theorem}
\newtheorem{proposition}{Proposition}
\newtheorem{lemma}{Lemma}
\newtheorem{corollary}{Corollary}
\newtheoremstyle{restate}{ }{ }{\itshape}{ }{\bfseries}{.}{ }{\thmname{#1}\thmnote{~#3}}
\theoremstyle{restate}
\newtheorem*{repeatedtheorem}{Theorem}
\newtheorem*{repeatedprop}{Proposition}
\title{\bfseries Transportability without Graphs: A Bayesian Approach to Identifying s-Admissible Backdoor Sets}
\author{
\centering
\begin{minipage}[t]{0.3\textwidth}
\centering
\textbf{Konstantina Lelova} \\
Department of Mathematics and Applied Mathematics \\
University of Crete \\
Greece
\end{minipage}
\hfill
\begin{minipage}[t]{0.3\textwidth}
\centering
\textbf{Gregory F. Cooper} \\
Department of Biomedical Informatics \\
University of Pittsburgh \\
USA
\end{minipage}
\hfill
\begin{minipage}[t]{0.3\textwidth}
\centering
\textbf{Sofia Triantafillou} \\
Department of Mathematics and Applied Mathematics \\
University of Crete \\
Greece
\end{minipage}
}
\date{\today}
\begin{document}
\maketitle

\begin{abstract}
\noindent Transporting causal information across populations is a critical challenge in clinical decision-making. Causal modeling provides criteria for identifiability and transportability, but these require knowledge of the causal graph, which rarely holds in practice. We propose a Bayesian method that combines observational data from the target domain with experimental data from a different domain to identify s-admissible backdoor sets, which enable unbiased estimation of causal effects across populations, without requiring the causal graph. We prove that if such a set exists, we can always find one within the Markov boundary of the outcome, narrowing the search space, and we establish asymptotic convergence guarantees for our method.  We develop a greedy algorithm that reframes transportability as a feature selection problem, selecting conditioning sets that maximize the marginal likelihood of experimental data given observational data. In simulated and semi-synthetic data, our method correctly identifies transportability bias, improves causal effect estimation, and performs favorably against alternatives.
\end{abstract}

\section{Introduction}
\label{sec:intro}
Estimating causal effects is essential for predicting the impact of interventions. Experimental data, such as those from randomized controlled trials (RCTs), provide unbiased estimates but are costly, scarce, and often non-transportable across populations. Observational data, such as electronic health records (EHRs), are abundant but subject to confounding. Increasingly, both experimental and observational data are available: for example, over 80\% of hospitals now maintain basic EHRs \citep{10.1093/jamia/ocx080}, and clinical trial data are widely shared \citep{Sim2022}. \emph{In this work, we propose a method for combining experimental data from a source population with observational data from the target population to test whether a causal effect is both identifiable from observational data and transportable. This allows us to use both data sources for low-variance, unbiased estimation when possible.} 

The method is motivated by the following scenario: We are interested in estimating the post-intervention outcome \var Y given treatment \var X for a target patient population $\Pi^*$, where EHR data  measuring $X, Y$ and a set of pre-treatment covariates $\vars O$ are available. In addition, we have experimental data from an RCT  (e.g., published in the clinical literature) performed on a different population, measuring the same set of variables. The two populations may differ in systematic ways, which can be represented by a (unknown) selection diagram
\citep{pearl_transportability_2011}. The key question is whether we can find a set of covariates $\vars Z\subset\vars O$ such that the causal effect of \var X on \var Y given \vars Z is both identifiable from observational data and transportable across populations. In this case, we can combine the source experimental and target observational data to obtain an unbiased estimate of the post-intervention outcome  in $\Pi^*$. 
Our work makes the following contributions to causal inference from multiple environments:
\begin{itemize}
\item We introduce the first method to estimate the probability that a $\mathbf{Z}$-specific causal effect is simultaneously transportable and identifiable from observational data, without requiring knowledge of the causal graph.
\item We prove that if such a covariate set exists, one can always be found within the Markov boundary of $Y$, thereby reducing the search space.
\item We recast transportability as a feature selection problem. Building on this, we develop a greedy algorithm that explores subsets of the outcome’s Markov boundary to identify the optimal  set $\mathbf{Z}$ for estimating $P(Y \mid do(X), \vars Z, \mathbf{s}^*)$ in the target domain.
\end{itemize}

\begin{figure*}[t!]
    \centering
    \begin{tabular}{cccc}
        \includegraphics[width =0.22\columnwidth]{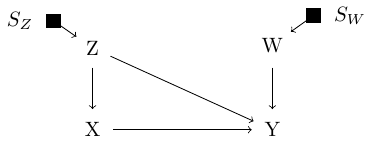}&
        \includegraphics[width =0.2\columnwidth]{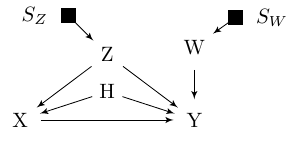}&
        \includegraphics[width =0.2\columnwidth]{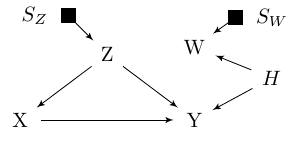}& \includegraphics[width =0.2\columnwidth]{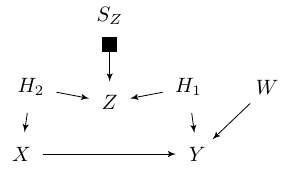}\\
        (a)&  (b)& (c) & (d)\\
    \end{tabular}
    % \caption{\label{fig:motivated_example} Causal structures among treatment \( X \), outcome \( Y \),  and observed pre-treatment covariates \( Z \) and \( W \). (a) $\{Z, W\}$ is an s-admissible backdoor set. (b)  $\{Z, W\}$ is  s-admissible, but it is not a backdoor set. No s-admissible backdoor set exists. (c) $\{Z, W\}$ is a backdoor set, but not s-admissible. However, $\{Z\}$ is an s-admissible backdoor set. (d) $\{Z, W\}$ is neither an backdoor set, nor s-admissible, but $\{W\}$ and $\{ \emptyset\}$ are.}
    \caption{\label{fig:motivated_example} Causal structures among treatment \( X \), outcome \( Y \), observed pre-treatment covariates \( Z \) and \( W \), and unmeasured covariates \vars H.  (a) $\{Z, W\}$ is an s-admissible backdoor set. (b) No s-admissible backdoor set exists. (c) $\{Z\}$ is an s-admissible backdoor set. (d) $\{W\}$ and $\{ \emptyset\}$ are s-admissible backdoor sets.}
\end{figure*}

The rest of the paper is organized as follows: Section \ref{sec:preliminaries} reviews preliminary concepts related to transportability and identifiability. Section \ref{sec:examples} presents motivating examples that illustrate how our method can make useful inferences. Section \ref{sec:method} details the proposed method. Section \ref{sec:related_work} reviews relevant literature.  Section \ref{sec:experiments} shows that our method makes useful inferences and performs favorably to alternatives in simulated and semi-synthetic data.

\section{Preliminaries}\label{sec:preliminaries}

We adopt the framework of \textit{structural causal models} (SCMs) \citep{pearl_causality_2000}. An SCM $M = \langle \vars U, \vars V, \vars F, P(\vars U) \rangle$ consists of exogenous variables $\vars U$, endogenous variables $\vars V = \{V_1, \dots, V_n\}$, structural functions $\vars F = \{f_1, \dots, f_n\}$ assigning a value to $V_i$ based on a subset of variables in $U \cup (V \setminus V_i)$, and a probability function $P(\vars U)$ defined over the domain of \vars U.  An intervention $do(X=x)$ on $M$ produces a modified model $M_x = \langle \vars U, \vars V, \vars F_x, P(\vars U) \rangle$, where $\vars F_x$ replaces $f_X \in \vars F$ with a constant function returning $x$ for each $X \in \vars X$. The causal Directed Acyclic Graph (DAG), $\graph G$, associated with $M$, induces a distribution $P$ if $P$ factorizes according to $\graph G$. The criterion of $d$-separation can be used on $\graph G$ to determine the conditional (in)dependencies in distribution $P$. $\graph G$ and $P$ are \emph{faithful} to each other if a conditional independence in $P$ implies a $d$-separation in $\graph G$. Interventions $do(X=x)$ are represented by removing incoming edges into $X$, yielding a post-intervention DAG, $\graph G_{\overline X}$, and a post-interventional distribution $P(Y \mid do(X), \mathbf{Z})$, called $\vars Z$-specific causal effect. $\graph G_{\underline X}$ denotes $\graph G$ with edges out of $X$ removed.

\textbf{Selection Diagrams.} Different domains may have different distributions for some variables. Selection diagrams \cite{bareinboim_transportability_2012} use \textit{selection variables}, \(\vars S\) (graphically depicted as square nodes) to represent the mechanisms by which the two domains differ. Formally, let $\langle M, M^* \rangle$ be a pair of structural causal models for domains $\langle \Pi, \Pi^* \rangle$, with corresponding probability distributions $P$ and $P^*$ and a shared causal graph $\graph G=\graph G^*$. The pair is said to induce a selection diagram $\graph D$, if:
 (i) every edge in $\graph G$ or $\graph G^*$ is also in $\graph D$; and
(ii) $\graph D$ includes an extra edge $S_i \rightarrow V_i$ whenever $f_i \neq f_i^*$ or $P(V_i)\neq P^*(V_i)$. We use $P=P(X, Y, \mathbf{O}, |\mathbf{S} = \mathbf{s})$ and $P^*=P(X, Y, \mathbf{O}| \mathbf{S} = \mathbf{s}^*)$ to denote the observational distributions in the source and target domains, respectively. A variable with no incoming selection variable is assumed to have the same generating mechanism in both domains.

\textbf{Identifiability:} The area of identifiability establishes graphical criteria for estimating the post-intervention distribution from the observational distribution \emph{in the same domain}, when the graph is known. For pre-treatment covariates \vars Z,  $P(Y|do(X), \vars Z)$ is identifiable from observational data in the same domain when $\vars Z$ satisfies the backdoor criterion for $X, Y$ in the causal graph $\graph G$ \citep{pearl_causality_2009}:
\begin{theorem}[special case of Rule 2 of do-calculus]\label{the:ident}
Let $\graph G$ be the DAG associated with a causal model, and let $P(\cdot)$ stand for the probability distribution induced by that model. Then if $(Y  \perp\!\!\!\perp  X \mid  \vars Z)_{\graph G_{\underline{X}}}$, the following equation holds:
\begin{equation}
P(Y|do(X), \vars Z) = P(Y|X, \vars Z)
    \label{eq:ident}
    \end{equation}
\end{theorem}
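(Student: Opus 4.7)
The statement is the standard Rule 2 of do-calculus specialised to a single intervention on $X$, so my plan is to follow Pearl's augmented-graph proof. The idea is to adjoin a fictitious parent $F_X$ to $X$ in the causal diagram. When $F_X$ takes a distinguished ``idle'' value, the structural equation $f_X$ runs unchanged and we recover the observational distribution; when $F_X$ is set to a specific value $x$, it overrides $f_X$ and pins $X := x$, reproducing the interventional regime. Writing $\graph{G}'$ and $P'$ for the augmented graph and its induced distribution, we obtain two tautological identities,
\begin{equation}
P(Y\mid do(X=x),\vars{Z}) = P'(Y\mid F_X=x,\vars{Z}), \qquad P(Y\mid X=x,\vars{Z}) = P'(Y\mid X=x,\vars{Z},F_X=\mathrm{idle}),
\end{equation}
which re-express both sides of the target equality as ordinary conditionals in a single distribution.

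The key graphical step is to translate the hypothesis as
\[
(Y \CI X \mid \vars{Z})_{\graph{G}_{\underline{X}}} \iff (Y \CI F_X \mid X, \vars{Z})_{\graph{G}'}.
\]
Every path from $F_X$ to $Y$ in $\graph{G}'$ begins with the edge $F_X \to X$; if it leaves $X$ by an outgoing edge ($X \to \cdots \to Y$) it is blocked by conditioning on $X$, whereas if it continues through an incoming edge at $X$ then $X$ is a collider opened by that same conditioning and the remainder is precisely a back-door path from $X$ to $Y$, i.e.\ a path in $\graph{G}_{\underline{X}}$. This equivalence is the step I expect to be the main obstacle, since it requires a careful case analysis at $X$ (collider versus non-collider) and a check that adjoining $F_X$ does not create any genuinely new active path.

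Granted the equivalence, soundness of d-separation yields $P'(Y\mid X,\vars{Z},F_X) = P'(Y\mid X,\vars{Z})$. Instantiating this at $F_X = x$ (noting that the conditioning on $X=x$ is then redundant) and at $F_X = \mathrm{idle}$ gives
\begin{equation}
P(Y\mid do(X=x),\vars{Z}) = P'(Y\mid X=x,\vars{Z},F_X=x) = P'(Y\mid X=x,\vars{Z},F_X=\mathrm{idle}) = P(Y\mid X=x,\vars{Z}),
\end{equation}
which is the claim. The bookkeeping around $F_X$ is routine; essentially all of the work is concentrated in the d-separation equivalence above.
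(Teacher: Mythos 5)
The paper never proves this statement: it is imported verbatim as a known special case of Rule~2 of the do-calculus, with a citation to Pearl (2009), so there is no in-paper proof to compare against. Your augmented-graph route (adjoin $F_X \to X$, re-express both regimes as ordinary conditioning on $F_X$ in one distribution $P'$, then invoke the global Markov property) is precisely the classical proof from the cited source, and your two bookkeeping identities and the final chain of equalities are correct as stated.

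However, there is a genuine gap in the step you yourself flagged as the main obstacle, and it is not the one you anticipated. To prove the theorem you need the direction $(Y \CI X \mid \vars Z)_{G_{\underline{X}}} \Rightarrow (Y \CI F_X \mid X, \vars Z)_{G'}$, i.e., contrapositively, that any path $F_X \to X \leftarrow \cdots \to Y$ active given $\{X\}\cup\vars Z$ in $G'$ yields a back-door path active given $\vars Z$ in $G_{\underline{X}}$. Your Case~2 concludes that the remainder after $X$ ``is precisely a back-door path, i.e.\ a path in $G_{\underline{X}}$'' and implicitly treats its activity as transferring; but the hypothesis only blocks it given $\vars Z$, whereas in $G'$ it must be blocked given the \emph{larger} set $\{X\}\cup\vars Z$, and enlarging the conditioning set can open colliders. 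Concretely, if the back-door path contains a collider $C$ that is an ancestor of $X$, with no descendant of $C$ in $\vars Z$, then conditioning on $X$ opens $C$: the path is active in $G'$ given $\{X\}\cup\vars Z$ even though the same path is blocked in $G_{\underline{X}}$ given $\vars Z$, so the path-by-path correspondence you describe fails. The missing idea is a rerouting argument: replace the segment of the path from $X$ up to $C$ by the reversed directed path $C \to \cdots \to X$ (which survives in $G_{\underline{X}}$, since a directed path into $X$ uses no edge out of $X$). At $C$ the rerouted route has a chain rather than a collider; if no vertex of the directed path lies in $\vars Z$ this produces an active $X$--$Y$ connection given $\vars Z$ in $G_{\underline{X}}$, and if some vertex does lie in $\vars Z$ then that vertex is a descendant of $C$ in $\vars Z$, so the original path was active given $\vars Z$ after all --- either way contradicting the hypothesis. (One also needs the standard walk-to-path conversion, since the reroute may revisit vertices, and care when several colliders are opened by $X$ simultaneously.) With that lemma supplied, your proof closes; as written, the pivotal equivalence is asserted rather than proved, and the actual failure mode is not ``new active paths created by adjoining $F_X$'' but the change of conditioning set on pre-existing paths.
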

Sets that d-separate \var Y and \var X in  $\graph G_{\underline{X}}$ are called \textbf{back-door sets}. 
%Similar to s-admissible sets, for faithful distributions, Eq. \ref{eq:ident} also implies that  \vars Z is a backdoor set. 

\textbf{Transportability.} The area of transportability focuses on graphical criteria for transporting a causal effect \( P(Y| do(X), \vars Z, \vars s) \) in the source domain to a causal effect  \( P(Y | do(X), \vars Z, \vars s^*) \) in the target domain. Sets that make the outcome independent of the selection variables are called \textbf{s-admissible} and are shown to satisfy the following \citep{pearl_transportability_2011}.

\begin{theorem}[S-admissibility]\label{theorem:Theorem 2 JP}
  Let \( D \) be the selection diagram characterizing \( \Pi \) and \( \Pi^* \), and \(\vars S \) the set of selection variables in \( D \). The \( Z \)-specific causal effect \( P(Y | do(X), \vars Z) \) is transportable from \( \Pi \) to \( \Pi^* \) if \( \vars Z \) d-separates \( Y \) from \( \vars S \) in the \( X \)-manipulated version of \( D \), that is, \( \vars Z \) satisfies \( (Y \perp\!\!\!\perp \vars S \mid \vars Z)_{D_{\overline{X}}} \).  A set $\vars Z$ that satisfies this condition is called \textbf{s-admissible}.
  For s-admissible sets, \begin{equation}\label{eq:transp}
P(Y|do(X), \vars Z, \vars s) = P(Y|do(X), \vars Z, \vars s^*)
\end{equation}
\end{theorem} 
s-admissibility stems directly for Rule 1 of do-calculus. 

%For faithful distributions, the reverse also holds: Eq. \ref{eq:transp} implies that \vars Z is s-admissible, since $P(Y|do(X), \vars Z, \vars s) = P(Y|do(X), \vars Z, \vars s^*)$ indicates that \var S is independent from \var Y given \vars Z in the post-intervention selection diagram. 
We are interested in sets that \emph{simultaneously} satisfy both conditions in Theorems \ref{the:ident} and \ref{theorem:Theorem 2 JP}:

\begin{definition}{s-Admissible Backdoor set (\textbf{sABS})}\label{def:s-admissible-backdoor}\\
\noindent A set of variables $\vars Z$ is an \emph{s-admissible back-door set} for $(X,Y)$ relative to a selection diagram $D$ if
(i)  $(Y  \perp\!\!\!\perp  X \mid  \vars Z)_{D_{\underline{X}}}$ and
(ii) $(Y \perp\!\!\!\perp \vars S \mid \vars Z)_{D{_{\overline X}}}$.
\end{definition}

Notice that, since the target and source domain share the same causal graph $\graph G =\graph G^*$, the exact same sets satisfy the backdoor criterion in $\graph G^*$ and \graph D \footnote{The assumption of a shared causal graph can be relaxed without affecting the validity of our method. For brevity, we include this discussion in the Supplementary.}.

The areas of transportability and identifiability provide graphical criteria that allow us to identify and generalize causal effects from observational and/or interventional distributions in different domains, when the graph is known.
However, the causal graph is often unknown and not uniquely identifiable from available data. In the next sections, we propose a method for testing if these two criteria jointly hold when the graph is unknown.

\section{Motivating Examples}\label{sec:examples}
We now motivate our method with examples. Our goal is to estimate the post-intervention distribution of the outcome $Y|do(X)$ in a target domain $\Pi^*$, using pre-treatment covariates $\vars O$. We assume the following:

\begin{assumption}\label{ass:main}
 Let \graph D be a selection diagram and $P(X, Y, \vars O, \vars S)$ a distribution faithful to \graph D, which is strictly positive. Let $D_o^*$ be an observational dataset with $N_o$ samples of $X, Y, \vars O$ in the target domain $\Pi^*$, sampled from distribution $P^* = P(X, Y, \vars O|\vars S=\vars s^*)$. Let $D_e$ be an experimental dataset with $N_e$ samples measuring the same variables in domain $\Pi$, sampled from distribution $P_{\overline X} = P(do(X), Y, \vars O|\vars S=\vars s)$.      
\end{assumption}

We propose to leverage both $D_o^*$ and $D_e$ to find covariate sets $\vars Z \subseteq \vars O$ that are \emph{s-admissible backdoor sets} with respect to $\graph D$. When such a set exists, the $\vars Z$-specific causal effect can be consistently estimated by combining observational and experimental data and applied to the target domain. Importantly, the s-admissibility and the backdoor criterion may hold for a subset of the observed covariates, but not for the full set.  

Figure~\ref{fig:motivated_example} illustrates four representative cases:\\
\textbf{Example 1}, Fig.~\ref{fig:motivated_example}(a):  
$S_Z$ and $S_W$ encode distributional shifts between domains. The set $\{Z, W\}$ is an s-admissible backdoor set, so both $D_e$ and $D_o^*$ can be used to estimate $P(Y|do(X), Z, W, \vars s^*)$.

\textbf{Example 2}, Fig.~\ref{fig:motivated_example}(b): \var H is unobserved, so no s-admissible backdoor set exists.   However, $\{Z, W\}$ is s-admissible (but not backdoor), hence $D_e$  yields an unbiased estimator for $P(Y|do(X), Z, W, \vars s^*)$. However, without knowing the causal graph, we cannot know that $\{Z, W\}$ is s-admissible.

\textbf{Example 3}, Fig. \ref{fig:motivated_example}(c): $\{Z\}$ alone is an s-admissible backdoor set, so both $D_e$ and $D_o^*$ can be used to estimate $P(Y|do(X), Z, \vars s^*)$. Notice that $\{Z, W\}$ is a backdoor set (but not s-admissible), so $D_o$ allows for unbiased estimation of $P(Y|do(X), Z, W,\vars s^*)$. However, without knowing the causal graph, we cannot know that $\{Z, W\}$ is a backdoor set.

\textbf{Example 4}, Fig. \ref{fig:motivated_example}(d): $\{W\}$ alone is an s-admissible backdoor set, so both $D_e$ and $D_o^*$ can be used to estimate $P(Y|do(X), Z, \vars s^*)$.  The set $\{Z, W\}$ is neither s-admissible, nor a backdoor set, so estimating  $P(Y|do(X), Z, W, \vars s^*)$ using any available data leads to biased estimations.

Our proposed method uses the available data to identify an s-admissible backdoor set, if one exists (otherwise, it returns $NaN$). Hence, our method would return  $\{Z, W\}$ in Example 1,  $NaN$ in Example 2,  $\{Z\}$ in Example 3, and  $\{W\}$ in Example 4, and the corresponding estimators (when not $NaN$) based on $D_e, D_o^*$.  While alternative estimators based on only $D_e$ or only $D_o^*$ may be better (e.g., include additional informative covariates),  their unbiasedness cannot be guaranteed without knowledge of the causal graph. Restricting to sABS's ensures unbiased estimation.

\section{Method}\label{sec:method}
The underlying idea of our proposed method is the following: If a set of variables  \vars Z  is an \textbf{sABS} for $X, Y$ with respect to \graph D, the following holds:

\begin{proposition}\label{prop:sabs}
  Let \( D \) be the selection diagram characterizing \( \Pi \) and \( \Pi^* \), and let \(\vars S \) be the set of selection variables in \( D \). If \( Z \) is an s-admissible backdoor set for $X, Y$ relative to \graph D, then \begin{equation}\label{eq:sabs1}P(Y|do(X), \vars Z, \vars s) = P(Y|X,\vars Z, \vars s^*)
  \end{equation}
\end{proposition}

The proof can be found in the supplementary, and is a direct consequence of Eqs.~\ref{eq:ident}, \ref{eq:transp}, which hold simultaneously for s-admissible backdoor sets. Hence, if $\vars Z$ is an sABS, the conditional distribution of the outcome in the target observational distribution is the same as in the source experimental distribution. Moreover, in this case, our target estimand $P(Y|do(X), \vars Z, \vars s^*)$ coincides with both sides of Eq.~\ref{eq:sabs1}. Thus, all available data can be used to predict $\var{Y|do(X)}$ in the target domain.

Notice that, for most faithful distributions, Eq. \ref{eq:sabs1} will \emph{not} hold if \vars Z is \emph{not} an sABS. However, there are cases where, through some accidental parameter choices,  the confounding bias in the target domain and the transportability bias cancel each other out. If such a cancellation were to occur, Eq.~\ref{eq:sabs1} would hold, even though \vars Z is neither s-admissible, nor a backdoor set. However, this would reflect an accidental parameter alignment rather than a structural property of the system, and the target estimand would not be equal to either side of Eq. \ref{eq:sabs1}. Assumption~\ref{ass:sfaith} rules out coincidences, in the same spirit that ordinary faithfulness rules out accidental independencies:
\begin{assumption}\label{ass:sfaith}[sABS-faithfulness]  We assume that $P(X, Y, \vars O, \vars S)$  is faithful to  \graph D. Moreover, if \vars Z is not  s-admissible for $X, Y$ in \graph D, and \var Z is not a backdoor set in \graph D, then 
\[\exists\;x, y, \vars z\textnormal{ s.t. } \frac{P(y|do(x), \vars z, \vars s)-P(y|do(x), \vars z, \vars s^*)}{ P(y|x, \vars z, \vars s^*)-P(y|do(x), \vars z, \vars s^*)}\neq 1\]
\end{assumption}
It is easy to show that if a distribution is sABS - faithful to \graph D, then Eq. \ref{eq:sabs1} only holds if \vars Z is an sABS (A formal proof can be found in the Supplementary). Hence, Proposition \ref{prop:sabs} and Assumption \ref{ass:sfaith} allow us to test if \vars Z is an sABS, by checking if Eq. \ref{eq:sabs1} holds in our available data. If \vars Z is an sABS, we can use $D_e$ and $D_o^*$ to obtain an unbiased estimator of $P(Y|do(X), \vars Z, \vars s^*)$. If $\vars Z$ is not an sABS, we cannot be sure that such a $\vars Z$-specific unbiased estimator exists in our available data, so we return no estimator. In the next section, we propose: (a) a method for computing the probability that a set is an sABS, and (b) a search strategy for identifying an optimal sABS, if one exists.

\subsection{Probability that a set is an sABS}
To estimate the probability that a set $\vars Z$ is an sABS, we introduce a binary variable \hz, with $\hz =h_{\vars Z}$ if \vars Z is an sABS, and $\hz =\neg h_{\vars Z}$ if it is not.  Under $h_{\vars Z}$, $\vars Z$ is an s-admissible backdoor set in \graph D. Therefore, Eq. \ref{eq:sabs1} holds. In contrast, under $\neg h_{\vars Z}$, Eq. \ref{eq:sabs1} does not hold. $P(\mathcal{H}_\vars{Z} =h_{\vars Z}|D_e, D_o^*)$ can be computed on the basis of this observation, to reflect the compatibility between the source experimental and target observational data. Following the approach in \cite{pmlr-v206-triantafillou23a}, we can use the Bayes rule to obtain the following Equation:
\begin{equation}\label{eq:hzt}
P(h_{\vars Z}|D_e,D_o^*) =
\frac{ P(D_e|h_{\vars Z},D_o^*)P(h_{\vars Z}|D_o^*)}{\sum\limits_{\hz\in\{ h_{\vars Z}, \neg h_{\vars Z}\}}P(D_e|\hz, D_o^*)P(\hz|D_o^*)}
\end{equation}
The heart of Eq.\ref{eq:hzt} is the marginal likelihood $P(D_e|h_{\vars Z},D_o^*)$, which quantifies how well we can predict the outcome in the source experimental data, given the target observational data, as justified by Proposition \ref{prop:sabs}. The terms in Eq. \ref{eq:hzt} are discussed below.

\textbf{Estimating $P( H_{\vars Z}|D_o^*)$}. This quantifies the probability that $\vars Z$ is an sABS, given only the target observational data $D_o^*$. This can be viewed as a prior for \hz given just the target observational data. In our case, the observational data do not carry enough information for the hypothesis \hz, since observational data from a single domain cannot be used to determine whether $\vars Z$ is s-admissible. For this reason, we use the  uninformative distribution $P(h_{\vars Z}|D_o^*) = P(\neg h_{\vars Z}|D_o^*)=0.5$, indicating that, given only observational data in the target domain,  each set is plausibly an sABS. The relative effect of $P(H_{\vars Z}|D_o^*)$ on Eq. \ref{eq:hzt} is small, as it remains constant regardless of the size of the experimental data. An ablation study showing that this effect is negligible even for small experimental sample sizes can be found in the Supplementary. A similar result was shown in \citep{pmlr-v206-triantafillou23a}.

%One approach for computing this probability is by reasoning on the space of possible causal graphs, which has been employed for testing conditional independencies \citep{claassen2012bayesian} and for ranking adjustment sets \citep{triantafillou_causal_2021}. However, in our case, the observational data do not carry enough information for the hypothesis \hz. Even if by using independence constraints or causal discovery methods we could uniquely identify that a set is a backdoor set in $\Pi^*$, observational data from a single domain do not carry enough information for the selection diagram, and in general cannot determine whether $\vars Z$ is s-admissible.  Moreover, the relative effect of  $P(h_{\vars Z}|D_o^*)$ on Eq. \ref{eq:hzt} is small, as it remains constant regardless of the size of the experimental data. For this reason, we use the  uninformative distribution $P(h_{\vars Z}|D_o^*) = P(\neg h_{\vars Z}|D_o^*)=0.5$, indicating that, given only observational data in the target domain,  each set is plausibly an sABS. An ablation study for the effect of this prior on the method, included in the supplementary material, shows that it has a negligible effect even for very small experimental data samples, and decreases rapidly with increasing experimental sample size.

\begin{algorithm}[t]
\LinesNumbered
\SetKwInOut{Input}{input}
\SetKwInOut{Output}{output}
\Input{$X,Y,\vars Z,D_{o}^*,D_{e}$, MCMC samples \var N}
\Output{$P(D_e|D_o^*, H_{\vars Z}), P(H_{\vars Z}\mid D_e, D_o^*)$}
\ForEach{$i =1, \dots,N$}{
Sample $\theta_e^i$ from an un-informative prior $f(\theta_e)$\\
% Compute likelihood $\graph L_0 = P(D_e | (\theta_{y_x|\vars z})^i)$\\
Compute likelihood $\graph L_0 = P(D_e | \theta_{e}^i)$\\
Sample $(\theta_o^*)^i$ from the observational posterior $f(\theta_o^*|D_o^*)$ using MCMC\\
% Compute likelihood $\graph L_1(i) = P(D_e | (\theta_{y|x, \vars z}^*)^i)$}
Compute likelihood $\graph L_1(i) = P(D_e | (\theta_o^*)^i)$}
$P(D_e|D_o^*, \neg h_{\vars Z}) \leftarrow \frac{1}{N}\sum_i \graph L_0(i)$\\
$P(D_e|D_o^*, h_{\vars Z})\leftarrow\frac{1}{N}\sum_i \graph L_1(i)$\\
% $P(h_{\vars Z}|D_e, D_o^*) \leftarrow  \frac{ P(D_e|h_{\vars Z},D_o^*)P(h_{\vars Z}|D_o^*)}{\sum_{\hz\in\{ h_{\vars Z}, \neg h_{\vars Z}\}}P(D_e|\hz, D_o^*)P(\hz|D_o^*)}$
Compute $P(h_{\vars Z}|D_e, D_o^*)$ using Eq.\ref{eq:hzt}
\caption{ProbsABS}\label{algo:probsabs}
\end{algorithm}

\textbf{Estimating $P(D_e|D_o^*, H_{\vars Z})$}
This represents how likely the source experimental data are, given the target observational data, and the fact that \vars Z is (not) an sABS. Under $h_{\vars Z}$,  $P(Y|do(X), \vars Z, \vars s)=P(Y|X,\vars Z, \vars s^*)$, hence, the source experimental and target observational distributions are the same. Let  $\theta_e, \theta_o^*$ denote the parameters of the conditional distributions $P(Y|do(X), \vars Z, \vars s), P(Y|X,\vars Z, \vars s^*)$, respectively. $P(D_e|D_o^*,\hz)$ can be computed as the marginal likelihood of a model predicting the post-intervention $Y$ in the source domain, using observational data as a prior, under the two competing values of \hz:
\begin{equation}\label{eq:exp_score}
     P(D_e|\hz, D_o^*) = \int_{\mathbf{\theta_e} }P(D_e|\mathbf{\theta_e})f(\mathbf{\theta_e}|D_o^*, \hz)d\mathbf{\theta_e}
\end{equation}
Under $\hz =h_{\vars Z}$, $\mathbf{\theta_e} = \mathbf{\theta_o^*}
$, therefore $f(\theta_e|D_o^*, h_{\vars Z}) = f(\theta_o^*|D_o^*)$. Thus, Eq. 
\ref{eq:exp_score} can be rewritten using observational parameters, and computed in closed form for distributions with conjugate priors, or approximated using sampling.

Under $\hz =\neg h_{\vars Z}$, $\theta_e \neq \theta_o^*$, the target observational distribution is not informative  (at least for point estimation) for the source experimental distribution, and therefore $f(\theta_e|D_o^*,\neg h_{\vars Z}) = f(\theta_e)$. Eq \ref{eq:exp_score} is then the marginal likelihood of a model with uninformative priors, and can again be computed in closed form or approximated with sampling. We note that this Bayesian formulation naturally accommodates sample size imbalance between observational and experimental data, as the larger observational dataset informs the prior for the typically smaller experimental sample. Algorithm \ref{algo:probsabs} describes a sampling-based approximation method. If a closed-form solution is feasible, Lines 1-5 can be skipped, and the closed-form equations can be used in Line 6. Specific equations for mixed and discrete data and prior choices can be found in the Supplementary. 

\begin{algorithm}[t!]
\LinesNumbered
\SetKwInOut{Input}{input}
\SetKwInOut{Output}{output}
\SetKwFunction{MarkovBoundary}{MarkovBoundary}
\SetKwFunction{ProbsABS}{ProbsABS}
\SetKwFunction{Score}{Score}
\Input{$X,Y,\vars O$, $D_{e}, D_o^*$, sampling iterations \var {N_s}, probability threshold \vars t}
\Output{sABS $\vars Z^*$,  $P(Y|do(X), \vars Z^*, \vars s^*)$}
{$\Score( \vars Z) \coloneqq \ProbsABS(X, Y, \vars Z, D_o^*, D_e, N_s)$}\;

MB(Y) $\leftarrow \texttt{ MarkovBoundary}(Y, D_o^*)$\;
$P(Y|do(X), \vars Z^*, \vars s^*)\leftarrow$NaN\;
$\vars Z\leftarrow \emptyset$, cur\_score$\leftarrow \Score( \emptyset)$, \emph{found}=false\;

\While{found ==false}{
\ForEach{ $\var Z\in \vars MB(Y)\setminus \vars Z$ } {
\begin{small}
$P(D_e|D_o^*, h_{\vars Z\cup Z})\leftarrow\Score(\vars Z\cup Z$)\;
\end{small}}
\ForEach{ $\var Z\in \vars Z$ }{
\begin{small}
$P(D_e|D_o^*, h_{\vars Z\setminus Z})\leftarrow\Score(\vars Z\setminus Z$)\;
\end{small}}

% Let variable $Z^*$  maximizes $ P(D_e|D_o^*, h_{\vars Z^*})$, then update $\vars Z^* =\vars Z\cup Z^*$ or $\vars Z^* =\vars Z\setminus Z^*$\;

$\vars Z^* \gets \arg\max_{\vars Z' \in \{\;\vars Z\cup Z,\; \vars Z\setminus Z\}} 
   P(D_e|D_o^*, h_{\vars Z'})$\;

\If {$P(D_e|D_o^*, h_{\vars Z^*})>\textnormal{cur\_score}$}{
$\vars Z\leftarrow \vars {Z^*}$, $\textnormal{cur\_score} \leftarrow P(D_e|D_o^*, h_{\vars Z^*})$\;
}
\Else{
\emph{found}==true, $\vars Z^*\leftarrow \vars Z$\;
}
}
\If {$P(h_{\vars Z^*}|D_e, D_o^*)>t$}{$P(Y|do(X), \vars Z^*, \vars s^*) \leftarrow P(Y|do(X), \vars Z^*, D_e, D_o^*)$}
\caption{FindsABS}\label{algo:FindsABS}
\end{algorithm}

\subsection{Finding an sABS}
Using Eq. \ref{eq:hzt}, we can compute the probability that any set \vars Z is an sABS. However, what we ultimately want is to compute the target post-intervention probability of $P(Y|do(X), \vars Z, s^*)$ as accurately as possible. Notice that a subset of the observed variables $\vars O$ may be an s-admissible backdoor set, even if  $\vars O$ is not: For example, in Fig.\ref{fig:motivated_example}(c),
$\{Z, W\}$ is not an sABS, but $\{Z\}$ alone is.

One strategy would be to look through all possible subsets of $\vars O$, and return the one that maximizes the probability $P(\hz|D_e, D_o^*)$. However, doing this exhaustively can only scale up to a handful of variables. The following theorem shows that we only need to look into subsets of the Markov Boundary of the outcome in \graph G ($MB_{\graph G}(Y)$):
%Another approach would be to first try to limit the set of variables that we examine, e.g. only include variables in the Markov Boundary of \var Y. This approach however does not guarantee that we will find an s-admissible set, if one exists: Consider for example the graph in Fig. \ref{fig:outofmb}.  $\vars Z$ is an sABS, but it is not included in the Markov Boundary of $\var Y$.  
%Instead, in Theorem \ref{the:proof_MB}, we show that if an sABS exists, one can always be found within the Markov Boundary of  This allows us to narrow the search space to the  This alows us to narrow the sear, thereby narrowing the search space. Within this reduced space, we adopt a greedy strategy to identify a maximally informative sABS, described in Algorithm \ref{algo:FindsABS}. The algorithm takes as input the source experimental and target observational data, and the number of sampling iterations for Alg. \ref{algo:probsabs} if $P(D_e|D_o^*, \hz)$ is to be approximated with sampling.

\begin{theorem}\label{the:proof_MB}
If there exists a set $\mathbf{Z}\subset \vars O$ that is sABS for $ X,Y$ with respect to \graph D,  there exists a set $\mathbf{Z^*}\subseteq MB_\graph G(Y)$ that is sABS for $ X,Y,$ with respect to \graph D.
\end{theorem}

%This result limits the search space for possible s-admissible backdoor sets. Alg. \ref{algo:FindsABS} starts by identifying the Markov Boundary of \var Y in the observational data (line 2), and then performs a greedy search to find the sABS $\vars Z^*$ that maximizes the marginal likelihood of the outcome in $D_e$. Starting with the empty set (line 4), Alg. \ref{algo:FindsABS} to greedily add/remove covariates from the Markov Boundary of \var Y. (lines 5-9) In each step, the algorithm selects the set that leads to the maximum increase in the marginal likelihood $P(D_e|D_o^*, h_{\vars Z})$ (line 10). n this way, the algorithm discards variables that either cause a discrepancy between the observational and experimental distributions or are no longer informative of \var Y, due to conditional independence. The process is repeated until no single variable addition/removal can further improve the score, and returns the set $\vars Z^*$ that lead to the maximum score (lines 11-14). If the probability that this set is an sABS is greater than a threshold $t$, Alg. \ref{algo:FindsABS} returns the posterior expectation of $P(\var Y| do(X), \vars Z^*)$ using both $D_o^*$ and $D_e$ (lines 15-16). Otherwise, it returns $NaN$.

This result limits the search space for possible s-admissible backdoor sets. Algorithm~\ref{algo:FindsABS} begins by identifying the Markov Boundary of $\var Y$ in the observational data (line~2), and then performs a greedy search 
to find the sABS $\vars Z^*$ that maximizes the marginal likelihood of the outcome in $D_e$. Starting from the empty set (line~4), the algorithm greedily adds or removes covariates from the Markov Boundary of $\var Y$ (lines~5--9). At each step, it selects the candidate set that yields the maximum increase in the marginal likelihood $P(D_e \mid D_o^*, h_{\vars Z})$ (line~10). In this way, the algorithm discards variables that either introduce discrepancies between the observational and experimental distributions, or become redundant for predicting $\var Y$ due to conditional independence. The process is repeated until no single-variable addition or removal improves the score, at which point the algorithm returns the set $\vars Z^*$ that achieves the maximum value (lines~11--14). Finally, if the probability that $\vars Z^*$ is an sABS exceeds a threshold $t$, Algorithm~\ref{algo:FindsABS} outputs the posterior expectation of $P(\var Y \mid do(X), \vars Z^*)$ using both $D_o^*$ and $D_e$ (lines~15--16); otherwise, it returns $\text{NaN}$.

We show the large-sample behavior of Alg. \ref{algo:FindsABS} for discrete data, where the scores $P(D_e|D_o^*, h_{\z})$ and $P(D_e|D_o^*, \neg  h_{\z})$ can be computed in closed form using the BD score \citep{heckerman1995learning}. Theorem \ref{the:proof_equality} shows that the marginal likelihood computed in Alg. \ref{algo:FindsABS} will asymptotically select an s-admissible backdoor set. Theorem \ref{the:proof_superset} shows that, asymptotically, adding a variable that is independent of the post-intervention outcome $Y|do(X), \vars s^*$ decreases the score. Thus, asymptotically the method avoids conditioning on irrelevant covariates, which improves efficiency by reducing variance while preserving unbiasedness of the estimator.

%\textbf{Assumptions A:} Let \graph D be a selection diagram and $P(X, Y, \vars O, \vars S)$ a distribution that is induced by \graph D, and is sABS-faitful to \graph D. Let $D_o^*$ be an observational dataset with $N_o$ cases measuring treatment $X$, outcome $Y$, and pre-treatment covariates \vars O, all discrete,  in domain $\Pi^*$ sampled from distribution $P^*$, which is strictly positive. Also, let $D_e$ be an experimental dataset with $N_e$ cases measuring the same variables in domain $\Pi$. We assume $N_o$ and $N_e$ increase equally without limit ($N:=N_e=N_o$ in the limit).
 
\begin{theorem}\label{the:proof_equality}
Assume Assumptions \ref{ass:main}, \ref{ass:sfaith} hold, $X, Y, \vars O$ are discrete, and $ N_o$ and $N_e$ increase equally without limit ($N:=N_e=N_o$ in the limit). Then Eq. \ref{eq:hzt} will converge to 1 if and only if \vars Z is an s-admissible backdoor set.
\begin{equation}
    \begin{cases}
    \displaystyle \lim_{N \rightarrow \infty}P(h_\vars{Z}|D_e, D_o^*) = 1, & \text{\vars Z is an sABS}   \\
    \smallskip
    \displaystyle \lim_{N \rightarrow \infty} P(h_\vars{Z}|D_e, D_o^*) = 0,& \text{ otherwise}
    \end{cases}
\end{equation}
\end{theorem}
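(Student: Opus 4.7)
The plan is to reduce the theorem to an asymptotic comparison between two marginal likelihoods and then invoke standard BIC-style expansions for the discrete (Dirichlet--Multinomial) case. With the uninformative prior $P(h_{\vars Z}\mid D_o^*) = P(\neg h_{\vars Z}\mid D_o^*) = 1/2$, Eq.~\ref{eq:hzt} rearranges to
\begin{equation*}
P(h_{\vars Z}\mid D_e, D_o^*) = \frac{r_N}{1 + r_N}, \qquad r_N := \frac{P(D_e\mid h_{\vars Z}, D_o^*)}{P(D_e\mid \neg h_{\vars Z}, D_o^*)},
\end{equation*}
so it suffices to prove $r_N \to \infty$ when $\vars Z$ is sABS and $r_N \to 0$ otherwise. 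For the conditional model $P(Y\mid X,\vars Z)$ with strictly positive $P^*$, the BD score admits the standard closed form in terms of $\Gamma$-functions of Dirichlet hyperparameters and observed cell counts, which makes an explicit large-sample expansion available.

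First I would analyze the denominator. Under $\neg h_{\vars Z}$ the experimental marginal likelihood uses an uninformative Dirichlet prior on $\theta_e = \theta_{y_x\mid\vars z}$ independently of $D_o^*$, so by a Laplace / BIC expansion
\begin{equation*}
\log P(D_e\mid \neg h_{\vars Z}, D_o^*) = \log P(D_e\mid \hat\theta_e) - \tfrac{k}{2}\log N_e + O(1),
\end{equation*}
where $k$ is the dimension of $\theta_e$ and $\hat\theta_e$ its MLE from $D_e$. Next I would treat the numerator by noting that under $h_{\vars Z}$ the prior $f(\theta_e\mid D_o^*, h_{\vars Z})$ equals the observational posterior $f(\theta_o^*\mid D_o^*)$, which (by Dirichlet--Multinomial conjugacy and posterior consistency, using strict positivity of $P^*$) concentrates at the true observational parameter $\theta_o^{*,\mathrm{true}}$ at rate $N_o^{-1/2}$. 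A Laplace expansion around this concentrated prior yields
\begin{equation*}
\log P(D_e\mid h_{\vars Z}, D_o^*) = \sum_{\text{cases in }D_e} \log P(y\mid x,\vars z; \theta_o^{*,\mathrm{true}}) + O(\log N).
\end{equation*}

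The decisive case split then drives the two directions. If $\vars Z$ is an sABS, Theorem~\ref{the:sabs} gives $\theta_o^{*,\mathrm{true}} = \theta_e^{\mathrm{true}}$, so by the law of large numbers the numerator equals $\log P(D_e\mid \theta_e^{\mathrm{true}}) + O(\log N)$, which differs from $\log P(D_e\mid \hat\theta_e)$ by only $O_p(1)$ (a $\chi^2$-type Wilks term). Thus
\begin{equation*}
\log r_N = \tfrac{k}{2}\log N_e + O_p(\log N) \;\longrightarrow\; +\infty,
\end{equation*}
giving $P(h_{\vars Z}\mid D_e, D_o^*) \to 1$. Conversely, if $\vars Z$ is not sABS, Assumption~\ref{ass:sfaith} (sABS-faithfulness) guarantees $\theta_o^{*,\mathrm{true}} \neq \theta_e^{\mathrm{true}}$, so by the law of large numbers
\begin{equation*}
\tfrac{1}{N_e}\bigl[\log P(D_e\mid \hat\theta_e) - \log P(D_e\mid \theta_o^{*,\mathrm{true}})\bigr] \;\longrightarrow\; D_{\mathrm{KL}}\bigl(P_e^{\mathrm{true}} \,\|\, P(\cdot;\theta_o^{*,\mathrm{true}})\bigr) > 0.
\end{equation*}
Hence $\log r_N = -N_e\cdot D_{\mathrm{KL}} + \tfrac{k}{2}\log N_e + O_p(\log N) \to -\infty$, so $P(h_{\vars Z}\mid D_e, D_o^*) \to 0$.

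The main obstacle I expect is making the Laplace expansion of the numerator rigorous, because its prior is itself a random Dirichlet posterior whose concentration rate depends on $N_o$; I would handle this by coupling $N_o = N_e = N$ (as in Assumptions A) and showing that the $O(\log N)$ error terms absorb the posterior variance uniformly on compact subsets of the parameter simplex (using strict positivity of $P^*$ to avoid boundary issues). A secondary care point is that sABS-faithfulness is needed only for the ``only if'' direction: without it, cancellations between selection and confounding could make $\theta_o^{*,\mathrm{true}} = \theta_e^{\mathrm{true}}$ even when $\vars Z$ is not sABS, in which case the method would still return $h_{\vars Z}$, but the theorem's ``iff'' would fail.
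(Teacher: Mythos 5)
Your high-level route is the same as the paper's (reduce Eq.~\ref{eq:hzt} under the uniform prior to the likelihood ratio $r_N$, expand both marginal likelihoods asymptotically, and split cases via Theorem~\ref{the:sabs} and Assumption~\ref{ass:sfaith}; the paper does the expansion through the closed-form BD score in Lemma~\ref{lem:logPZ_CMB}, stated in terms of empirical conditional entropies, together with Lemma~\ref{lem:Hoe}). However, your error bounds fail exactly where the theorem is decided. In the sABS direction you arrive at $\log r_N = \tfrac{k}{2}\log N_e + O_p(\log N)$, but an $O_p(\log N)$ error is of the \emph{same order} as the leading term, so divergence to $+\infty$ does not follow from what you have written. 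The fix is the sharper fact your concentrated-prior heuristic discards: under $h_{\vars Z}$ the prior is a conjugate posterior with effective sample size $N_o$, so the Occam penalty in the numerator is $\tfrac{q(r-1)}{2}\left[\log(N_o+N_e)-\log N_o\right] = \tfrac{q(r-1)}{2}\log 2 = O(1)$ when $N_o=N_e$ (this is precisely what Lemma~\ref{lem:logPZ_CMB} records), while the denominator pays the full $\tfrac{q(r-1)}{2}\log N_e$. One must then also check that the data-fit difference $N\bigl[2\hat H_{o^*\!,e}(Y|X,\vars Z)-\hat H_{o^*}(Y|X,\vars Z)-\hat H_{e}(Y|X,\vars Z)\bigr]$ is $o(\log N)$ under $h_{\vars Z}$; it is a likelihood-ratio (Wilks) statistic and hence $O_p(1)$ --- the very $\chi^2$ argument you invoke, but which you apply only on the denominator side.

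In the other direction your Laplace step is false as stated: treating $f(\theta_e\mid D_o^*, h_{\vars Z})$ as a point mass at $\theta_o^{*,\mathrm{true}}$ ignores its exponential tails, which contribute at rate $\Theta(N)$. The exact Dirichlet computation gives $\log P(D_e\mid h_{\vars Z},D_o^*)\approx -(N_o+N_e)H_{o^*\!,e}+N_oH_{o^*}$: the integral is dominated by a compromise parameter (the pooled empirical distribution), so the correct divergence rate is the Jensen--Shannon-type quantity $-H_e+2H_{o^*\!,e}-H_{o^*}$ of Lemma~\ref{lem:Hoe}, not $D_{\mathrm{KL}}\bigl(P_e^{\mathrm{true}}\,\|\,P(\cdot;\theta_o^{*,\mathrm{true}})\bigr)$. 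For instance, for a binary cell with $P_{o^*}(Y{=}1)=0.5$ and $P_e(Y{=}1)=0.9$, the true rate is about $0.20$ nats per sample while your claimed KL rate is about $0.37$. Your sign conclusion survives, because both quantities are strictly positive exactly when the two conditionals differ (which Assumption~\ref{ass:sfaith} ties to failure of sABS), but the intermediate equality with an $O(\log N)$ error would not pass review. Your closing observation that sABS-faithfulness is needed only for the ``only if'' direction is correct and matches the paper.
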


\begin{theorem}\label{the:proof_superset}Assume Assumptions \ref{ass:main}, \ref{ass:sfaith}  hold, $X, Y, \vars O$ are discrete, and $ N_o$ and $N_e$ increase equally without limit ($N:=N_e=N_o$ in the limit). Let \vars Z, \vars Z' be s-admissible backdoor sets, $\vars Z\subset \vars Z'$, and \( (Y \perp\!\!\!\perp \vars Z'\setminus Z \mid \vars  Z)_{D_{\overline{X}}} \). Then, 
$$\lim_{N \rightarrow \infty}P(D_e|h_{\vars Z}, D_o^*)> \lim_{N \rightarrow \infty}P(D_e|h_{\vars Z'}, D_o^*)$$
\end{theorem}
Proofs of Theorems \ref{the:proof_MB},\ref{the:proof_equality}, \ref{the:proof_superset} are in the Supplementary.

%Our method looks for the  maximal subset of $\vars O$ that is, with high probability, an sABS. If no such subset exists, our method returns $NaN$. The proposed method is illustrated in Algorithm \ref{algo:FindsABS}.

\section{Related Work}\label{sec:related_work}
To our knowledge, our method is the first to (i) compute probabilities that a set is sABS and (ii) treat transportability as a feature selection problem. Our work has connections to several areas, outlined below.

\textbf{Identifiability/Adjustment} Several works focus on identifying post-intervention probabilities from observational data in the same domain, using graph knowledge or just observational data. (e.g. \citet{perkovic2017complete, smucler2020efficient, shpitser2006a,jaber2019causal, entner13a}. These works assume knowing the causal graph, or learning a set of graphs consistent with observational data. \cite{pmlr-v206-triantafillou23a} use observational and experimental data to compute the probability that a  set is a backdoor set. However, they do not allow for different domains, and exhaustively look through all possible subsets of the observed covariates.\\
\textbf{Transportability} The area of transportability focuses on generalizing causal knowledge from one or more source domains to a target domain. This problem was formally introduced by \cite{pearl_transportability_2011}, who defined selection diagrams and provided graphical conditions for transportability, such as s-admissibility, when the selection diagram is known. \cite{bareinboim_transportability_2012} provide a complete algorithm for computing  transport formulae, and  \citep{bareinboim_meta-transportability_2013} show that do-calculus is complete for transportability. These works form the theoretical framework for transferring causal knowledge across domains, \emph{when the selection diagram is known.}\\
\textbf{Combining data for effect estimation.} There is also growing body of work for combining observational and experimental data in the field of potential outcomes, focusing on using observational data to improve the RCT-based estimation. \citep{kallus2018removing, rosenman2020combining, cheng2021adaptive, wu22a, yang2023,  cheng2023double, parikh2023double}. However, these approaches rely on either unconfoundedness or s-admissibility (or both), and always return an estimator. In contrast, our method returns no estimator, if there is no reason to believe that an unbiased estimator exists. Moreover, all of these methods typically condition on the full set of covariates, ignoring the fact that s-admissibility/ignorability may hold only for a subset of the full set. In contrast, \texttt{FindsaBS} looks for the subset that leads to the most efficient unbiased prediction of the post-intervention outcome in the target domain.\\
\textbf{Statistical tests for unconfoundedness and transportability.}
Recent work has focused on testing the assumptions of conditional ignorability and s-admissibility using experimental and observational data. \cite{GaoYang2023pretest}, \cite{yang2023}, \cite{parikh2023double}, \cite{de-bartolomeis2024a}, develop frequentist tests for comparing estimates from RCTs and observational studies, but focus on average effects.  \cite{hussain_falsification_2023} proposes  a falsification test for conditional ignorability and transportability  based on conditional moment restrictions, and identifies covariate regions responsible for the violations. This approach has been extended to right-censored outcomes \citep{demirel_benchmarking_2024}. \cite{de-bartolomeis2024b} use observational and experimental data to identify subgroup-specific bias, allowing for a user-set  maximum bias within subgroups. If the conditional effects for some subgroup differ beyond this tolerance, the method rejects the null hypothesis, and determines that covariates are not an sABS. While these approaches typically operate on the full covariate set, they can be used to test whether any particular subset is an sABS. However, incorporating them into a search-based algorithm such as \texttt{FindsABS} is not straightforward, since p-values cannot be directly compared across different subsets (our method instead returns probabilities). We also point out that all methods mentioned here also implicitly assume sABS-faithfulness: if the observational and experimental estimators are not significantly different, they are assumed valid for the target population. In the experiments, we compare against \citet{de-bartolomeis2024b} in terms of Type I and Type II error. \citet{hussain_falsification_2023} were not evaluated, as no public implementation was available at the time of writing.\\
\textbf{Causal structure learning.} When the graph is unknown, one approach is to use causal discovery methods to find the causal structure, and then use graphical criteria to determine if a set is sABS. \citep{hyttinen2014constraint, andrews2020, triantafillou2015, mooij2019, hyttinen2015calculus} combine observational and experimental data, possibly from multiple domains, to learn the causal graph or answer queries for specific causal effects. However, we note that selection diagrams used in this work cannot be identified using data, as the selection variables cannot be distinguished. Moreover, these methods focus on answering if a $\vars Z$-specific causal effect is identifiable, but cannot select among different sets, while we select the set that maximizes the marginal likelihood of the experimental data. Developing constraint-based methods that also select an optimal transportable $\vars Z$-specific effect is interesting future work, and could work synergistically to our approach. In the supplementary, we compare against constraint-based causal discovery using the method in \cite{andrews2020}.

\section{Experimental Evaluation}\label{sec:experiments}

We evaluated Algorithms~\ref{algo:probsabs} and~\ref{algo:FindsABS} on both simulated and semi-synthetic data. Our evaluation covers two aspects: (i) identification of s-admissible backdoor sets, and (ii) causal effect estimation in the target domain. Code is available anonymously on \href{https://anonymous.4open.science/r/Transportability_without_Graphs-9FB3/}{GitHub}.
\vskip 1pt
\textbf{Identifying s-admissible backdoor sets}\vskip .2pt
\textbf{Simulated Data.}  
We first simulate data from the ground-truth graph in Fig.~\ref{fig:motivated_example}a, where each variable is modeled as a logistic regression of its parents. For each simulation, coefficients and intercepts are drawn uniformly from $[-2.5,-0.5]\cup[0.5,2.5]$ (binary variables and intercepts) or $[-1,-0.2]\cup[0.2,1]$ (continuous variables). We generate $N_o^* = 5000$ observational samples from the target distribution ($D_o^*$) and vary the number of experimental samples as $N_e \in \{50,100,300\}$. We repeat the process 100 times.

\textbf{\texttt{ProbsABS}}. We compute $P(h_{\mathbf{Z}}| D_e,D_o^*)$ for $\mathbf{Z}\in\{\emptyset,\{Z\},\{W\},\{Z,W\}\}$, and compute AUCs. 
% We denote the $j$-th samples of the observational and experimental parameters by $(\theta_o^*)^j$ and $\theta_e^j$, respectively. For logistic regression, we specify independent Cauchy priors, following the weakly informative default prior distribution recommended by \citet{gelman_weakly_2008}(More details in Supplementary)
In the ground-truth graph, only $\{Z,W\}$ is an sABS. Figure~\ref{fig:AUC_TPR_FPR_Compare_random} shows that \texttt{ProbsABS} successfully recovers this set across sample sizes.

\textbf{\citet{de-bartolomeis2024b}}.  
We compare against the public implementation of \citet{de-bartolomeis2024b}, which we refer to as \texttt{Bias-test}. This method tests whether experimental CATEs fall within $\tau_{os}\pm\delta$, for a user-specified tolerance $\delta$, where $\tau_{os}$ denotes the CATE estimated from the observational study. We follow the authors and use a constant $\delta$ for all subgroups, with $\delta\in\{0.01,0.03,0.05,0.1\}$. We note that our method does not involve any user-defined tolerance. Moreover, the null hypotheses are not directly comparable: \texttt{Bias-test} compares conditional average treatment effects (CATEs), whereas our method compares conditional distributions. Nonetheless, smaller values of $\delta$ correspond more closely to our hypothesis $h_{\mathbf{Z}}$. Since the public implementation of \texttt{Bias-test} does not allow $\delta=0$, we include values close to zero. A feature set is classified as an sABS if the null is \emph{not} rejected. Unlike \texttt{ProbsABS}, \texttt{Bias-test} produces a binary decision so we only report true and false positive rates. Fig.~\ref{fig:AUC_TPR_FPR_Compare_random} shows that while \texttt{ProbsABS} achieves high accuracy, \texttt{Bias-test} suffers from frequent false positives.

\begin{figure*}[h!]
    \centering
    \begin{tabular}{ccc}
        \includegraphics[width =0.3\textwidth]{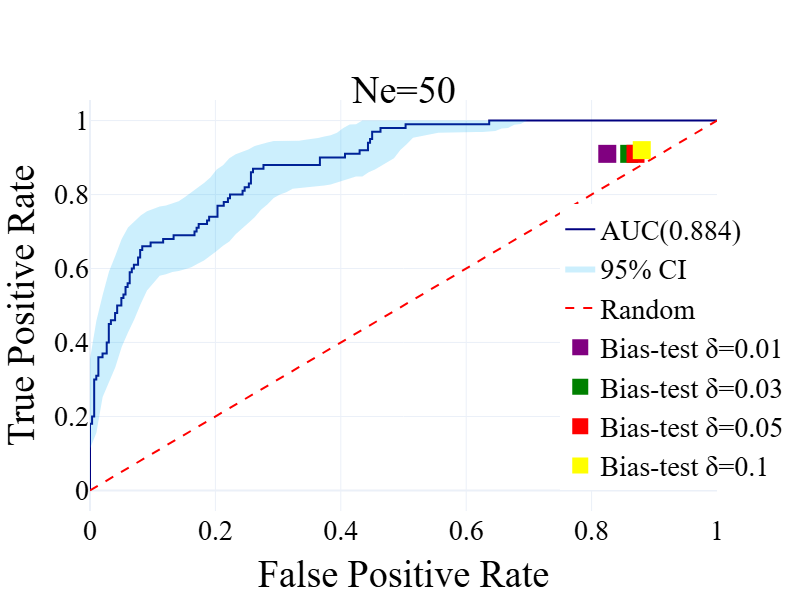}&
        \includegraphics[width =0.3\textwidth]{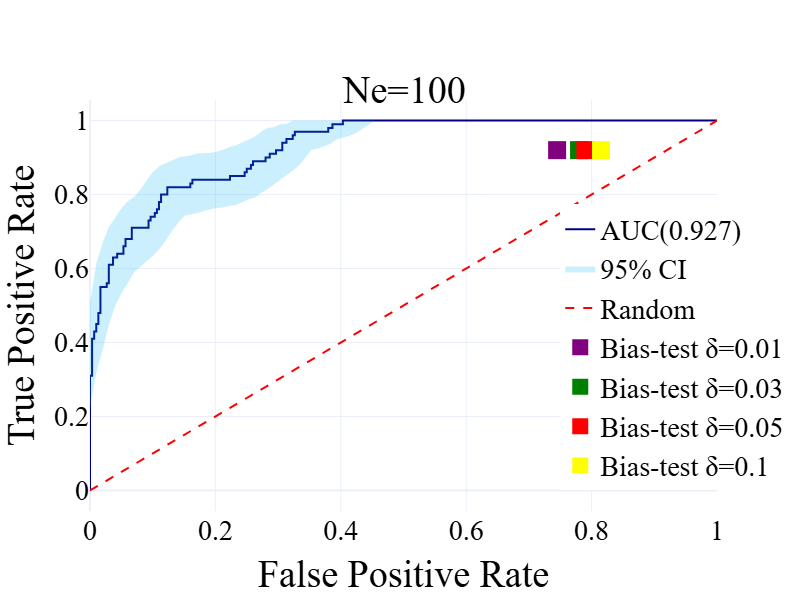}&
        \includegraphics[width =0.3\textwidth]{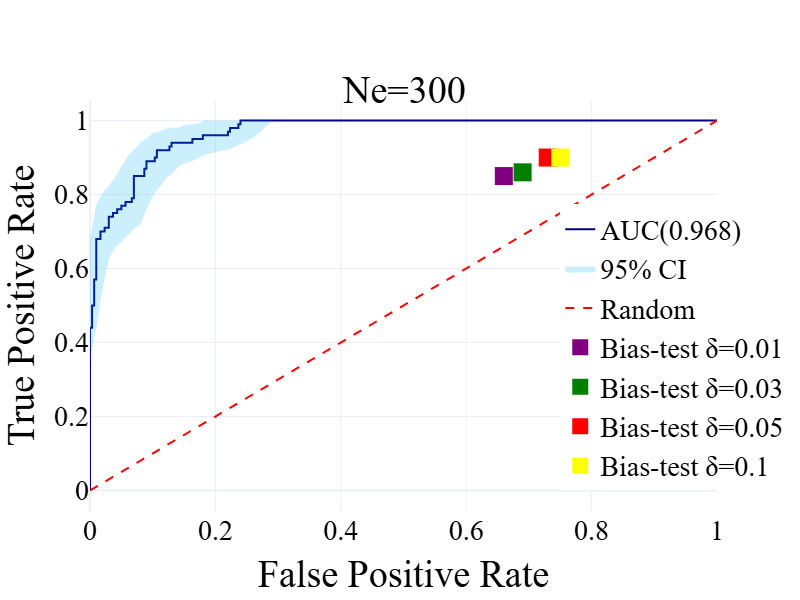}\\
    \end{tabular}
    \caption{\label{fig:AUC_TPR_FPR_Compare_random} \textbf{Areas under the ROC curve for classifying a set as sABS}. \texttt{ProbsABS} correctly classifies s-admissible backdoor sets, improving its performance as the experimental sample size increases. \texttt{Bias-test} suffers from high false positive rate.}
\end{figure*}

\textbf{Semi-synthetic Data (Hillstrom’s Email).}  
We next follow \citet{de-bartolomeis2024b} and construct semi-synthetic data based on the MineThatData Email dataset \citep{Hillstrom2008}, a randomized marketing experiment with $\sim$64,000 customers. The data include Treatment (receiving at least one email), Outcome (post-campaign spending),  and 13 covariates. We keep 20\% of the dataset as $D_e$ and use the remaining 80\% to generate a biased $D_o^*$ by (i) adding a constant shift of $30$ to treated units, and (ii) injecting bias $\delta^\star=60$ into selected subgroups. We consider:  
- \emph{Scenario 1 (single subgroup)}: a subgroup defined either by $\texttt{channel}=1 \,\&\, T=1$ (mode 1; 29.2\% of population) or by $\texttt{newbie}=1, \texttt{channel}=1, T=1$ (mode 2; 14.8\%).  
- \emph{Scenario 2 (multiple subgroups)}: 12 subgroups defined by \texttt{newbie}, \texttt{mens}, and \texttt{channel}, with varying biases up to $\delta^\star=60$.  

\begin{figure}[h!]
    \centering

    % -------------------------
    % Scenario 1: Mode 1 & Mode 2 side by side
    % -------------------------
    \begin{subfigure}{0.4\linewidth}
        \centering
        \caption{\textbf{Scenario 1, Mode 1}}
        \label{fig:scenario1_mode1}
        \resizebox{\linewidth}{!}{%
        \begin{tabular}{l c c c c c c}
        \toprule
         & \texttt{FindsABS} & \multicolumn{4}{c}{\texttt{Bias-test}} \\
        \cmidrule(lr){2-2} \cmidrule(lr){3-7}
        \textbf{$N_e$} & $t=0.5$ & $\delta=0.01$ & $\delta=2$ & $\delta=40$ & $\delta=58$ & $\delta=70$ \\
        \midrule
        50   & \textbf{20/20} & \textbf{20/20} & 14/20  & 3/20  & 1/20  & 1/20  \\
        100  & \textbf{20/20} & \textbf{20/20} & \textbf{20/20} & 12/20 & 8/20  & 4/20  \\
        300  & \textbf{20/20} & \textbf{20/20} & \textbf{20/20} & 17/20 & 14/20 & 11/20 \\
        1000 & \textbf{20/20} & \textbf{20/20} & \textbf{20/20} & \textbf{20/20} & 14/20 & 10/20 \\
        \bottomrule
        \end{tabular}%
        }
    \end{subfigure}
    \hspace{0.1\linewidth} % small horizontal space
    \begin{subfigure}{0.4\linewidth}
        \centering
        \caption{\textbf{Scenario 1, Mode 2}}
        \label{fig:scenario1_mode2}
        \resizebox{\linewidth}{!}{%
        \begin{tabular}{l c c c c c c}
        \toprule
         & \texttt{FindsABS} & \multicolumn{4}{c}{\texttt{Bias-test}} \\
        \cmidrule(lr){2-2} \cmidrule(lr){3-7}
        \textbf{$N_e$} & $t=0.5$ & $\delta=0.01$ & $\delta=2$ & $\delta=40$ & $\delta=58$ & $\delta=70$ \\
        \midrule
        50   & 5/20 & \textbf{9/20} & 0/20  & 0/20  & 0/20  & 0/20  \\
        100  & 5/20 & 18/20 & \textbf{20/20}  & 3/20  & 0/20  & 0/20  \\
        300  & 8/20 & \textbf{20/20} & \textbf{20/20} & 12/20 & 4/20  & 2/20  \\
        1000 & 10/20 & \textbf{20/20} & \textbf{20/20} & \textbf{20/20} & \textbf{20/20} & 19/20 \\
        \bottomrule
        \end{tabular}%
        }
    \end{subfigure}

    \vspace{0.5em} % vertical space

    % -------------------------
    % Scenario 2: full-width below
    % -------------------------
    \begin{subfigure}{\linewidth}
        \centering
        \caption{\textbf{Scenario 2}}
        \label{fig:scenario2}
        \resizebox{0.4\linewidth}{!}{% same width as above
        \begin{tabular}{l c c c c c c}
        \toprule
         & \texttt{FindsABS} & \multicolumn{4}{c}{\texttt{Bias-test}} \\
        \cmidrule(lr){2-2} \cmidrule(lr){3-7}
        \textbf{$N_e$} & $t=0.5$ & $\delta=0.01$ & $\delta=2$ & $\delta=40$ & $\delta=58$ & $\delta=70$ \\
        \midrule
        50   & \textbf{20/20} & 18/20 & 0/20 & 1/20  & 0/20  & 1/20  \\
        100  & \textbf{20/20}& 18/20  & 5/20 & 4/20  & 2/20  & 1/20  \\
        300  & \textbf{20/20} & \textbf{20/20} & \textbf{20/20} & 12/20 & 12/20 & 9/20  \\
        1000 & \textbf{20/20} & \textbf{20/20} & \textbf{20/20} & 16/20 & 12/20 & 12/20 \\
        \bottomrule
        \end{tabular}%
        }
    \end{subfigure}

    \caption{\textbf{Rejection rates (out of 20 runs) for the hypothesis that the full set is an sABS in the semi-synthetic data.} 
    The full set is not an sABS. \texttt{ProbsABS} correctly rejects the hypothesis when the affected subgroups are large, even for small experimental sample sizes. 
    \texttt{Bias-test} suffers in small sample sizes, but is better in identifying small biased subgroups.}
    \label{fig:all_scenarios_horizontal}
\end{figure}

In all cases, the outcome mechanism differs across domains, so no true sABS exists. We apply \texttt{ProbsABS} and \texttt{Bias-test} to the full covariate set. With the full samples of $D_o^*$ and $D_e$, both methods correctly reject sABS; Tables~\ref{fig:scenario1_mode1}–\ref{fig:scenario2} show results for smaller sample sizes. For large biased subgroups (Scenario 1, mode 1; Scenario 2), \texttt{ProbsABS} rejects more reliably at small $N_e$, while \texttt{Bias-test} catches up as $N_e$ increases. For small biased subgroups (Scenario 1, mode 2), both methods struggle at low $N_e$, but \texttt{Bias-test} performs better under stricter tolerances, consistent with its design for detecting small-group biases. Average time for a single run was 70 seconds for \texttt{FindsABS} and 182 seconds for \texttt{Bias-test}.

%Tables~\ref{fig:scenario1_mode1}–\ref{fig:scenario2} show rejection frequencies across $N_e$. For sufficiently large biased subgroups (Scenario 1, mode 1; Scenario 2), \texttt{ProbsABS} rejects more reliably than \texttt{Bias-test} at small $N_e$, with \texttt{Bias-test} catching up as $N_e$ increases. For small biased subgroups (Scenario 1, mode 2), both methods are weak at small $N_e$, but \texttt{Bias-test} performs better under stricter tolerances. This is expected, as the method is designed to detect biases in small subgroups.
\vskip 1pt
\textbf{Causal effect estimation}\vskip .2pt
Algorithm~\ref{algo:FindsABS} selects the most likely sABS $\mathbf{Z}^*$ and returns an estimator of $P(Y|do(X),\mathbf{Z}^*, D_e,D_o^*)$ if $P(h_\vars Z^*|D_e, D_o^*)>t$ (with $t=0.5$), and \textsc{NaN} otherwise. To our knowledge, no existing method does this; existing approaches always return an estimator based on a pre-specified covariate set.

%We compare against three baselines: (i) $D_e$-only: $P(Y|do(X), \mathbf{O}, D_e)$ (assumes transportability), (ii) $D_o^*$-only: $P(Y|do(X), \mathbf{O}, D_o^*)$ (assumes unconfoundedness), and (iii)$P(Y|do(X), \mathbf{O},  D_e, D_o^*)$ (assumes both). These represent the best-case estimators for any method that makes the corresponding assumptions and does not perform feature selection. 
We compare against the following  baselines: (i) $D_e$-only: $P(Y|do(X),\mathbf{O},D_e)$ (transportability), (ii)$D_o^*$-only: $P(Y|do(X),\mathbf{O},D_o^*)$ (unconfoundedness), and (iii)$D_e{+}D_o^*$: $P(Y| do(X),\mathbf{O},D_e,D_o^*)$ (both). These represent the best-case estimators for methods that rely on the corresponding assumptions without performing feature selection.

\textbf{Simulated Data.}  
We simulate from the graphs in Fig.~\ref{fig:motivated_example}a and~\ref{fig:motivated_example}d. In the first case, the only sABS is $\{Z,W\}$; in the second, the sABS are $\{W\}$ and $\emptyset$. Performance is evaluated by predicting $Y$ on an independent target experimental test set ($N^*_{\text{test}}=1000$) and reporting cross-entropy.

\begin{figure}[th!]
\centering
\begin{subfigure}{0.4\columnwidth}
        \centering
        \includegraphics[width=\linewidth]{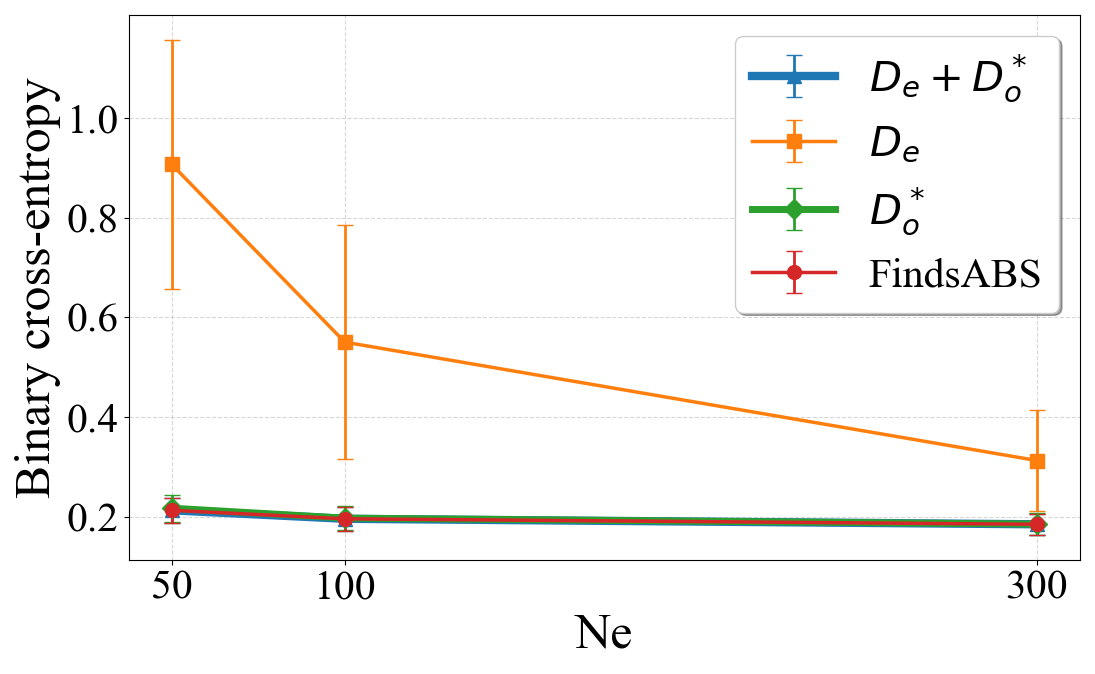}
        \caption{Data simulated from Fig. \ref{fig:motivated_example}a}
    \end{subfigure}
\hspace{0.1\linewidth} % small horizontal space
    % Subfigure 2
    \begin{subfigure}{0.4\columnwidth}
        \centering
        \includegraphics[width=\linewidth]{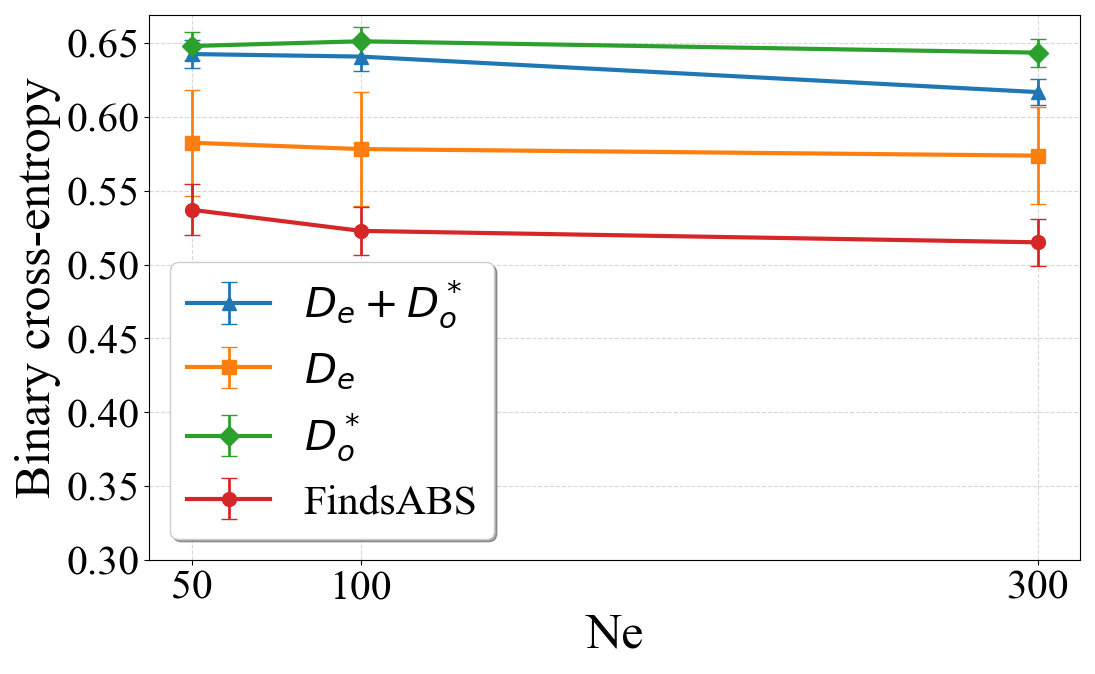}
        \caption{Data simulated from Fig. \ref{fig:motivated_example}d}
    \end{subfigure}
\caption{\textbf{ Binary cross-entropy for predicting the post-intervention outcome in the target domain.} (lower is better) (a)
$\{Z,  W\}$ is an sABS for $X, Y$, so all estimators based on observed covariates are unbiased. \texttt{FindsABS} performs on par with the  $D_o^*$ and $D_e+D_o^*$ -based estimators, and all outperform the  $D_e$ -based estimator, due to its smaller sample size. (b) Only $\{\emptyset\}$ and $\{W\}$ are sABS's.\texttt{FindsABS} identifies that $\{W\}$ is the best sABS and outperforms all estimators that condition on the full set.}
\label{fig:Case_1_performance}
\end{figure}

Results are shown in Fig.~\ref{fig:Case_1_performance}. For Fig.~\ref{fig:Case_1_performance}a, \texttt{FindsABS} matches the performance of $D_o^*$ and $D_e{+}D_o^*$ estimators, and all outperform $D_e$-only. For 
Fig.~\ref{fig:Case_1_performance}b, where $\{Z,W\}$ is not an sABS, \texttt{FindsABS} outperforms all baselines that condition on all variables. Additional experiments with all-discrete variables, larger covariate sets, and comparisons against \texttt{FCItiers} are provided in the Supplementary.

\section{Conclusions}\label{sec:conclusions}
We introduced a Bayesian, data-driven method to test whether conditional causal effects are transportable across domains without assuming a known causal graph. Our approach finds the most likely sABS to form a target-domain estimator. Across simulated and semi-synthetic studies, \texttt{FindsABS} (i) correctly identifies s-admissible backdoor sets and (ii) improves (or matches) target-based estimation when sABS exist, and returns \textsc{NaN} otherwise. To our knowledge, this is the first method to quantify sABS probabilities directly from data. Future work includes handling sample selection bias (common in clinical trials with strict inclusion/exclusion criteria) and applications to real-world data.

\bibliographystyle{plainnat}  
\bibliography{references}

\clearpage
\begin{center}
\textbf{\LARGE Supplementary Materials for:}\\[0.5em]
\textbf{\Large{Transportability without Graphs: A Bayesian Approach to Identifying s-Admissible Backdoor Set}}
\end{center}
\vspace{1em}

% ======= SUPPLEMENT BEGINS =======
% Reset counters and add "S" prefix to numbering
\setcounter{equation}{0}
\setcounter{figure}{0}
\setcounter{table}{0}
\renewcommand{\theequation}{S\arabic{equation}}

\section{Proof of Proposition \ref{prop:sabs}}
In this section, we provide the proof of Proposition \ref{prop:sabs} from the main paper. Throughout the document, unless otherwise mentioned, we assume Assumptions \ref{ass:main}.

\begin{repeatedprop}[\ref{prop:sabs}]
  Let \( D \) be the selection diagram characterizing \( \Pi \) and \( \Pi^* \), and let \(\vars S \) be the set of selection variables in \( D \). If $\vars Z \subseteq \vars V$ is an s-admissible backdoor set for $(X, Y)$ relative to \graph D, then \begin{equation}\label{eq:sabs}P(Y|do(X), \vars Z, \vars s) = P(Y|X,\vars Z, \vars s^*)
  \end{equation}
\end{repeatedprop}
\begin{proof}
\vars Z is s-admissible: 
\begin{equation}\label{eq:sadm}
    P(Y | do(X), \vars{Z}, \vars{s}) 
    = P(Y | do(X), \vars{Z}, \vars{s}^*) 
\end{equation}
\vars Z is a backdoor set in \graph D. $\graph G^*$ and \graph D only differ in the outgoing edges from \vars S to $\vars V \cup X\cup Y$. These edges cannot participate in backdoor paths in \graph D, so \vars Z blocks all backdoor paths in $\graph G^*$. Hence, $ P^*(Y | do(X), \vars{Z})  = P^*(Y | X, \vars{Z})$ so by definition:

\begin{equation}
    \label{eq:bdset}
P(Y | do(X), \vars{Z}, \vars{s}^*)  = P(Y | X, \vars{Z}, \vars{s}^*)
\end{equation}
\end{proof}

\section{Assumptions \ref{ass:sfaith} (sABS-faithfulness)}\label{sec:sabs_faitlfulness}

Assumptions \ref{ass:sfaith} states that (a) $P(X, Y, \vars O, \vars S)$ is faithful to \graph D, and (b)
\begin{equation}\label{eq:sfaith_add}
    \exists\;x, y, \vars z\textnormal{ s.t. } \frac{P(y|do(x), \vars z, \vars s)-P(y|do(x), \vars z, \vars s^*)}{ P(y|x, \vars z, \vars s^*)-P(y|do(x), \vars z, \vars s^*)}\neq 1
\end{equation}

Notice that based on faithfulness, if a set \vars Z is not s-admissible but is a backdoor set, then Eq. \ref{eq:bdset} holds, but 
\ref{eq:sadm} does not hold, Hence, Eq.\ref{eq:sabs} does not hold. Similarly, if \vars Z is an s-admissible but not a backdoor set, faithfulness implies Eq.\ref{eq:sabs} does not hold. However, if \vars Z is neither s-admissible nor a backdoor set, then it could be the case that Eq.\ref{eq:sabs} holds. This would happen if selection in the source domain has the exact same effect as confounding in the target domain, through some accidental parameter choices, hence, if Eq. \ref{eq:sfaith_add} holds.  The following corollary states that, under Ass. \ref{ass:sfaith}, Eq. \ref{eq:sabs} \emph{only holds} for sABS's.

\begin{corollary}
Under Assumptions \ref{ass:main}, \ref{ass:sfaith}, $P(Y|X, \vars Z, \vars s^*)=P(Y|do(X), \vars Z, \vars s)$ if and only if \vars Z is an s-admissible backdoor set for $(X, Y)$ with respect to \graph D. 
\end{corollary}

\begin{proof}
($\Rightarrow$) If \vars Z is an sABS, then  $P(Y|X, \vars Z, \vars s^*)=P(Y|do(X), \vars Z, \vars s)$ based on Proposition 
\ref{prop:sabs}.\\
($\Leftarrow$) If \vars Z is not an sABS, then \\
\textbf {Case 1:} If \vars Z is a backdoor set, but  not an s-admissible set: By faithfhulness if \vars Z is not an s-admissible set, $P(y|do(x), \vars z, \vars s^*) \neq P(y|do(x),\vars z, \vars s)$ for some $x, y, \vars z$. However since \vars Z is a backdoor set, $P(Y|do(X), \vars Z, \vars s^*) = P(Y|X,\vars Z, \vars s^*)$. Hence, $P(y|do(x), \vars z, \vars s)\neq P(y|x, \vars z, \vars s^*)$ for some $x, y, \vars z$ and Eq. \ref{eq:sabs} does not hold.

\textbf {Case 2: }Similarly, if \vars Z is not a backdoor set, but is an s-admissible set: $P(y|do(x), \vars z, \vars s^*) \neq P(y|x,\vars z, \vars s^*)$ for some $x, y, \vars z$, but $P(Y|do(X), \vars Z, \vars s^*) = P(Y|do(X),\vars Z, \vars s)$, so Eq. \ref{eq:sabs} does not hold.

\textbf {Case 3: }If \vars Z is not a backdoor set, and it is also not s-admissible, both inequalities hold, i.e.,
\[
\begin{aligned}
&\exists x, y, \vars z \quad P(y \mid do(x), \vars z, \vars s^*) \neq P(y \mid x, \vars z, \vars s^*), \\
&\exists x, y, \vars z \quad P(y \mid do(x), \vars z, \vars s^*) \neq P(y \mid do(x), \vars z, \vars s).
\end{aligned}
\]

Eq. \ref{eq:sabs} only holds if 
$$P(y|do(x), \vars z, \vars s^*)- P(y|do(x),\vars z, \vars s) = P(y|do(x), \vars z, \vars s^*) - P(y|x,\vars z, \vars s^*)\forall x, y, \vars Z,$$ hence, if $\nexists\;x, y, \vars z\textnormal{ s.t. } \frac{P(y|do(x), \vars z, \vars s)-P(y|do(x), \vars z, \vars s^*)}{ P(y|x, \vars z, \vars s^*)-P(y|do(x), \vars z, \vars s^*)}\neq 1$ which by Assumption\ref{ass:sfaith}(b) does not hold.

So, under Assumptions \ref{ass:sfaith}, if \vars Z is not an sABS, Eq. \ref{eq:sabs} does not hold.
\end{proof}

\section{Ablation study for the effect of $P(h_{\vars Z}|D_o^*)$.}\label{sec:abliationst}
One approach for computing the probability $P(h_{\vars Z}|D_o^*)$ is by reasoning on the space of possible causal graphs, which has been employed for testing conditional independencies \citep{claassen2012bayesian} and for ranking adjustment sets \citep{triantafillou_causal_2021}. However, in our case, the observational data do not carry enough information for the hypothesis \hz. Even if by using independence constraints or causal discovery methods we could uniquely identify that a set is a backdoor set in $\Pi^*$, observational data from a single domain do not carry enough information for the selection diagram, and in general cannot determine whether $\vars Z$ is s-admissible. 

\begin{figure}[ht!]
\centering
    \includegraphics[width =0.6\columnwidth]{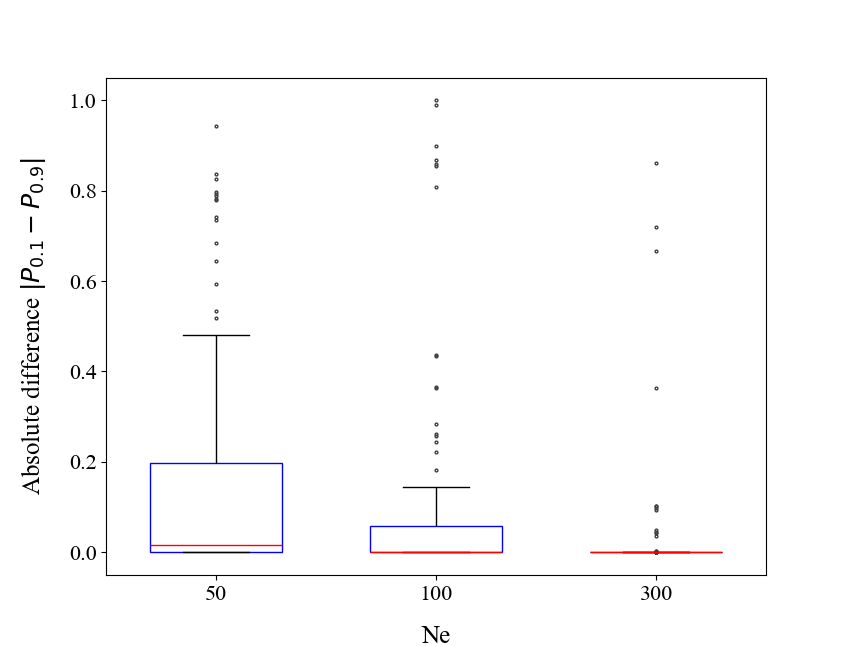}
    \caption{\label{fig:compare_priors}Effect of $P(h_{\z}|D_o^*)$ on Eq. \ref{eq:hzt}. $P_{0.1}= P(h_{\z}|D_e, D_o^*)$ computed using Eq. \ref{eq:hzt} with $P(h_{\z}|D_o^*)=0.1$, and $P_{0.9}= P(h_{\z}|D_e, D_o^*)$ computed using Eq. \ref{eq:hzt} with $P(h_{\z} |D_o^*)=0.9$. We used $N_o =5000$ and $N_e=50, 100, 300$ and we plot the distribution of $P_{0.1}-P_{0.9}$.}
\end{figure}

In practice, $P(h_{\z}|D_o^*)$ does not affect the behavior of the method, since its impact remains minimal even for small sample sizes. We illustrate this with an example: We use the structure in Fig.~\ref{fig:motivated_example}(a) and the settings from Section~\ref{sec:experiments} of the main paper, treating $\{Z\}$ as an observed confounder and assuming $\vars Z = \{Z, W\}$.  We then compute Eq. \ref{eq:hzt} with two different $P(h_{\z}|D_o^*)$: 0.1 and 0.9. We use $P_{0.1},P_{0.9} $ to denote Eq. \ref{eq:hzt} computed with $P(h_{\z}|D_o) = 0.1, 0.9$, respectively. Fig. \ref{fig:compare_priors} illustrates the distribution of the absolute difference $|P_{0.1}-P_{0.9}|$ over 100 random simulated parameters. The difference in  estimated  $P(h_{\z}|D_o^*, D_e)$ using  very different priors $P(h_{\z}|D_o^*)$ vanishes with increasing experimental sample size. Similar results are reported in \citet{triantafillou_causal_2021} and \citet{pmlr-v206-triantafillou23a}. For this reason, we use the uninformative $P(h_{\z}|D_o^*) = P(\neg h_{\z}|D_o^*)=0.5$ in this work.

\section{Formulae for computing Eq.\ref{eq:exp_score}}
In this section we present formulae for computing  Eq. \ref{eq:exp_score} in the main paper. Table \ref{tab:closed_form_disc} includes closed-form solutions for discrete data, and Table \ref{tab:equations_LR} shows the choice of priors and the formula for computing the likelihood for binary logistic regression models.
\begin{table}[h!]
 \caption{Closed-form solutions for Eq. \ref{eq:exp_score} in the main paper, for multinomial distributions with Dirichlet priors. Subscript $jk$ refers variable \var Y taking its $k$-th configuration, and variable set $\vars Z$ taking its $j$-th configuration. $\alpha_{jk}$ is the prior for the Dirichlet distribution. We set $\alpha_{jk}=1$ in all experiments. $N^o_{jk}, N^e_{jk}$ corresponds to counts in the data where $Y=k$ and  $\vars Z=j$ in $D_o^*$ and $D_e$, respectively. $N^o_{j}, N^e_{j}$ corresponds to counts in the data where $Z=j$. \label{tab:closed_form_disc}}
    \label{tab:equations}
\centering
    \begin{tabular}{c|c} 
    \toprule
    \\[-0.6em]
    Quantity &Analytical Expression\\[-0.6em] \\\hline
    \\[-0.6em]
    \(\displaystyle P(D_e|D_o,h_{\vars Z})\)   & \(\displaystyle \prod_{j=1}^q \frac{\Gamma(\alpha_j+N_j^o)}{\Gamma(\alpha_j+N_j^o+N_j^e)} \prod_{k=1}^r\frac{\Gamma(\alpha_{jk}+N_{jk}^o+N_{jk}^e)}{\Gamma(\alpha_{jk}+N_{jk}^o)}\)\\[-0.6em] \\ \hline
    \\[-0.6em]
      \(\displaystyle P(D_e|D_o,\neg h_{\vars Z})\)   & \(\displaystyle \prod_{j=1}^q \frac{\Gamma(\alpha_j)}{\Gamma(\alpha_j+N_j^e)} \prod_{k=1}^r\frac{\Gamma(\alpha_{jk}+N_{jk}^e)}{\Gamma(\alpha_{jk})}\) \\[-0.6em] \\
      \bottomrule
    \end{tabular}
\end{table}

\begin{table}[h!]
 \caption{Solutions based on logistic regression models for Eq.~\ref{eq:exp_score} in the main paper, considering a binary outcome $Y$, a binary treatment $X$, and mixed covariates $\vars Z=\{Z_1, \dots, Z_\textbf{k}\}$. We denote as $N_e$ the number of $D_e$. We denote as $(\theta_o^*)^j$ and $\theta_e^j$ the $j$-th sample of the observational and experimental parameters respectively. Cauchy distribution used as a weakly informative default prior distribution as proposed by \citet{gelman_weakly_2008}. The authors proposed this distribution because actual effects fall within a limited range. For instance, a typical change in an input variable would be unlikely to correspond to a change as large as 5 on the logistic scale (which would move the probability from 0.01 to 0.50 or from 0.50 to 0.99). For each sample $i$ in $D_e$, we have: $Y_i \sim \text{Bernoulli}(\pi_i)$ and we denote by $\pi_{ij}$  the probability of the i-th sample using the j-th parameter's sample. $\pi_i$ is approximated as the mean of the $\pi_{ij}$ values across all parameter samples. Assuming $N$ parameters' samples, we approximate the marginal likelihood of experimental data given the observational data as the average likelihood over all $N$ samples for $h_{\vars Z}$ and $\neg h_{\vars Z}$.}
\label{tab:equations_LR}
\centering
    \begin{tabular}{c|c} 
    \toprule
    \\[-0.6em]
    Quantity &Analytical Expression\\[-0.6em] \\\hline
    \\[-0.6em]
      \(\displaystyle (\theta_e)^j \)   & \(\displaystyle (\theta_{e0}^*)^j \sim Cauchy(0,10)\) , \(\displaystyle ((\theta_{e1}^*)^j,\dots, (\theta_{ek+1}^*)^j)\sim Cauchy(0, 2.5) \)\\[-0.6em] \\ \hline
    \\[-0.6em]
    
    \((\theta_o^*)^j \)   &  Cauchy priors for $\theta_o^*$, and then $(\theta_o^*)^j \sim f(\theta_o^*|D_o^*)$ using MCMC \\[-0.6em] \\ \hline
    \\[-0.6em]

    \(\pi_{ij} \)   & \(\frac{e^{\theta_{0}^j  + \theta_{1}^j Z_{i1} + ... + \theta_{k}^j  Z_{ik}+ \theta_{k+1}^jX_i}}{1 + e^{\theta_{0}^j  + \theta_{1}^j Z_{i1} + ... + \theta_{k}^j  Z_{ik}+ \theta_{k+1}^jX_i}}\)\\[-0.6em] \\ \hline
    \\[-0.6em]

    \(\displaystyle \graph P(D_e|(\theta)^j)\)   & \(\displaystyle \displaystyle\prod\limits_{i=1}^{N_e}  {\, \pi_{ij}^{Y_i} \, (1-\pi_{ij})^{(1-Y_i)}}\)\\[-0.6em] \\ 
      \bottomrule
    \end{tabular}
\end{table}

\section{Relaxing the assumption of \textbf{shared causal graphs}}\label{sec:Compare_no_shared_graph}

In the main paper, we define an s-admissible backdoor set (sABS) under the assumption that the populations $\Pi$ and $\Pi^*$ share the same causal graph, and Proposition~\ref{prop:sabs} is stated for these sABS sets. However, this proposition still holds when a set is s-admissible in a selection diagram $\graph D$ and a backdoor set for $(X, Y)$ in $\graph G^*$, but not in $\graph G$ (and consequently not in $\graph D$). This requires relaxing the shared-graph assumption in \cite{bareinboim_transportability_2012}’s definition of selection diagrams to permit structural differences across domains. Thus, letting \graph G and  $\graph G^*$ denote the causal graphs of the source and target domains respectively, we introduce a slightly different definition of selection diagrams and s-admissible backdoor sets:
\begin{definition}\label{ref:Definition2_relax}(Selection Diagram, relaxing the assumption of shared causal graphs)
Let \( \langle M, M^* \rangle \) be a pair of structural causal models relative to domains \( \langle \Pi, \Pi^* \rangle \), with corresponding causal diagrams \graph G and $\graph G^*$. \( \langle M, M^* \rangle \) is said to induce a selection diagram \graph D, if \graph D is constructed as follows:
\begin{enumerate}
    \item Every edge in \graph G  and every edge in $\graph G^*$ is also an edge in \graph D;
    \item D contains an extra edge $S_i \rightarrow V_i $ whenever there might exist a discrepancy $f_i \neq f_i^*$ or $P(V_i) \neq P^*(V_i)$ between $M$ and $M^*$.
\end{enumerate}
\end{definition}

Structural changes can now be represented through discrepancies between the functions $f_i$ and $f_i^*$. Definition \ref{ref:Definition2_relax} explicitly allows certain edges to be absent in either $\graph G^*$ or $\graph G$, as long as the resulting selection diagram $\graph D$ remains acyclic. After this relaxation, Proposition \ref{prop:sabs} from the main paper still holds as: (i) if \vars Z is s-admissible in \graph D: $ P(Y | do(X), \vars{Z}, \vars{s}) = P(Y | do(X), \vars{Z}, \vars{s}^*) $
and (ii) if \vars Z is a backdoor set in $\graph G^*$ : $ P(Y | do(X), \vars{Z}, \vars{s}^*)  = P(Y | X, \vars{Z}, \vars{s}^*)$ by definition:
\begin{equation*}
P(Y | do(X), \vars{Z}, \vars{s}^*)  = P(Y | X, \vars{Z}, \vars{s}^*)
\end{equation*}

By relaxing the assumption of shared causal graphs, two broader cases can be considered:

(a) when a common backdoor set \vars Z exists in both $P$ and $P^*$ populations despite the structural differences between them and is also s-admissible in \graph D (see Fig.~\ref{fig:no_shared_graph}, where $\{Z, W\}$ is a backdoor set in \graph G and $\graph G^*$ and s-admissible in \graph D.); and

(b) when a set $\vars Z$ is a backdoor set only in the target population (in $\graph G^*$), but not in the source (in $\graph G$ and hence not in the shared selection diagram $\graph D$) and is also s-admissible in \graph D (see Fig.~\ref{fig:no_shared_no_selection}, where $\{Z\}$ is a backdoor set in $\graph G^*$ but not in \graph G and is s-admissible in \graph D.).

However, allowing structural differences makes s-admissible backdoor sets non symmetric: if a set $\vars Z$ is s-admissible in $\graph D$ and backdoor in $\graph G^*$, then $\vars Z$ is a valid sABS for computing causal effects for domain $\Pi^*$ using our method. However, the same set may not be valid for computing causal effects in $\Pi$ if the data sources were reversed (i.e., we had observational data from   $\Pi$ and experimental from $\Pi^*$, and our target domain was $\Pi$). In this case, \vars Z may be a backdoor set in $\graph G^*$ but not in $\graph G$. This implies that we need to redefine sABS with respect to $\graph G, \graph G^*,\graph D$, and not just the selection diagram \graph D:

\begin{definition}{(s-Admissible Backdoor set relaxing the assumption of shared causal graphs, \textbf{sABS})}\label{def:s-admissible-backdoor-relax}\\
\noindent A set of variables $\vars Z$ is an \emph{s-admissible back-door set} for $(X,Y)$ relative to $\graph G$, $\graph G^*$ and $\graph D$ if
(i)  $(Y  \perp\!\!\!\perp  X \mid  \vars Z)_{G^*_{\underline{X}}}$ and
(ii) $(Y \perp\!\!\!\perp \vars S \mid \vars Z)_{D{_{\overline X}}}$.
\end{definition}

For example, in Fig.~\ref{fig:no_shared_no_selection}, $S_Z$ and $S_W$ denote distributional differences between populations in variables $\var Z$ and $\var W$, while $S_X$ indicates the absence of an edge from $\var W$ to $\var X$ in the target population. $\{Z, W\}$ forms a backdoor set in both $\graph G$ and $\graph G^*$. However, $\{Z, W\}$ is not an s-admissible set in $\graph D$, since conditioning on $\var W$ opens a collider path between $S_W$ and $\var Y$. Conversely, $\{Z\}$ is a backdoor set in $\graph G^*$ but not in $\graph G$ and $\graph D$. $\{Z\}$  is also s-admissible in \graph D. By relaxing the assumption of shared causal graphs and following Proposition~\ref{prop:sabs}, our method identifies ${Z}$ as an sABS, satisfying $P(Y \mid do(X), Z, \vars s) = P(Y \mid X, Z, \vars s^*)$. This implies that both $D_e$ from the source and $D_o^*$ from the target population can be used to obtain an unbiased estimate of $P(Y \mid do(X), Z, \vars s^*)$, even if $\{Z\}$ is not a backdoor set in \graph G.

\begin{figure*}[h!]
    \centering
    \begin{tabular}{ccc}
        \resizebox{0.15\textwidth}{!}{ % Resize to 70% of text width
            \begin{tikzpicture}[>=latex']
                \tikzstyle{every node}=[draw]
                \node (X) [circle, draw =none] at (0,0){$X$};
                \node (Z) [circle, draw =none, above = 0.7 of X]{$Z$};
                \node (mid)[draw=none, right =1 of X]{};
                \node (Y) [circle, draw =none, right = 0.7 of mid]{$Y$};
                \node (U) [circle, draw =none, above = 0.7 of Y]{$W$};
                \path (X) edge[->] (Y);
                \path (Z) edge[->] (Y);
                \path (Z) edge[->] (X);
                \path (U) edge[<->] (Y);
                \path (U) edge[->] (X);
            \end{tikzpicture}
        } &
        \resizebox{0.15\textwidth}{!}{ % Resize to 70% of text width
            \begin{tikzpicture}[>=latex']
                \tikzstyle{every node}=[draw]
                \node (X) [circle, draw =none] at (0,0){$X$};
                \node (mid)[draw=none, right =1 of X]{};
                \node (Z) [circle, draw =none, above = 0.7 of X]{$Z$};
                \node (Y) [circle, draw =none, right =  0.7 of mid]{$Y$};
                \node (U) [circle, draw =none, above = 0.7 of Y]{$W$};
                \path (X) edge[->] (Y);
                \path (Z) edge[->] (Y);
                \path (Z) edge[->] (X);
                \path (U) edge[<->] (Y);
                % \path (U) edge[->] (X);

            \end{tikzpicture}
        } &
        \resizebox{0.25\textwidth}{!}{ % Resize to 70% of text width
            \begin{tikzpicture}[>=latex']
                \tikzstyle{every node}=[draw]
                \node (X) [circle, draw =none] at (0,0){X};
                \node (mid)[draw=none, right =1 of X]{};
                \node (Z) [circle, draw =none, above =  .7 of X]{Z};
                \node (Y) [circle, draw =none, right =  1 of mid]{Y};
                \node (W) [circle, draw =none, above  = 0.7 of Y]{W};
                \node (Sz) [rectangle, draw=black, fill=black, text=white, above left =0.2 and 0.5 of Z] {}; 
                \node (S1) [rectangle, draw=none, left =0.1 of Sz] {$S_Z$};
                \node (Sx) [rectangle, draw=black, fill=black, text=white, left =0.5 of X] {}; 
                \node (S2) [rectangle, draw=none, left =0.1 of Sx] {$S_X$};
                \node (Sw) [rectangle, draw=black, fill=black, text=white, above right =0.2 and 0.5 of U] {}; 
                \node (S3) [rectangle, draw=none, right =0.1 of Sw] {$S_W$};
                \path (X) edge[->] (Y);
                \path (Z) edge[->] (Y);
                \path (Z) edge[->] (X);
                \path (W) edge[<->] (Y);
                \path (W) edge[->] (X);
                \path (Sz) edge[->] (Z);
                \path (Sx) edge[->] (X);
                \path (Sw) edge[->] (W);
            \end{tikzpicture}
        }  \\
        (a) \graph G in source domain $\Pi$ & (b) $\graph G^*$ in target domain $\Pi^*$  & (c) Selection diagram \graph D
    \end{tabular}
    \caption{\label{fig:no_shared_no_selection}\textbf{Different causal structures between domains.} $\{Z, W\}$ is a valid backdoor set in both \graph G and $\graph G^*$, but it is not an s-admissible set in \graph D. Thus, $\{Z, W\}$ is not an sABS. $\{Z\}$ is a valid backdoor set in $\graph G^*$ but not in \graph G and \graph D. $\{Z\}$ is an s-admissible set in \graph D. Relaxing the assumption of shared causal graphs, $\{Z\}$ is a valid sABS based on Def.~\ref{def:s-admissible-backdoor-relax} and both $D_e$, $D_o^*$ can be used to estimate the $Z$-specific causal effect in the target population.}
\end{figure*} 
Importantly, this asymmetry is directional: reversing the roles of the source and target populations would render $\{Z\}$ invalid as an sABS, even though the selection diagram $\graph D$ remains unchanged (Fig.~\ref{fig:no_shared_no_selection}(c)) and the $\vars Z$-specific causal effect is still transportable, $P(Y \mid do(X), Z, \vars s) = P(Y \mid do(X), Z, \vars s^*)$, because now $\{Z\}$ won't be a backdoor set in the new target population $\Pi^*$. Testing this transportability equation requires experimental data from both populations, which are not available. Consequently, in this reversed scenario, our method correctly identifies that $\{Z\}$ is not a valid sABS, since it is not a backdoor set in $\graph G^*$, and therefore $P(Y \mid do(X), Z, \vars s) = P(Y \mid X, Z, \vars s^*)$ does not hold.

\section{Experiments}
In this section, we give details on the Simulations of Section \ref{sec:experiments} and show additional experiments. In all simulations, we use a threshold of $t=0.5$ in Alg. \ref{algo:FindsABS}. Hence, we only return a causal effect if $P(h_{\vars Z}|D_e, D_o^*)>0.5$, and otherwise return NaN. In all the simulations we ran, $P(h_{\vars Z}|D_e, D_o^*)$ was at more than 0.5 for at least one set, so we always got an estimate.
\subsection{Simulation Details on Synthetic Data}

We first give some details for the simulations in Section \ref{sec:experiments}. We denote Simulations from Fig.~\ref{fig:motivated_example}(a) as \textbf{Case 1}, and Simulations from Fig.~\ref{fig:motivated_example}(d) as \textbf{Case 2}. For both cases, we assume a binary treatment and outcome, and mixed covariates. \textbf{Case 1:} We simulate $D_o^*, D_e$ from the selection diagram shown in Fig. \ref{fig:motivated_example}(a). 
$D_o^*$ for the target population simulated from the DAG, $\graph G$ ($\graph D$ without the selection variables), while $D_e$ for the source population simulated post-interventional DAG. $X, Y$ variables are binary and $Z, W$ continuous with $Normal(0,10)$ distributions. We use logistic regression to model the probability of the outcome given the treatment and covariates. The coefficients of the logistic regression model were randomly sampled in each iteration from the range \( [-2.5, -0.5] \cup [0.5, 2.5] \) for binary variables and intercept terms, and from \( [-1, -0.2] \cup [0.2, 1] \) for continuous variables. \textbf{Case 2:} We simulate $D_o^*, D_e$ from the selection diagram shown in  Fig. \ref{fig:motivated_example}(d). $X, Y, Z$ are binary variables and $H_1, H_2, W$ are continuous following $Normal(0, 1)$ distributions. For both $\Pi$ and $\Pi^*$, we set $Y, X$ to be functions of their parents using a parameter $\alpha=0.99$: $P(X=1|H_2>0.5) = \alpha$ and $P(Y = 1 \mid X = 1, H_1 > 0.5) = \alpha, \; \text{otherwise},$ \( Y \) follows a logistic model as a function of \( W \), with an intercept of \(-1\) and a slope of \(3\). Variable $Z$ has different distribution in $\Pi$ and $\Pi^*$ and defined as: $P(Z=1| H_1>0.5, H_2>0.5, \vars S= s)=1-\alpha$ and $P(Z=1| H_1>0.5, H_2>0.5, \vars S=s^*)=\alpha$. 

\subsection{Additional Experiments}\label{sec:additional_experiments}
We repeated the experiments in Sec~\ref{sec:experiments} using discrete data and the scores described in Table~\ref{tab:equations}. For Case 1, we simulated discrete data with random parameters. For Case 2, \var Z is a noisy OR of the two latent variables, and all other variables are discrete variables with random parameters.Binary cross entropy of the predictions of different methods are summarized in Fig.~\ref{fig:additional_exps_1}. AUCs for classifying s-admissible adjustment sets are shown in Fig. \ref{fig:aucs_discrete}.

\begin{figure}[th!]
\centering
\begin{tabular}{cc}
\includegraphics[width =0.4\columnwidth]{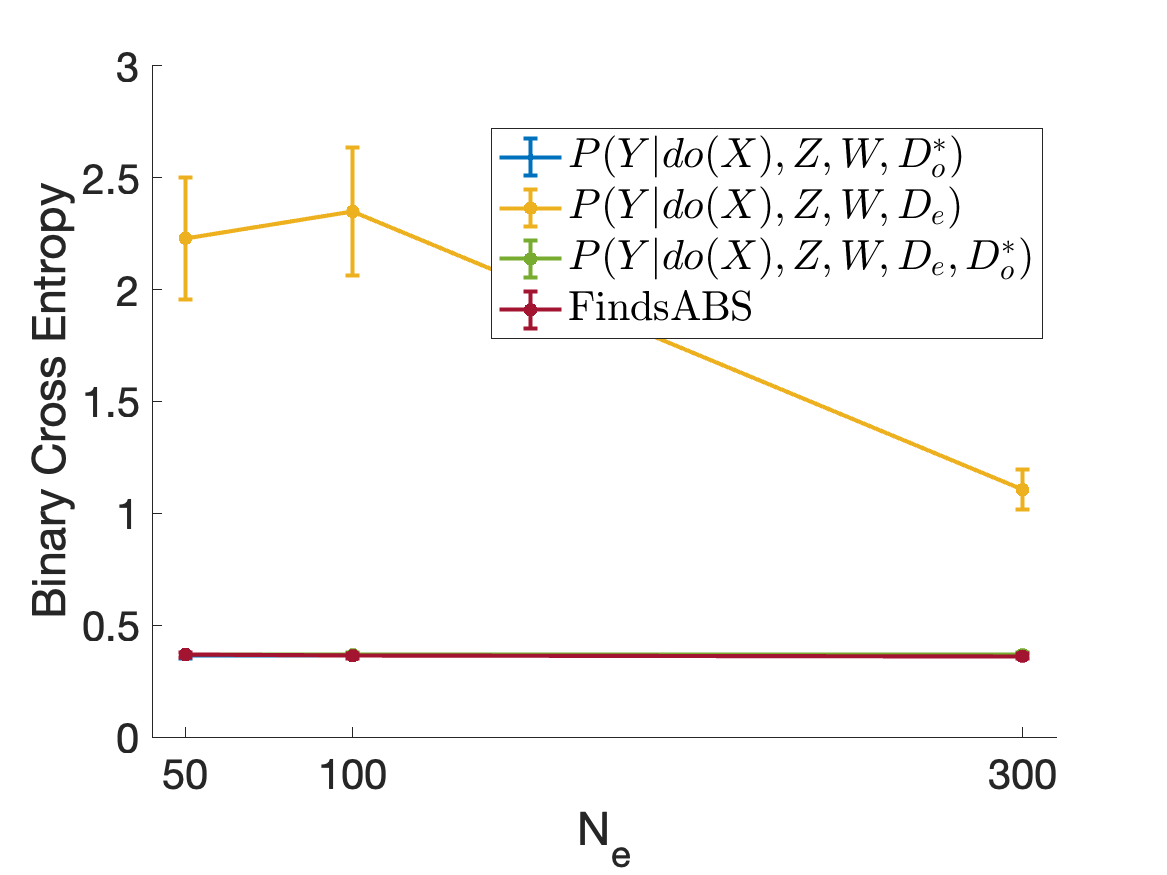}&
\includegraphics[width =0.4\columnwidth]{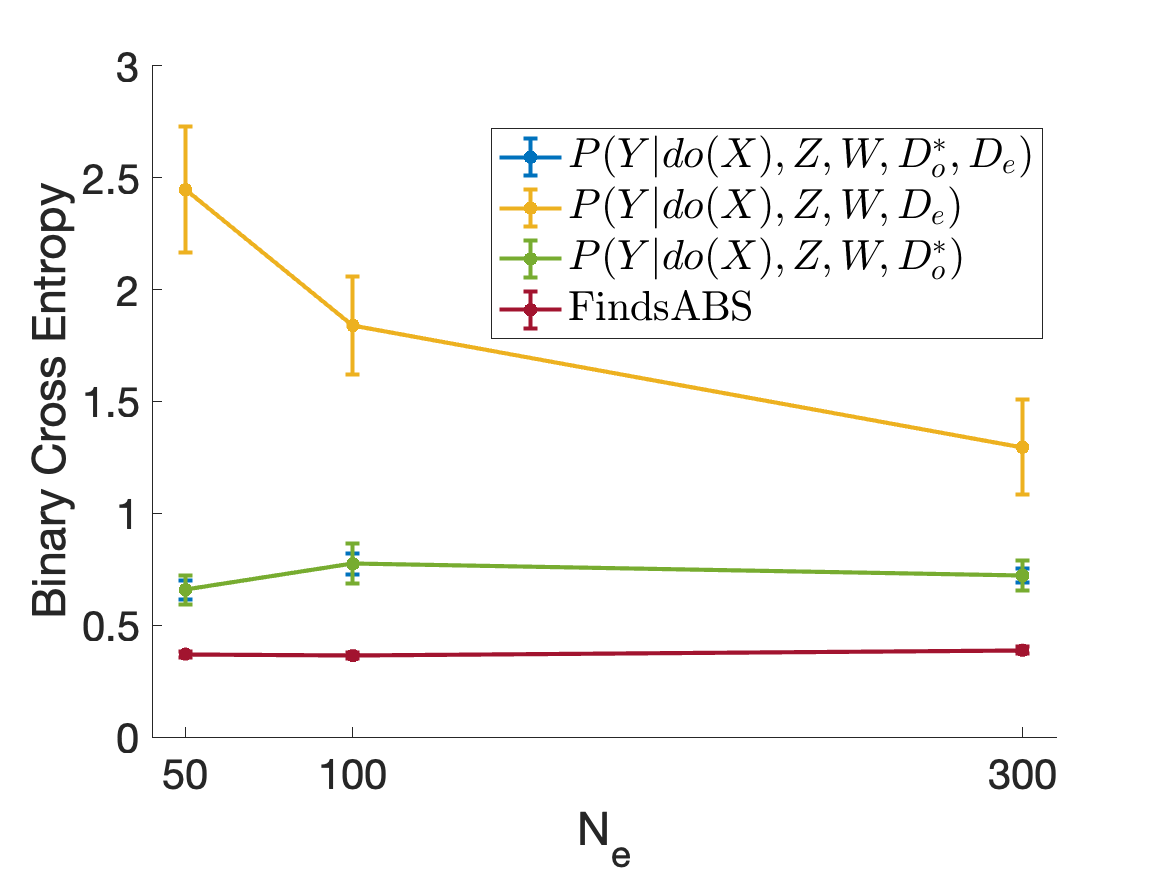}\\ (a) & (b)
\end{tabular}
\caption{Comparative performance of FindsABS and estimators based on $D_e$ and $D_o^*$. (a) Case 1: $\{Z, W\}$ is an sABS for $X, Y$, so all estimators are unbiased. FindsABS performs on par with the  $D_o^*$ and $D_e+D_o^*$ -based estimators, and all outperform the  $D_e$ based estimator. (b) Case 2: FindsABS outperforms $D_e$, $D_o^*$ and $D_e+D_o^*$ -based estimators using $(X,Z,W)$ as conditioning set. $D_o^*$ and $D_e+D_o^*$ -based estimators are almost identical.}
\label{fig:additional_exps_1}
\end{figure}

\begin{figure*}[h!]
    \begin{tabular}{ccc}
        $N_e=50$&$N_e=100$&$N_e=300$\\
        \includegraphics[width =0.3\columnwidth]{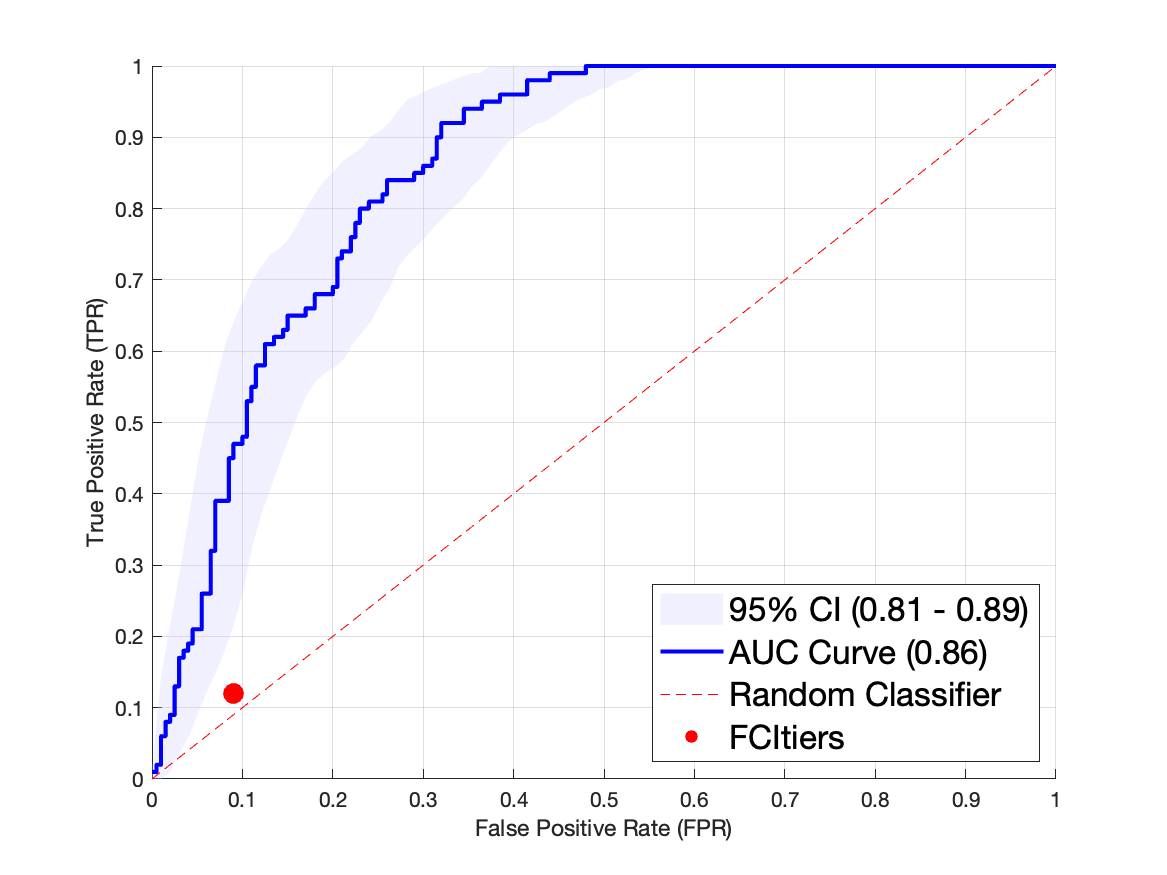}&
        \includegraphics[width =0.3\columnwidth]{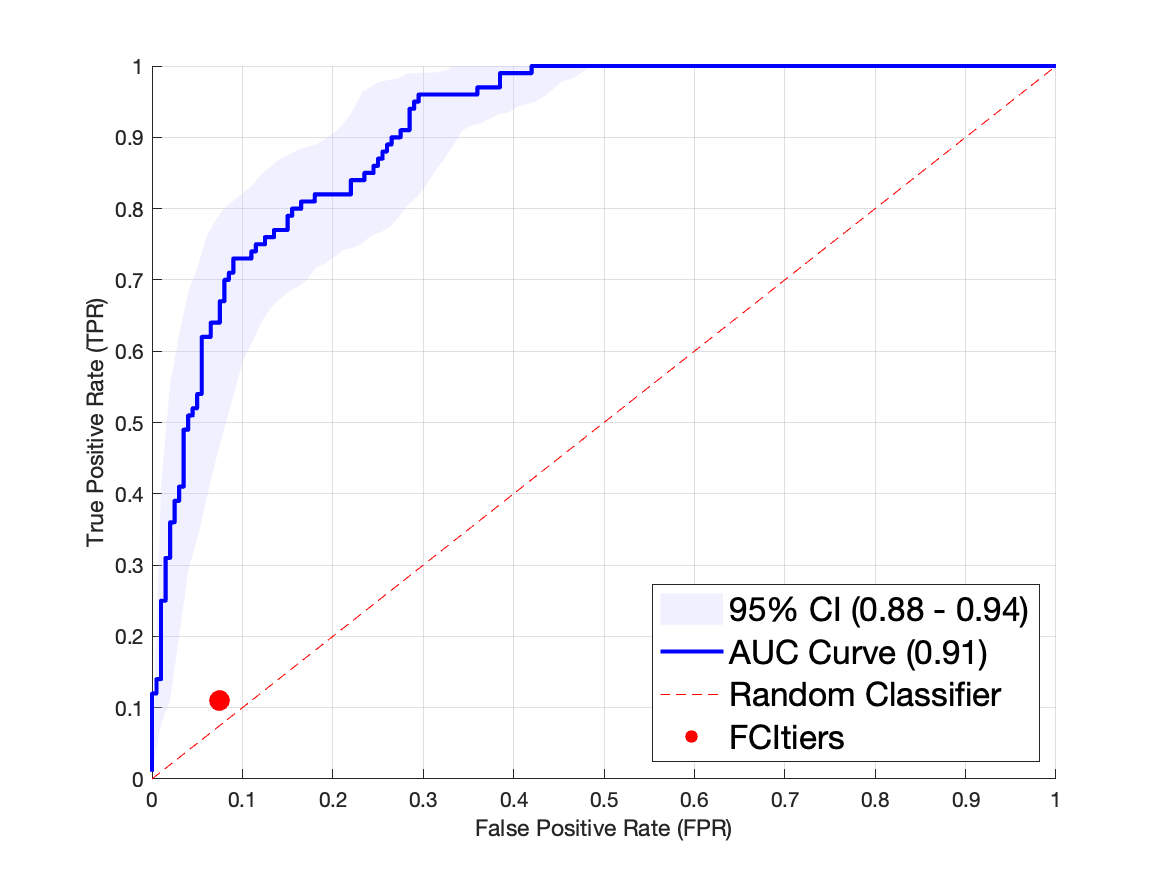}&
        \includegraphics[width =0.3\columnwidth]{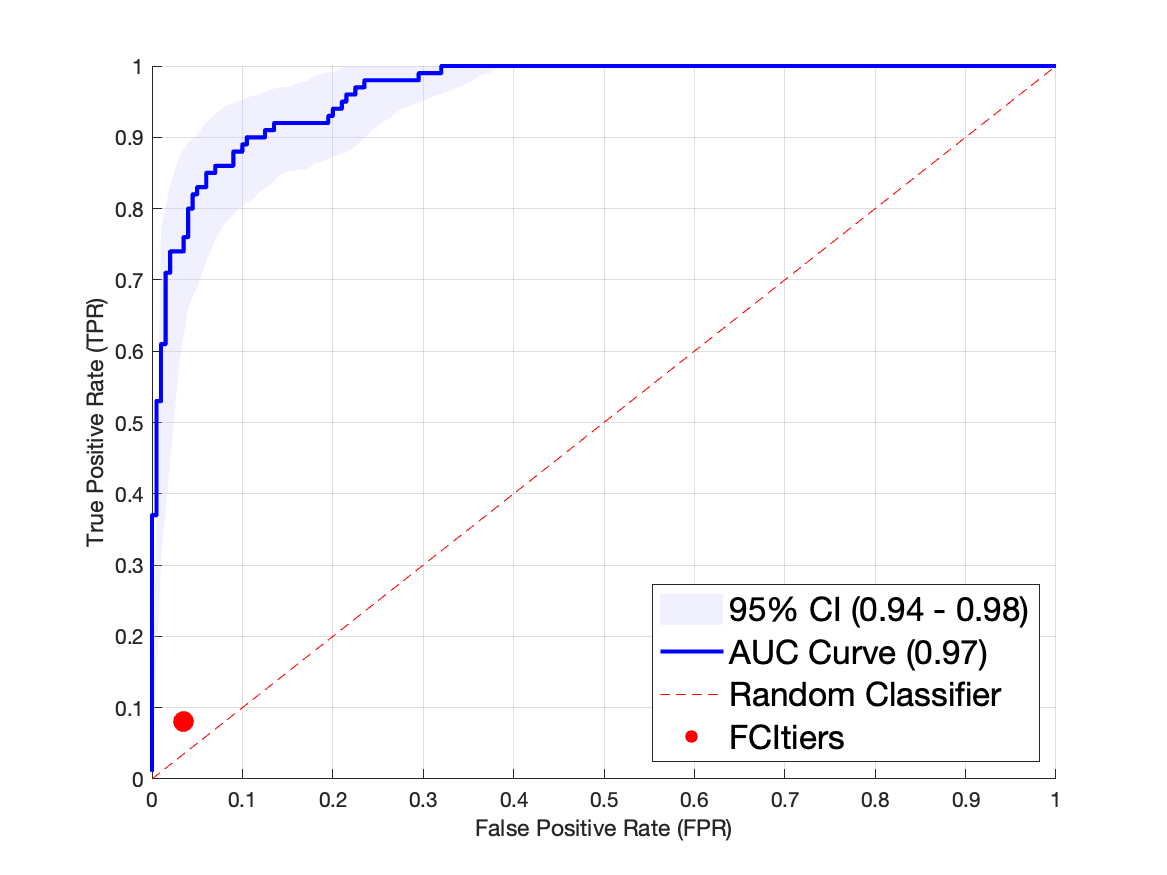}\\
    \end{tabular}
    \caption{Areas under the ROC curve for predicting $h_{ \textbf{Z}}$ with $5000$ in $D_o^*$ and an increasing number of samples in $D_e$, for discrete data. The red dot corresponds to the false positive and true positive rate of testing s-admissible backdoor sets using FCItiers. }
    \label{fig:aucs_discrete}
\end{figure*}

\subsection{Comparison with FCItiers}
As discussed in the related work section, there are several approaches that combine conditional independence relations in observational and experimental data to identify causal graphs. \citet{mooij2019} and \citet{andrews2020} discuss how to perform causal discovery from different domains and experiments. Specifically, one can model domain differences with an exogenous context variable $C$ and experiments with an exogenous instrumental variable \var I. Additionally, knowledge on the ordering of the variables can be included as knowledge tiers in FCI. \citet{andrews2020} shows that this version of FCI, called FCItiers, is complete in these settings.

We applied FCItiers with background knowledge in the data described in Sec. \ref{sec:additional_experiments}. We used a binary context variable $C$ to denote domain differences, and instrumental variable $I$ to model the randomization of $X$ in the source domain. \citet{andrews2020} suggest to add these variables in a tier that precedes all others. We also added a second tier that includes the pre-treatment variables, a third tier that includes the treatment, and a final tier including the outcome. Notice that $C$ and $I$ are identical in this setting, since randomization only happens in the source domain. In order to avoid any tests that include these two variables, we imposed that some edges are forbidden: Specifically, we did not allow any edges between $C$ and $I$, between $I$ and the pre-treatment covariates, between $I$ and the outcome, and between $C$ and $X$. We then ran FCItiers with this knowledge, to obtain the selection diagram \graph D. For each set $\vars Z$, we tested if \vars Z is an s-admissible backdoor set by testing (a) If $X$ and $Y$ are m-separated given $\vars Z$ in $D_{\underline X}$ (true for backdoor sets). and (b) If $C$ and $Y$ are m-separated given $\vars Z$ in $D_{\overline X}$ (true for s-admissible sets). If both conditions are satisfied, the set was predicted to be an s-admissible backdoor set. Since this method only outputs yes/no, we cannot compute AUCs. Instead, we computed the true positive rate and false positive rate for the predictions, and included it in the AUCs plots of Fig. \ref{fig:aucs_discrete}. As we can see, our method outperforms FCItiers in these settings.

\subsection{Scalability Evaluation: Experiments with More Variables}

To further evaluate the scalability and robustness of our method, we extended the experiments presented in Section \ref{sec:experiments} of the main paper to incorporate 10 variables. We simulate data from the ground-truth graph in Fig.~\ref{fig:exp_more_vars}, consisting of 6 continuous and 4 binary variables, where each variable is modeled as a logistic regression of its parents. Variables with identical distributions across populations ($Z_1$--$Z_7$) are represented as circles, with fixed distributions and regression coefficients summarized in Table~\ref{tab:covariates}. Variables \var Z and \var W differ between populations. In each iteration, the coefficients of the edges from \var Z and \var W to \var Y are resampled to test whether our algorithm can detect population differences when the effects of these variables vary. Coefficients are sampled from $[-1, -0.2] \cup [0.2, 1]$ for continuous variables and from $[-2.5, -0.5] \cup [0.5, 2.5]$ for binary variables and intercept terms. Continuous variables $Z$ and $W$ are drawn from normal distributions with their means sampled from \text{Uniform}(-5,5) and their standard deviations sampled from \text{Uniform}(0,5), ensuring differences in both mean and standard deviation between $P$ and $P^*$. $S_X$, encodes a mechanism change between the source and the target population.

\begin{figure*}[h!]
    \centering
    \begin{minipage}{0.5\textwidth} % Adjust width
        \centering
        \scalebox{0.7}{ % Scale down TikZ figure
        \begin{tikzpicture}[>=latex']
            \tikzstyle{every node}=[draw]
            % --- Existing nodes ---
            \node (X) [circle, draw =none] at (0,0){X};
            \node (mid)[draw=none, right =1 of X]{};
            \node (Z) [circle, draw =none, above =  .7 of X]{Z};
            \node (Y) [circle, draw =none, right =  1 of mid]{Y};
            \node (W) [circle, draw =none, above  = 0.7 of Y]{W};
            
            % --- S nodes ---
            \node (Sz) [rectangle, draw=black, fill=black, text=white, above left =0.2 and 0.5 of Z] {}; 
            \node (S1) [rectangle, draw=none, left =0.1 of Sz] {$S_Z$};
            \node (Sx) [rectangle, draw=black, fill=black, text=white, left =0.5 of X] {}; 
            \node (S2) [rectangle, draw=none, left =0.1 of Sx] {$S_X$};
            \node (Sw) [rectangle, draw=black, fill=black, text=white, above right =0.2 and 0.5 of W] {}; 
            \node (S3) [rectangle, draw=none, right =0.1 of Sw] {$S_W$};

            % --- Edges ---
            \path (X) edge[->] (Y);
            \path (Z) edge[->] (Y);
            \path (Z) edge[->] (X);
            \path (W) edge[->] (Y);
            \path (W) edge[->] (X);
            \path (Sz) edge[->] (Z);
            \path (Sx) edge[->] (X);
            \path (Sw) edge[->] (W);

            % --- New nodes on the right ---
            \node (Z1) [circle, draw, right=1.2 of Y, below=0.5 of Y] {Z1};
            \node (Zdots) [draw=none, right=0.7 of Z1] {$\mathbf{\dots}$};
            \node (Z7) [circle, draw, right=0.7 of Zdots] {Z7};

            % Arrows from new nodes to Y
            \path (Z1) edge[->] (Y);
            \path (Z7) edge[->] (Y);
        \end{tikzpicture}}
        \caption{Selection diagram including all 10 variables.}
        \label{fig:exp_more_vars}
    \end{minipage}%
    \hfill
    \begin{minipage}{0.4\textwidth} % Adjust width
        \centering
        \small % make table smaller
        \captionof{table}{Distributions and regression coefficients of additional covariates $Z_1$--$Z_7$.}
        \label{tab:covariates}
        \begin{tabular}{|c|c|c|}
            \hline
            Variable & Distribution & Coefficient $b_{Z_i \to Y}$ \\
            \hline
            $Z_1$ & Normal(3, 6) & 0.6 \\
            $Z_2$ & Normal(0, 3) & 0.9 \\
            $Z_3$ & Normal(-1, 4) & 0.3 \\
            $Z_4$ & Normal(-5, 5) & 0.2 \\
            $Z_5$ & Bernoulli(0.5) & 1.5 \\
            $Z_6$ & Bernoulli(0.8) & -1 \\
            $Z_7$ & Bernoulli(0.3) & -0.9 \\
            \hline
        \end{tabular}
    \end{minipage}
\end{figure*}

The binary cross-entropy of predictions obtained by \texttt{FindsABS}, $D_o^*$-, and $D_e$-based estimators is summarized in Fig.~\ref{fig:results_more_vars}. We consider two scenarios. In the first (Fig.~\ref{fig:results_more_vars}a), all 10 variables are observed; thus, every set that includes $\{Z, W\}$ is an sABS. \texttt{FindsABS} identifies the set that leads to the best post-intervention outcome and performs similarly to the $D_o^*$-based estimator. The $D_e$-based estimator performs similarly as $N_e$ increases. In the second,  Fig.\ref{fig:results_more_vars}(b), the covariates $Z$ and $W$—which differ in distribution between populations—are latent, so no sABS exists. Ideally, in this scenario, our algorithm should not produce any predictions. Nevertheless, Fig.\ref{fig:results_more_vars}(b) shows that even when our algorithm incorrectly identifies an sABS, its performance remains comparable to that of the best estimator. This may be because variables $Z_1$–$Z_7$ have a stronger effect on the outcome than $Z$ and $W$, or because the distributions of $Z$ and $W$ do not differ substantially between populations. When using $N_e=300$, our algorithm correctly identifies the absence of an SABs set in 2 out of 20 runs, and returns no predictions. For $N_e = 1000$ and  $N_e = 2000$, this occurs in 9 and 5 out of 20 runs, respectively. For the remaining cases where the algorithm returns predictions, the results are illustrated in Fig.\ref{fig:results_more_vars}(b).

\begin{figure}[h!]
\centering
\begin{tabular}{cc}
\includegraphics[width =0.4\columnwidth]{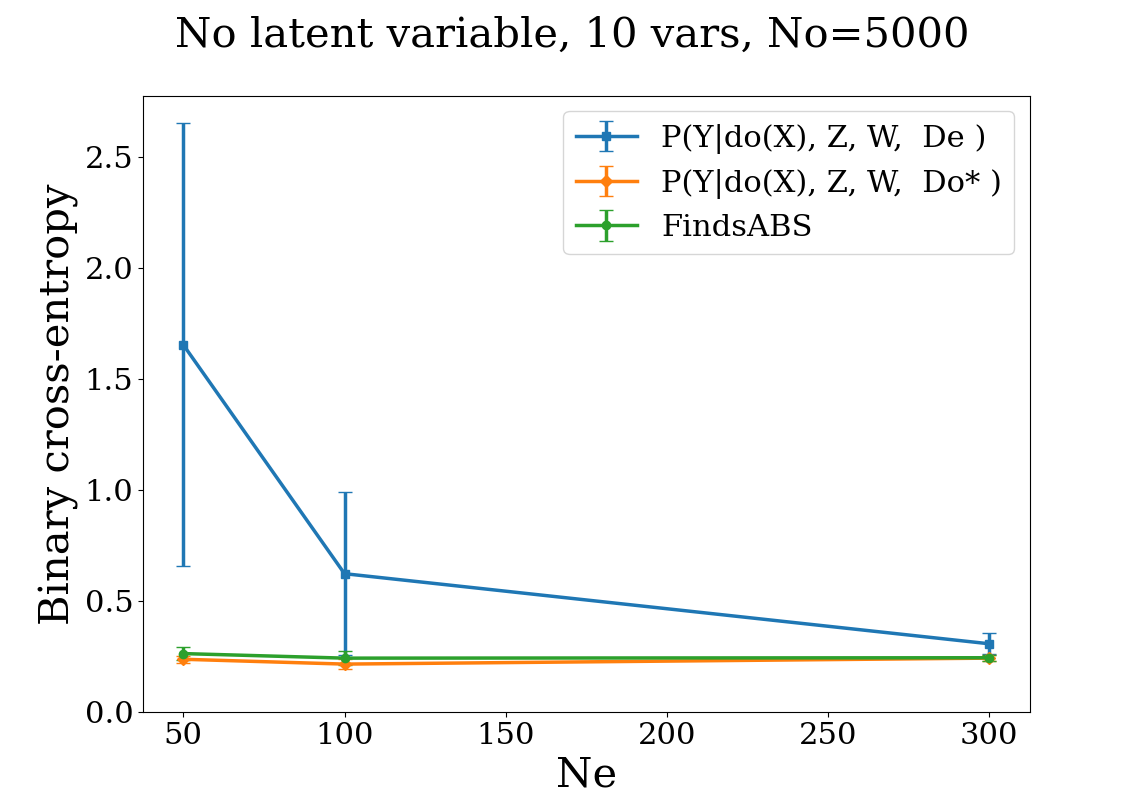}&
\includegraphics[width =0.4\columnwidth]{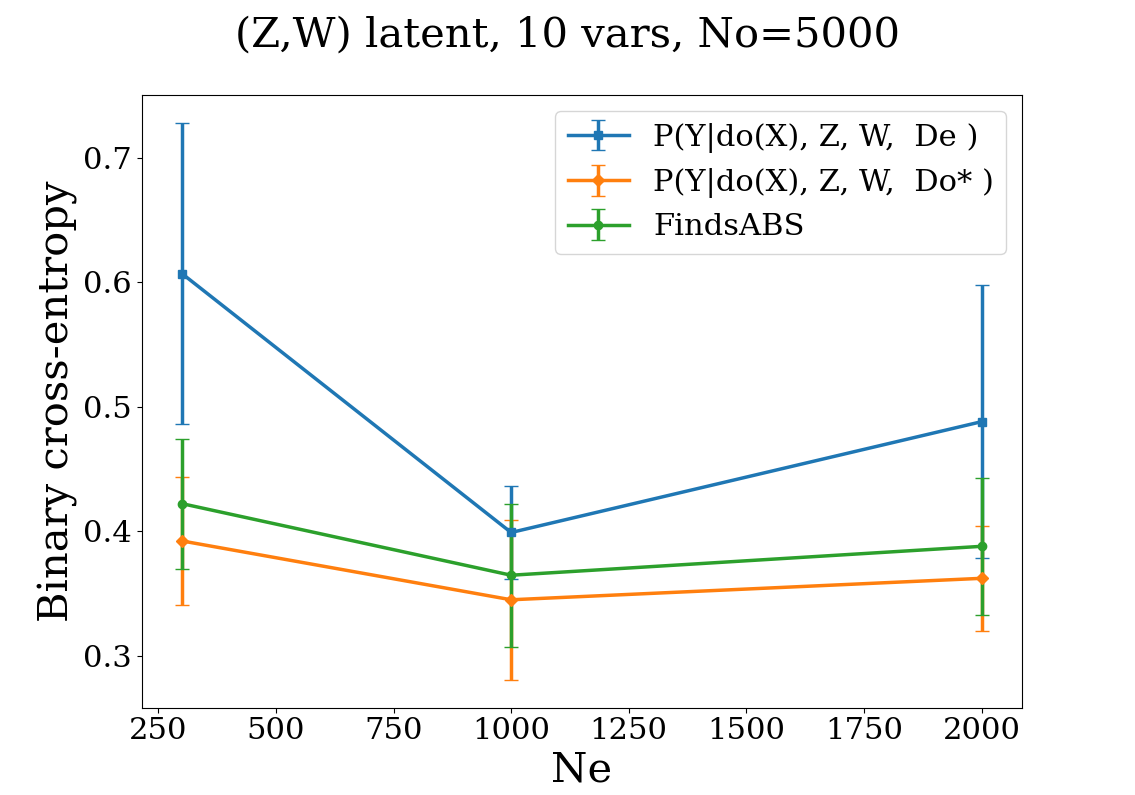}\\ (a) & (b)
\end{tabular}
\caption{Comparative performance of FindsABS and estimators based on $D_e$ and $D_o^*$. (a) Scenario 1: $\{Z, W, Z_1,...,Z_7\}$ is an sABS for $X, Y$, so all estimators are unbiased. FindsABS performs on par with the  $D_o^*$-based estimator, and outperforms the $D_e$-based estimator. (b) Scenario 2:  $\{Z, W\}$ are latent, thus $\{Z_1,..., Z_7\}$ is not an sABS. When FindsABS incorrectly identifies an sABS set, its performance remains comparable to the best estimator.}
\label{fig:results_more_vars}
\end{figure}

\subsection{Comparison with the implementation from \emph{\cite{de-bartolomeis2024b}}}\label{subsec:SubGroup_bias}

% We compare our method with the implementation from \cite{de-bartolomeis2024b}, referred to as \texttt{Bias-test}, under different scenarios. In Subsection~\ref{subsec:Compare1}, we simulate data from the causal graph  in Fig.~\ref{fig:no_shared_graph}, which includes mixed covariates, a binary treatment, and a binary outcome, extending the results reported in the main paper that includes experimtnas for $N_o^*=5000$ and vary $N_e$, to produce results for increasing $N_o^*, N_e$ and . As shown in Fig.~\ref{fig:TPR_FPR_Compare}, both methods converge as the sample sizes (both $N_o^*$ and $N_e$) increases, extending the results from the main paper. Subsection~\ref{subsec:Compare2} extends the analysis to continuous outcomes using linear regression models and evaluates both methods under two scenarios sketched in \cite{de-bartolomeis2024b}. Semi-synthetic data derived from MineThatData Email dataset \citep{Hillstrom2008} used, as discussed in the main paper. Additional results reported for $N_o^*=15000$ while $N_e$ varies, allowing us to examine how the two methods behave in larger samples.

We compare our method with the implementation from \cite{de-bartolomeis2024b}, referred to as \texttt{Bias-test}, under different scenarios. In the main paper, experiments used $N_o^* = 5000$ and varying $N_e$. Here, in both Subsections~\ref{subsec:Compare1} and \ref{subsec:Compare2}, we increase $N_o^*$ and vary $N_e$ to examine methods performance with larger sample sizes for both simulated and semi-synthetic data.

\textbf{Overview of the \texttt{Bias-test} Method}\\
Before presenting the additional experimental results, we briefly outline the key components of the method proposed by \cite{de-bartolomeis2024b}:

For a distribution index $\diamond \in \{\mathrm{rct}, \mathrm{os}\}$ with underlying data-generating distribution $P_\diamond$, \cite{de-bartolomeis2024b} define the (conditional) treatment-effect regression function as:
\[
\tau_\diamond : \mathcal{X} \to \mathbb{R}, 
\qquad 
\tau_\diamond(x) := \mathbb{E}_{P_\diamond}[\,Y \mid T=1, X=x\,] \;-\; \mathbb{E}_{P_\diamond}[\,Y \mid T=0, X=x\,].
\]

The \textbf{Null hypothesis} defined as:
Let $\mathcal{X}$ be the feature space, $X \in \mathcal{X}$, $T \in \{0,1\}$ the treatment, and $Y \in \mathbb{R}$ the outcome and let patient subgroups defined via a subset of features $X_J$, corresponding to the covariates with indices $J \subseteq \{1,2,\dots,d\}$.The null hypothesis is given by:
\[
H_0: \; \mathbb{E}_{P_{\text{rct}}} \big[\tau_{\text{rct}}(X) \mid X_J \big] \in 
\Big(
\mathbb{E}_{P_{\text{rct}}} \big[\tau_{\text{os}}^-(X) \mid X_J \big], \;
\mathbb{E}_{P_{\text{rct}}} \big[\tau_{\text{os}}^+(X) \mid X_J \big]
\Big), \quad P_{\text{rct}} \text{-a.s.}
\]

where $\tau^{\pm}_{\mathrm{os}}: \mathcal{X} \to \mathbb{R}$ are the two bounded tolerance functions that capture how much the estimated treatment effects can differ between studies, and which satisfy $\tau^{-}_{\mathrm{os}}(x) \;\leq\; \tau_{\mathrm{os}}(x)\leq\; \tau^{+}_{\mathrm{os}}(x),
 \; \forall\, x \in \mathcal{X}.$

In \texttt{Bias-test} method, the user specifies a tolerance function $\delta(x)$, which defines the maximum permissible difference between the RCT-estimated treatment effect and the observational estimate (after correcting for bias), before rejecting the null hypothesis: $
\tau_{\mathrm{os}}^{\pm}(x) = \tau_{\mathrm{os}}(x) \pm \delta(x)$. In the experiments below, we follow their approach by selecting a constant $\delta$ ($user\_shift$). 

For the purposes of comparison, \textbf{we classify any feature set for which the null hypothesis is rejected as \textit{non-sABS}}, while all other sets are classified as \textit{sABS}. In this framework, a true positive occurs when the test correctly fails to reject $H_0$ for a set that is in fact an sABS, whereas a false negative occurs when such a set is incorrectly classified as non-sABS. Conversely, when the set is not an sABS, rejecting $H_0$ yields a true negative, while failing to reject it results in a false positive.

\subsubsection{Additional Comparison Using Simulated Data with Relaxed Shared-Graph Assumption}\label{subsec:Compare1}

We simulate data according to the scenario in Fig.~\ref{fig:no_shared_graph}(c). In the source population $\mathcal{P}$, covariates are generated as $Z \sim \mathcal{N}(1, 3)$ and $W \sim \text{Bernoulli}(0.9)$, while in the target population $\mathcal{P}^*$, $Z \sim \mathcal{N}(0, 5)$ and $W \sim \text{Bernoulli}(0.3)$. Treatment in the observational dataset from the target population, $D_o^*$, is generated via $\text{logit}(X) = -1 + 0.8 Z + 2.1 W$, whereas in the experimental dataset from the source, $D_e$, treatment is randomized. Outcomes are generated in both populations using 
$\text{logit}(Y) = 0.9 Z + 1.9 W + 2.2 X + e$, $e \sim N(0,1)$. 
We generate 20 datasets with fixed seeds and we evaluate whether the two methods can correctly identify if a set is an sABS.

\begin{figure*}[h!]
    \centering
    \begin{tabular}{ccc}
        \resizebox{0.15\textwidth}{!}{ % Resize to 70% of text width
            \begin{tikzpicture}[>=latex']
                \tikzstyle{every node}=[draw]
                \node (X) [circle, draw =none] at (0,0){$X$};
                \node (Z) [circle, draw =none, above = 0.7 of X]{$Z$};
                \node (mid)[draw=none, right =1 of X]{};
                \node (Y) [circle, draw =none, right = 0.7 of mid]{$Y$};
                \node (U) [circle, draw =none, above = 0.7 of Y]{$W$};
                \path (X) edge[->] (Y);
                \path (Z) edge[->] (Y);
                \path (Z) edge[->] (X);
                \path (U) edge[->] (Y);
                % \path (U) edge[->] (X);
            \end{tikzpicture}
        } &
        \resizebox{0.15\textwidth}{!}{ % Resize to 70% of text width
            \begin{tikzpicture}[>=latex']
                \tikzstyle{every node}=[draw]
                \node (X) [circle, draw =none] at (0,0){$X$};
                \node (mid)[draw=none, right =1 of X]{};
                \node (Z) [circle, draw =none, above = 0.7 of X]{$Z$};
                \node (Y) [circle, draw =none, right =  0.7 of mid]{$Y$};
                \node (U) [circle, draw =none, above = 0.7 of Y]{$W$};
                \path (X) edge[->] (Y);
                \path (Z) edge[->] (Y);
                \path (Z) edge[->] (X);
                \path (U) edge[->] (Y);
                \path (U) edge[->] (X);

            \end{tikzpicture}
        } &
        \resizebox{0.25\textwidth}{!}{ % Resize to 70% of text width
            \begin{tikzpicture}[>=latex']
                \tikzstyle{every node}=[draw]
                \node (X) [circle, draw =none] at (0,0){X};
                \node (mid)[draw=none, right =1 of X]{};
                \node (Z) [circle, draw =none, above =  .7 of X]{Z};
                \node (Y) [circle, draw =none, right =  1 of mid]{Y};
                \node (W) [circle, draw =none, above  = 0.7 of Y]{W};
                \node (Sz) [rectangle, draw=black, fill=black, text=white, above left =0.2 and 0.5 of Z] {}; 
                \node (S1) [rectangle, draw=none, left =0.1 of Sz] {$S_Z$};
                \node (Sx) [rectangle, draw=black, fill=black, text=white, left =0.5 of X] {}; 
                \node (S2) [rectangle, draw=none, left =0.1 of Sx] {$S_X$};
                \node (Sw) [rectangle, draw=black, fill=black, text=white, above right =0.2 and 0.5 of U] {}; 
                \node (S3) [rectangle, draw=none, right =0.1 of Sw] {$S_W$};
                \path (X) edge[->] (Y);
                \path (Z) edge[->] (Y);
                \path (Z) edge[->] (X);
                \path (W) edge[->] (Y);
                \path (W) edge[->] (X);
                \path (Sz) edge[->] (Z);
                \path (Sx) edge[->] (X);
                \path (Sw) edge[->] (W);
            \end{tikzpicture}
        }  \\
        (a) \graph G in source domain $\Pi$ & (b) $\graph G^*$ in target domain $\Pi^*$  & (c) Selection diagram \graph D
    \end{tabular}
    \caption{\label{fig:no_shared_graph} $S_Z$ and $S_W$ variables model differences in the distributions of $Z$ and $W$ between $\Pi$ and $\Pi^*$, while $S_X$ encodes a mechanism change between the source and the target population. $\{Z, W\}$ is an s-admissible set and a backdoor set in $\graph D$. Thus, $P(Y \mid do(X), Z, W, \mathbf{S} = \mathbf{s})$ is transportable from $\Pi$ to $\Pi^*$.}
\end{figure*}

Fig.~\ref{fig:AUC_TPR_FPR_Compare} shows the AUCs for classifying s-admissible backdoor sets for $N_o^*=5000$ and $N_e \in \{50, 100, 300\}$.
For FindsABS, AUCs are computed across all subsets of covariates, including the empty set. Since \texttt{Bias-test} method produces only binary (yes/no) outputs, AUCs cannot be computed. Instead, we calculate the true positive rate (TPR) and false positive rate (FPR) for predictions using different $\delta \in \{ 0.03, 0.05, 0.1\}$ parameters and include these in the AUC plots. As the \texttt{Bias-test} method cannot be applied to the empty set, TPR and FPR are computed only for the non-empty subsets $\{\{Z\}, \{W\}, \{Z, W\}\}$. Fig.~\ref{fig:AUC_TPR_FPR_Compare} shows that while \texttt{ProbsABS} achieves high accuracy, \texttt{Bias-test} suffers from frequent false positives.

\begin{figure*}[h!]
    \begin{tabular}{ccc}
        \includegraphics[width =0.3\columnwidth]{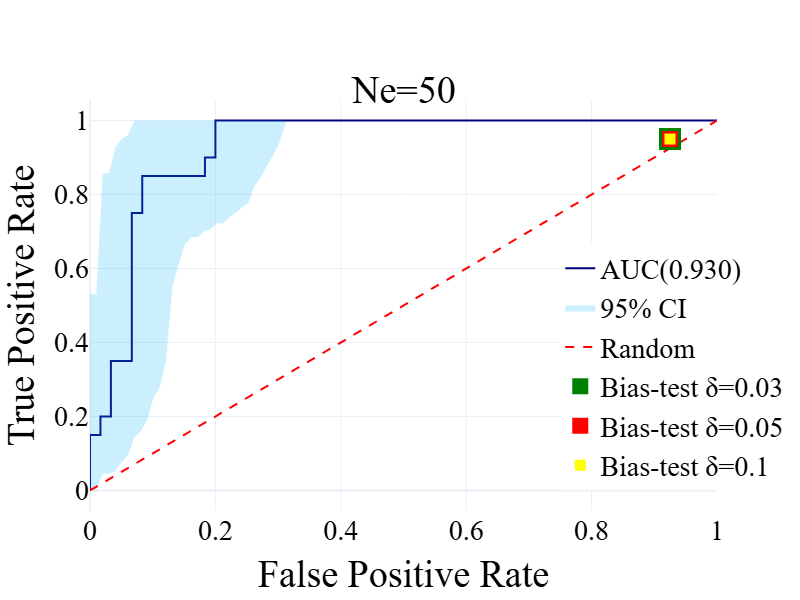}&
        \includegraphics[width =0.3\columnwidth]{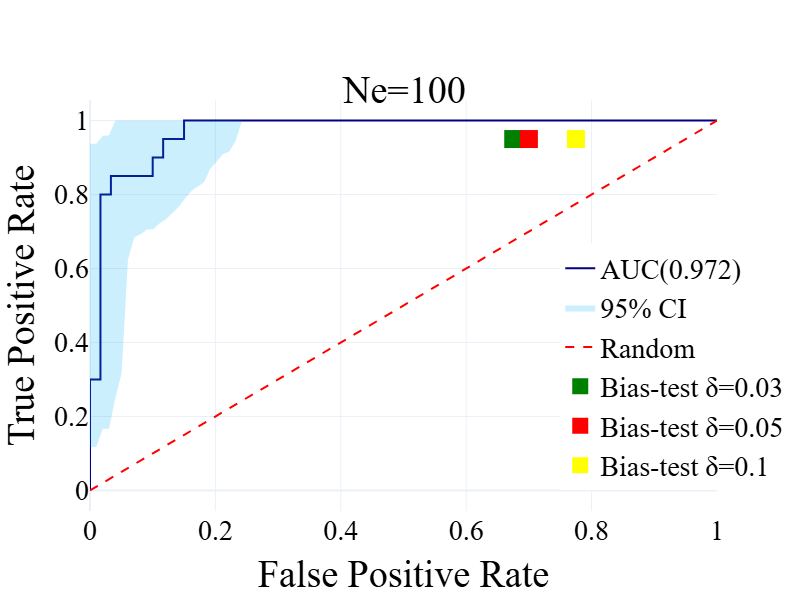}&
        \includegraphics[width =0.3\columnwidth]{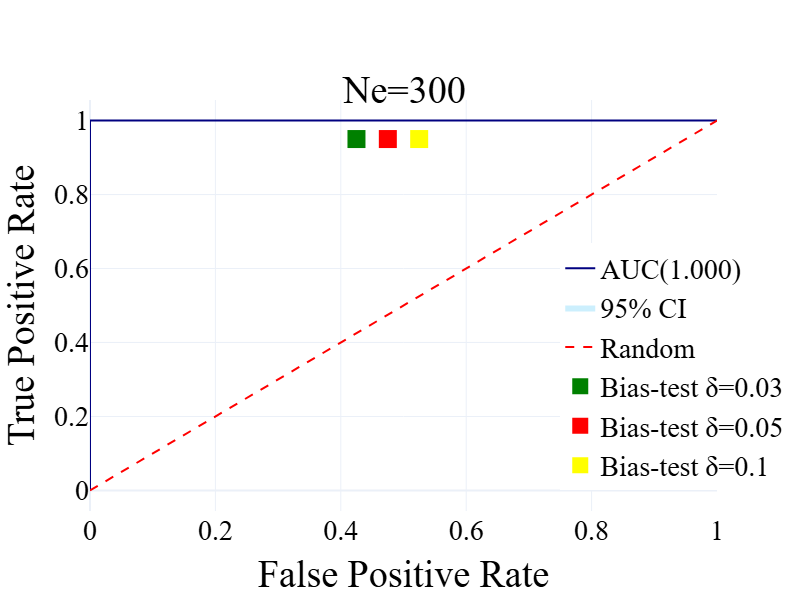}\\
    \end{tabular}
    \caption{\label{fig:AUC_TPR_FPR_Compare}
    \textbf{Synthetic data}: Areas under the ROC curve for predicting $h_{ \textbf{Z}}$ with $N_o^* = 5000$ and an increasing number of $N_e$ samples. Dots in different colors correspond to the false positive and true positive rate of testing s-admissible backdoor sets using \texttt{Bias-test} method.}
    \label{fig:Exp_Case_1}
\end{figure*}

Fig.~\ref{fig:TPR_FPR_Compare} presents the TPR and FPR obtained for different combinations of $N_o^*$ and $N_e$, thereby examining the effect of increasing sample sizes on the performance of both methods. Specifically, results are shown for $N_o = 5000$ with $N_e \in \{50, 100, 300, 1000\}$ (Fig.~\ref{fig:TPR_FPR_Compare}(a)) and for $N_o = 15000$ with $N_e \in \{500, 1000, 3000, 10000\}$ (Fig.~\ref{fig:TPR_FPR_Compare}(b)). Fig.~\ref{fig:TPR_FPR_Compare} shows that \texttt{FindsABS} correctly identifies sets that are sABSs in most cases, even for very small sample sizes, whereas \texttt{Bias-test} tends to produce frequent false positives, particularly for small $N_e$. As both $N_o^*$ and $N_e$ increase, the performance of the two methods converges.

\begin{figure}[h!]
\centering
\begin{tabular}{cc}
\includegraphics[width =0.4\columnwidth]{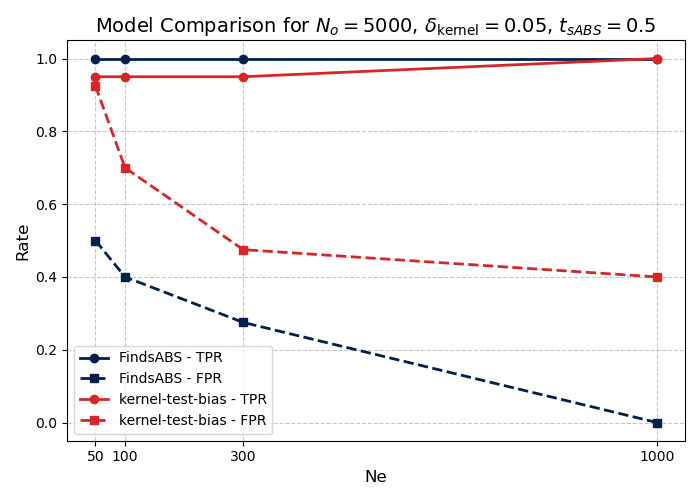}&
\includegraphics[width =0.4\columnwidth]{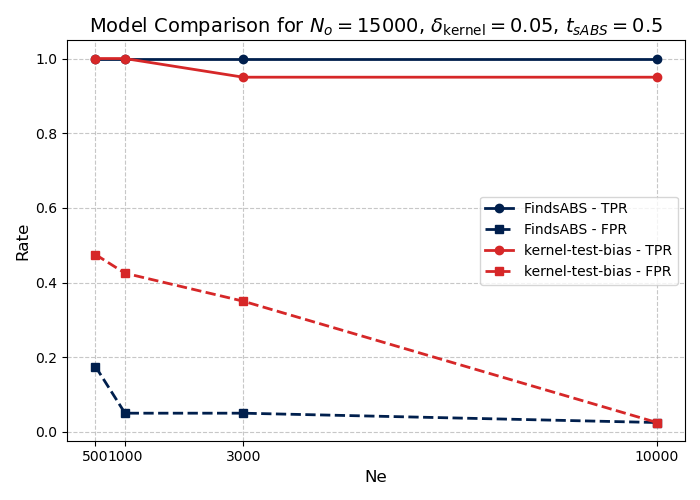}\\ (a) & (b)
\end{tabular}
\caption{\textbf{Synthetic Data:} True Positive and False Positive Rates are compared for \texttt{Bias-test} (red) and \texttt{FindsABS} (blue) using: (a) $N_o^* = 5000$ and $N_e \in \{50, 200, 300, 1000\}$. Both methods correctly identify that the full set (FS) is an sABS. \texttt{FindsABS} achieves a very low FPR even for small experimental sample sizes, whereas \texttt{Bias-test}, particularly for small $N_e$, frequently misidentifies sets as sABS, and (b) $N_o^* = 15000$ and $N_e \in \{500, 1000, 3000, 10000\}$, \texttt{FindsABS} almost always correctly identifies whether a set is an sABS. For \texttt{Bias-test}, false positives still occur at smaller $N_e$, but as the sample size grows, its performance approaches that of \texttt{FindsABS}.
}
\label{fig:TPR_FPR_Compare}
\end{figure}

\textbf{Computational Time}\\
All experiments were conducted on the same workstation. Using the full set of covariates $\{X,Z,W\}$ in experiments of Section \ref{subsec:Compare1} with $N_o^*=5000$ and $N_e=100$, the \texttt{Kernel-bias} method, run for $1000$ epochs with a learning rate of $0.1$ using the authors’ public GitHub code, required 89.37 seconds. \texttt{ProbsABS} (Algorithm~\ref{algo:probsabs}), with $N_s=10000$ sampling iterations and $1000$ warmup steps in the NumPyro MCMC sampler, completed in 27.08 seconds. Finally, the full \texttt{FindsABS} (Algorithm~\ref{algo:FindsABS}), executed using the greedy search procedure, finished in 96.57 seconds.

\subsubsection{Simulation Details and Additional Comparison Using Semi-Synthetic Data}
\label{subsec:Compare2}
% As discussed in the Experimental Section~\ref{sec:experiments} of the main paper, we follow the scenarios described in the \textit{Bias Model} paragraph from the Experimental Section of \citet{de-bartolomeis2024b} and construct semi-synthetic data based on the MineThatData Email dataset \citep{Hillstrom2008}, a randomized marketing experiment with approximately 64,000 customers. 
As discussed in the Experimental Section~\ref{sec:experiments} of the main paper, we follow the scenarios described in the \textit{Bias Model} paragraph of \citet{de-bartolomeis2024b} and construct semi-synthetic data based on the MineThatData Email dataset \citep{Hillstrom2008}, a randomized marketing experiment involving approximately 64,000 customers. The treatment consisted of receiving either a “Men’s” or “Women’s” email campaign, combined into a single group, while the control group received no email. The outcome of interest was the dollars spent in the two weeks following the campaign. Continuous features were normalized and categorical features one-hot encoded, resulting in 13 covariates, and a constant shift of 30 was added to treated individuals to allow flexible bias introduction.

Let $T$ denotes the treatment variable and
$\mathbf{Z} = \{\text{mens}, \text{womens}, \text{zip\_code}, \text{newbie}, \text{channel}, \text{history}\}$ denotes the set of available covariates. In each scenario, bias ($\delta^*$) is introduced in the outcome for a specific subgroup of individuals. In \textbf{Scenario 1}, a single subgroup receives a constant bias $\delta^\star = 60$, while all other observations remain unbiased. Two application modes are considered:
\[
\delta^* =
\begin{cases}
60, & \text{if\textbf{ Mode 1:} } channel = 1 \text{ and } T = 1,\\
60, & \text{if\textbf{ Mode 2:} } newbie = 1, \; channel = 1, \text{ and } T = 1,\\
0, & \text{otherwise}.
\end{cases}
\]
In \textbf{Scenario 2}, biases of varying magnitudes are introduced across 12 subgroups, defined by combinations of the binary features \texttt{newbie} and \texttt{mens}, and the categorical feature \texttt{channel}.  The largest bias is $\delta^\star = 60$, affecting only 12\% of the observational dataset. The subgroup biases approximately cancel each other on average, resulting in an overall mean bias close to zero, i.e., $\mathbb{E}_{P_{\mathrm{os}}}[\delta^\star(X)] \approx 0.$ For treated units ($T=1$), the bias $\delta^*$ is added according to the following conditions:

\[
\delta^* =
\underbrace{
\begin{cases}
-40, & \text{if channel = 0} \\
-40, & \text{if channel = 1} \\
40, & \text{if channel = 2} 
\end{cases}}_{\text{channel-specific bias}}
+
\underbrace{
\begin{cases}
20, & \text{if mens = 1} \\
0, & \text{otherwise} 
\end{cases}}_{\text{mens bias}}
+
\underbrace{
\begin{cases}
-10, & \text{if newbie = 1} \\
0, & \text{otherwise} 
\end{cases}}_{\text{newbie bias}}
\]

Bias introduced in this way, changes the outcome mechanism across domains, and the selection directly influences the outcome. As a result,\textbf{ no sABS exists}. We applied \texttt{ProbsABS} with $t = 0.5$ and \texttt{Bias-test} with $\delta \in \{20,40,58,70\}$ on the Full Set (FS) of covariates. In the \texttt{Bias-test} method, when the user’s tolerance threshold, $\delta$, is smaller than the maximum bias introduced ($\delta^*=60$), the null hypothesis is expected to be rejected. 

Building on the experiments presented in the main paper for $N_o^* = 5000$ and $N_e \in \{50, 100, 300, 1000\}$, Fig.~\ref{fig:scenarios_combined} reports results for $N_o^* = 15000$ with $N_e \in \{500, 1000, 3000, 5000\}$, evaluating whether the two methods can correctly identify that the Full Set (FS) is not an sABS across different sample sizes. For large biased subgroups (Scenario~1, Mode~1; Scenario~2), \texttt{ProbsABS} rejects more reliably than \texttt{Bias-test} at smaller $N_e$ values when $\delta$ is close to the true induced bias ($\delta^* = 60$). As $\delta$ decreases and using $N_o^* = 15000$, the performance of \texttt{ProbsABS} and \texttt{Bias-test} becomes comparable. For smaller biased subgroups (Scenario~1, Mode~2), \texttt{Bias-test} performs better under stricter tolerance levels, consistent with its design for detecting small-group biases.

\begin{figure}[h!]
    \centering
    % First row: Scenario 1 Mode 1 and Mode 2 side by side
    \begin{minipage}{0.35\textwidth}
        \centering
        \caption*{\textbf{(a) Scenario 1, Mode 1}}
        \resizebox{\linewidth}{!}{%
        \begin{tabular}{l c c c c c}
        \toprule
         & \texttt{FindsABS} & \multicolumn{4}{c}{\texttt{Bias-test}} \\
        \cmidrule(lr){2-2} \cmidrule(lr){3-5}
        \textbf{$N_e$} & $t=0.5$ & $\delta=20$ & $\delta=40$ & $\delta=58$ & $\delta=70$ \\
        \midrule
        500   & \textbf{20/20} & \textbf{20/20}  & 17/20 & 11/20 & 8/20  \\
        1000  & \textbf{20/20} & \textbf{20/20} & \textbf{20/20} & 11/20 & 8/20  \\
        3000  & \textbf{20/20} & \textbf{20/20} & \textbf{20/20} & 9/20  & 5/20  \\
        5000  & \textbf{20/20} & \textbf{20/20} & \textbf{20/20} & 10/20 & 3/20  \\
        \bottomrule
        \end{tabular}
        }
    \end{minipage}%
    \hspace{10em} % space between rows
    \begin{minipage}{0.35\textwidth}
        \centering
        \caption*{\textbf{(b) Scenario 1, Mode 2}}
        \resizebox{\linewidth}{!}{%
        \begin{tabular}{l c c c c c}
        \toprule
         & \texttt{FindsABS} & \multicolumn{4}{c}{\texttt{Bias-test}} \\
        \cmidrule(lr){2-2} \cmidrule(lr){3-5}
        \textbf{$N_e$} & $t=0.5$ & $\delta=20$ & $\delta=40$ & $\delta=58$ & $\delta=70$ \\
        \midrule
        500   & 6/20 & \textbf{19/20} & \textbf{19/20} & 10/20 & 8/20  \\
        1000  & 11/20 & \textbf{20/20} & \textbf{20/20} & 19/20 & 19/20 \\
        3000  & 13/20 & \textbf{20/20}& \textbf{20/20} & \textbf{20/20} & \textbf{20/20} \\
        5000  & 14/20 & \textbf{20/20} & \textbf{20/20} & \textbf{20/20} & \textbf{20/20} \\
        \bottomrule
        \end{tabular}
        }
    \end{minipage}%

    \vspace{0.5em} % space between rows

    \begin{minipage}{0.35\textwidth}
        \centering
        \caption*{\textbf{(c) Scenario 2}}
        \resizebox{\linewidth}{!}{%
        \begin{tabular}{l c c c c c}
        \toprule
         & \texttt{FindsABS} & \multicolumn{4}{c}{\texttt{Bias-test}} \\
        \cmidrule(lr){2-2} \cmidrule(lr){3-5}
        \textbf{$N_e$} & $t=0.5$ & $\delta=20$ & $\delta=40$ & $\delta=58$ & $\delta=70$ \\
        \midrule
        500   & \textbf{20/20} & \textbf{20/20} & 14/20 & 14/20 & 11/20 \\
        1000  & \textbf{20/20} & \textbf{20/20} & 15/20 & 11/20 & 9/20  \\
        3000  & \textbf{20/20} & \textbf{20/20} & \textbf{20/20} & 14/20 & 9/20  \\
        5000  & \textbf{20/20} & \textbf{20/20} & \textbf{20/20} & 14/20 & 12/20 \\
        \bottomrule
        \end{tabular}
        }
    \end{minipage}%

    \caption{\textbf{Rejection rates (out of 20 runs) for the hypothesis that the full set is an sABS in the semi-synthetic data.} The full set is not an sABS. \texttt{ProbsABS} correctly rejects the hypothesis when the affected subgroups are large, across all experimental sample sizes (Senario 1, Mode 1; Scenario 2). \texttt{Bias-test} also identifies that the FS is not an sABS especially with smaller $\delta's$. With $N_o^*=15000$ and increasing $N_e$, \texttt{Bias-test} is better in identifying small biased subgroups (Scenario 1, Mode 2).}
    \label{fig:scenarios_combined}
\end{figure}

Using all the available data ($N_e=12800$ and $N_o^*=51200$), both methods in both scenarios correctly identify that the FS is not an SABs.

\section{Proofs of Theorems \ref{the:proof_MB},
\ref{the:proof_equality}, \ref{the:proof_superset}.}

\begin{lemma}\label{lemma:backall}
Let $\vars Z \subseteq \vars V$. Then $\vars Z$ is a backdoor set for $(X,Y)$ in $\graph G$ (with $\graph G = \graph G^*$) or in the selection diagram $\graph D$ if and only if it is a backdoor set in all three.

\begin{proof}
($\Rightarrow$) $\vars Z$ is a backdoor set in $\graph G = \graph G^*$. The graphs $\graph G$ and $\graph D$ differ only in the outgoing edges from $\vars S$ to $\vars V \cup X \cup Y$. These additional edges cannot introduce new backdoor paths between $X$ and $Y$ in $\graph D$. This is because a backdoor path must begin with an edge pointing into $X$, and the only potential new edge into $X$ would come from a variable in $\vars S$. Since variables in $\vars S$ have no incoming edges themselves, they cannot create a new open path from $Y$ to $X$ in \graph D. Thus, $\vars Z$ is a backdoor set in \graph D.\\
($\Leftarrow$) \vars Z is a backdoor set in \graph D. $\graph G (= \graph G^*)$ and \graph D only differ in the outgoing edges from \vars S to $\vars V \cup X\cup Y$. These edges cannot participate in backdoor paths in \graph D, so \vars Z blocks all backdoor paths in $\graph G$ and $\graph G^*$. Thus, $\vars Z$ is a backdoor set in \graph G $=\graph G^*$.

\end{proof}
\end{lemma}
Lemma \ref{lemma:backall} states that, under the assumption of shared causal graphs, $\graph G$, $\graph G^*$, and $\graph D$ have the same backdoor sets. This lemma applies to all proofs in this document.

\begin{lemma}Let $\vars Z \subseteq \vars V$ be an s-admissible backdoor set for $(X, Y)$ relative to a selection diagram $\graph D$ and let $Q \in \vars Z \setminus \mathbf{\mathrm{MB}}(Y)$ that has an m-connecting path $Q \pi_{QY} Y$ with $Y$ given $\vars Z \setminus Q$. Then there exists a variable $W \in \mathbf{\mathrm{MB}}(Y) \setminus \vars Z$ such that: $W \cup \vars Z$ is an s-admissible backdoor set for $(X, Y)$.
\begin{proof}
Let $Q$ be a variable as described above. Then there exists a variable $W \in \mathrm{MB}(Y)$ between $Q$ and $Y$ that is a non-collider on $\pi_{QY}$, otherwise $Q \in \mathrm{Pa}(\mathrm{Dis}(Y))$, and therefore $Q \in \mathrm{MB}(Y)$. Assume also that $W$ is the non-collider that is closest to $Y$. Then $W \notin \vars Z$, otherwise $\pi_{QY}$ would be blocked given $\vars Z \setminus Q$. We will now show, by contradiction, that adding $W$ to the conditioning set $\vars Z$ does not open any backdoor paths from $X$ to $Y$; hence, $\vars Z \cup W$ is a backdoor set. 

\textbf{$\vars W\cup Z$ is also a backdoor set:}\\
Assume that conditioning on $W$ opens a path $\pi_{XY}$ between $X$ and $Y$ that is blocked given just $\vars Z$. Then $W$ must be a descendant of one or more colliders on that path. Let $C$ be the collider closest to $X$ on $\pi_{XY}$ such that $C$ is blocked on $\pi_{XY}$ given $\vars Z$, but open given $\vars Z \cup W$. Then $\pi_{XC}$ is open given $\vars Z$, and $W$ is a descendant of $C$. Let $\pi_{CW}$ be the (possibly empty) directed path from $C$ to $W$, and let $\pi_{WY}$ be the subpath of $\pi_{CY}$ from $W$ to $Y$. Since $C$ is blocked on $\pi_{XY}$ given $\vars Z$, no variable on $\pi_{CW}$ can be in $\vars Z$. 

We now show that  $\pi_{XC} \pi_{CW} \pi_{WY}$ is an open path from $X$ to $Y$ given $\vars Z$ in $D_{\underline{X}}$, which is a contradiction:
Firstly, notice that all subpaths are open given \vars Z, and \var C is a non-collider on $\pi_{XC} \pi_{CW} \pi_{WY}$ ($\pi_{CW}$ is out of $\var C$).  Then $\pi_{CW} \pi_{WY}$ is also open given \vars Z:
\begin{itemize}
    \item \textbf{Case 1a} If $\pi_{WY}$ is out of \var W, then \var W is a non-collider on $\pi_{XC} \pi_{CW} \pi_{WY}$ which is then open.
    \item\textbf{Case 1b} If $\pi_{WY}$ is into \var W, then $\pi_{QW}$ is out of \var W (\var W is a non-collider on that path), which means that \var W is an ancestor of some collider on the path $\pi_{QY}$  (or an ancestor of \var Q, if there are no colliders on $\pi_{QY}$), since $\pi_{QW}$ is open \vars Z. %\textcolor{red}{or is an ancestor of some non-collider that is not in \vars Z on the path (as $\pi_{QW}$ is open given $\vars Z \setminus Q$)}. Because the path is open given \vars Z, this means that \var W is also an ancestor of \vars Z.
\end{itemize}  

Hence, \var W is either a non-collider on the path $\pi_{XC} \pi_{CW} \pi_{WY}$, or a collider and an ancestor of \vars Z. In both cases,  the path $\pi_{XC} \pi_{CW} \pi_{WY}$ is open given \vars Z.  This is a contradiction, since $\vars Z$ is a backdoor set. Thus, $W$ does not open any backdoor paths, and $\vars Z \cup W$ is also a backdoor set.\\

\textbf{$\vars W\cup Z$ is also an s-admissible set:}\\
We will now show that conditioning on \var W cannot open an s-admissible path from \var S to \var Y. To show this, we will show that any such path would already be open given \vars Z.

Assume conditioning on $W$ opens a path $\pi_{SY}$ between \var S and \var Y that is blocked given $\mathbf{Z}$. Then $W$ must be a descendant of one or more colliders on that path. Let $C'$ be the collider closest
to $S$ on $\pi_{SY}$ such that $C'$ is blocked on $\pi_{SY}$ given $\mathbf{Z}$ and open given $\vars Z \cup W$. Then $\pi_{SC'}$ is open given $\vars Z$, and $W$ is a descendant of $C'$. Hence, there exists a (possibly empty) directed path  $\pi_{C'W}$ from $\var C'$ to $\var W$. Moreover, the path $\pi_{SY}$ can be split in two paths: $\pi_{SC'}$ and $\pi_{C'Y}$. 

\textbf{Case 2a: $C'$ is not  on  $\pi_{WY}$.}
    In that case, let $ {\pi_{W Y}}$ be the subpath of $\pi_{C'Y}$ from $W$ to $Y$. Since $C'$ is blocked on $\pi_{SY}$ given $\mathbf{Z}$, no variable on $\pi_{C'W}$ can be in $\mathbf{Z}$. But then  $\pi_{S C'}\pi_{C' W}\pi_{W Y}$ is an open path from S to Y given $\mathbf{Z}$, following the same reasoning as in Case 1a above. Contradiction, since $\mathbf{Z}$ is an s-admissible set.
    
\textbf{Case 2b: $C'$ is on  $\pi_{WY}$.}
     In that case,  $\pi_{WY}$ can be split in two subpaths:  $\pi_{WC'}$ and $\pi_{C'Y}$. $\pi_{C'Y}$  is open given \vars Z (as part of the open path $\pi_{WY}$).
     \begin{itemize}
         \item If $\pi_{C'Y}$ has a tail into  $\var C'$, then $\var C'=W$ is a non-collider on the path $\pi_{SC'}\pi_{C'Y}$, which means that this path is open given \vars Z. \footnote{\var W has to be $\var C'$ in this case,since \var W is the first non-collider on $\pi_{QY}$}.
         \item If $\pi_{C'Y}$ path has an arrowhead into $\var C'$, then $\var C'$ would be a collider in the path $\pi_{SC'}\pi_{C'Y}$. In addition, $C'$ would also be a collider on $\pi_{QY}$ (since 
         \var W is non-collider on $\pi_{Q Y}$ that is closest to \var Y.) But then  $\var C'$ is an ancestor of some variable in \vars Z, since $\var C'$ is open on the path $\pi_{QY}$. Hence, $\pi_{SC'}\pi_{C'Y}$ is also open given \vars Z.
     \end{itemize}
     
We have shown that in all cases, if \var W was to open an s-admissible or backdoor path, this path would be open given \vars Z. Hence conditioning on \var W cannot open an s-admissible path that was blocked given \vars Z, so $\vars Z\cup W$ is an s-admissible backdoor set.
\end{proof}
\end{lemma}

\begin{repeatedtheorem} [\ref{the:proof_MB}]
If there exists a set $\vars Z \subseteq \vars V$ that is an s-admissible backdoor set for $(X,Y)$ relative to a selection diagram $D$, there always exists a set $\mathbf{Z^*}\subseteq MB(Y)$ that is also s-admissible backdoor set in \graph D.
\end{repeatedtheorem}
\begin{proof}
Let $\mathbf{Z} = \mathbf{W} \cup \mathbf{Q}$ consist of two disjoint sets of variables, where $\mathbf{W} \subseteq \mathrm{MB}(Y)$ and $\mathbf{Q} \notin \mathrm{MB}(Y)$, and suppose that $\mathbf{Z}$ is an sABS.  

To prove the theorem, we define a constructive procedure that iteratively applies \textit{Lemma~2}. After this step, reduces the set $\mathbf{Z}$ by eliminating a variable $Q \in \mathbf{Z} \setminus \mathrm{MB}(Y)$, ensuring that the process terminates after finitely many steps.

\begin{algorithm}[t]
\LinesNumbered
\ForEach{$Q \in \mathbf{Q}$}{
    \ForEach{m-connecting path $\pi_{QY}$ from $Q$ to $Y$ given $\mathbf{Z}$}{
        find $W$ such that Lemma~2 holds\;
        $\mathbf{Z} \gets \mathbf{Z} \cup W$
        \tcc*[r]{$\mathbf{Z}$ remains an sABS}
    }
    $\mathbf{Z} \gets \mathbf{Z} \setminus Q$
    \tcc*{$\mathbf{Z}$ remains an sABS}
}

\caption{Construction of an sABS: $\mathbf{Z}^* \subseteq MB(Y)$ using Lemma~2}\label{algo:sabs}
\end{algorithm}

The inner \texttt{while}-loop (lines 2--5) terminates once there is no longer an m-connecting path $\pi_{QY}$ between $Q$ and $Y$ given $\mathbf{Z} \setminus Q$.  

Line~6 removes a variable $Q \in \mathbf Q$, if $Q$ does not have an m-connecting path with $Y$ given $\mathbf{Z} \setminus Q $. Removing such a $Q$ cannot introduce new paths from $X$ to $Y$ or from $\mathbf{S}$ to $Y$, since all paths from $Q$ to $Y$ are already blocked given $\mathbf{Z} \setminus Q$.  

The algorithm terminates when there is no variable $Q \in \mathbf{Z} \setminus \mathrm{MB}(Y)$ that has an m-connecting path with $Y$ given $\mathbf{Z} \setminus Q$. At that point, the final conditioning set satisfies
\[
\mathbf{Z}^* = \mathbf{W} \subseteq \mathrm{MB}(Y).
\]
In the most exhaustive case, the procedure yields $\mathbf{Z}^* = \mathrm{MB}(Y)$, which makes all other variables independent of $Y$ conditional on $\mathbf{Z}^*$. Thus, the construction ensures that $\mathbf{Z}^*$ is a valid sABS.

\end{proof}

\begin{definition}[Conditional Entropy] \label{def:condentropy}
\noindent Let $P$ be the full joint probability distribution over a set of variables $\boldsymbol{V}$, let $Y\in \boldsymbol{V}$ be a variable, and let $\vars Z \subseteq \boldsymbol{V} \setminus \{Y\}$ be a set of  variables. Then, the conditional entropy of $Y$ given $\vars Z$ is defined as follows \citep{cover1999elements}:
\begin{equation}\label{eq:entDisc}
    % \begin{split}
        H(Y|\vars Z) = - \sum_{y}{\sum_{z}{P(y,z)\cdot \log P(y|z)}}
        % H(Y|X,\vars Z')&= - \sum_{y}{\sum_{x}{\sum_{z}{P(y,x,z)\cdot \log P(y|x,z)}}}  \,,
    % \end{split}
\end{equation}
where $y$ and $z$ denote the values of $Y$ and $\vars Z$, respectively. 
\end{definition}

\begin{lemma}\label{lem:ent}
Let $X,Y \in \boldsymbol{V}$ be two variables and $\vars Z  \subseteq \boldsymbol{V} \setminus \{X,Y\}$ be a set of  variables. Then, $H(Y|\vars Z) \geq H(Y|X,\vars Z)$, where the entropies are defined by Definition \ref{def:condentropy}, and the equality holds if and only if $Y \CI X | \vars Z$.
\end{lemma}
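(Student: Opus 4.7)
The plan is to prove this as the standard ``conditioning reduces entropy'' fact by recognizing the difference $H(Y|\vars Z) - H(Y|X,\vars Z)$ as the conditional mutual information $I(Y;X|\vars Z)$ and then showing this is non-negative via Jensen's inequality, with the equality case characterizing conditional independence.

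First I would expand the two entropies using Definition \ref{def:condentropy} and combine them. Writing $P(y|z) = \sum_x P(x,y|z)/\sum_x P(x|z) \cdot \ldots$ is messier than the direct route: rewrite $P(y|z) = \sum_x P(y|x,z) P(x|z)$ and split $P(y,z) = \sum_x P(x,y,z)$ inside the sum defining $H(Y|\vars Z)$. After algebraic manipulation, the difference telescopes to
\begin{equation*}
H(Y|\vars Z) - H(Y|X,\vars Z) = \sum_{x,y,z} P(x,y,z) \log \frac{P(x,y|z)}{P(x|z)\, P(y|z)},
\end{equation*}
which is exactly $I(Y;X|\vars Z)$, a (weighted) average over $z$ of KL divergences $D_{\mathrm{KL}}\bigl(P(X,Y|z)\,\|\,P(X|z)P(Y|z)\bigr)$.

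Next I would apply Jensen's inequality (equivalently, Gibbs' inequality) to the $-\log$ function to conclude that each of these KL divergences is non-negative, and therefore $I(Y;X|\vars Z) \geq 0$, which gives the desired inequality $H(Y|\vars Z) \geq H(Y|X,\vars Z)$.

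For the equality claim, I would use the strict convexity of $-\log$: equality in Jensen's inequality holds iff $\frac{P(x,y|z)}{P(x|z)P(y|z)}$ is constant (equal to $1$) on the support of $P(x,y,z)$, i.e., $P(x,y|z) = P(x|z) P(y|z)$ for every $(x,y,z)$ with $P(x,y,z) > 0$. This is precisely the statement $Y \CI X \mid \vars Z$. The only mild subtlety, which I would flag but not belabor, is handling zero-probability configurations via the standard convention $0 \log 0 = 0$ and restricting to the support of $P(\vars Z)$; this is routine and does not constitute a real obstacle. The whole proof is classical information theory and there is no hard step.
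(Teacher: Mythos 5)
Your proposal is correct and follows essentially the same route as the paper: both identify the difference $H(Y|\vars Z) - H(Y|X,\vars Z)$ as the conditional mutual information $I(X;Y|\vars Z)$ and conclude via its non-negativity, with vanishing iff $Y \CI X \mid \vars Z$. The only difference is that the paper cites \citet{cover1999elements} for these facts, whereas you prove them directly via Jensen's/Gibbs' inequality and strict convexity of $-\log$ (correctly handling the support/zero-probability subtlety), which merely unpacks the cited result.
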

\begin{proof}
Applying the chain rule of entropy, the conditional mutual information can be computed as follows~\cite{cover1999elements}:
\begin{equation}
        I(X; Y|\vars Z) = H(Y|\vars Z) - H(Y|X,\vars Z)  \,.
\end{equation}
Given that the mutual information is nonnegative (i.e., $I(X; Y|\vars Z) \geq 0$) and $I(X; Y|\vars Z) = 0$ if and only if $Y \CI X |  \vars Z$ (see~\citet{cover1999elements}, page 29), it follows that:
\begin{equation}
\begin{split}
        H(Y|\vars Z) - H(Y|X,\vars Z) \geq 0\\
        H(Y|\vars Z)  \geq H(Y|X,\vars Z) 
         \,,
\end{split}
\end{equation}
where the equality holds if and only if $Y\CI X | \vars Z$. 
\end{proof}

%%%%% Lemma 

\begin{lemma} \label{lem:logPZ_CMB}
Given Assumptions \ref{ass:main}, \ref{ass:sfaith}, the BD score for $\log P(D_e|D_o^*, h_{\vars Z})$ in the large sample limit is defined as follows:
\begin{equation}\label{eq:CBM}
\begin{split}
\lim_{N \rightarrow \infty} & \log P(D_e|h_\vars{Z}, D_o^*) =\\
& \lim_{N \rightarrow \infty} -(N_o+N_e) \cdot H_{o,e}(Y|X, \vars Z) + N_o \cdot H_{o}(Y|X, \vars Z) \\ & -\frac{q(r - 1)}{2} \left[\log (N_o+N_e) -\log N_o\right] + const.
\end{split}
\end{equation}
Additionally, the BD score for $\lim_{N \rightarrow \infty} \log P(D_e|D_o^*, \neg h_\vars{Z})$ is defined as follows:
\begin{equation}\label{eq:nCBM}
    \begin{split}
    \lim_{N \rightarrow \infty} & \log P(D_e|D_o^*, \neg h_\vars Z) =\\
    & \lim_{N \rightarrow \infty} -N_e \cdot H_{e}(Y|X, \vars Z) -\frac{q(r - 1)}{2} \log N_e + const.
\end{split}
\end{equation}
\end{lemma}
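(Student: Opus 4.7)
The plan is to take logarithms of the closed-form expressions for $P(D_e\mid D_o^*, h_{\vars Z})$ and $P(D_e\mid D_o^*,\neg h_{\vars Z})$ in Table~\ref{tab:equations}, apply Stirling's asymptotic expansion $\log \Gamma(\alpha+n) = n\log n - n + (\alpha - 1/2)\log n + \tfrac{1}{2}\log(2\pi) + O(1/n)$ to each Gamma factor, and then invoke the strong law of large numbers to re-express the leading $n\log n$ terms as conditional entropies. The strict positivity of $P^*$ together with Borel--Cantelli guarantees that with probability one every cell count $N_{jk}^o, N_{jk}^e$ grows linearly in $N$, so the Stirling remainders are uniformly $O(1/N)$ and can be absorbed into the constant.

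For the $h_{\vars Z}$ case, substituting Stirling into the first line of Table~\ref{tab:equations} and collecting terms, the linear $-n$ contributions cancel exactly via $\sum_j N_j^e = \sum_{j,k} N_{jk}^e = N_e$. The $n\log n$ contributions combine into $\bigl[\sum_{j,k} M_{jk}\log M_{jk} - \sum_j M_j \log M_j\bigr] - \bigl[\sum_{j,k} N_{jk}^o\log N_{jk}^o - \sum_j N_j^o \log N_j^o\bigr]$, where $M_{jk} := N_{jk}^o + N_{jk}^e$ and $M_j := N_j^o + N_j^e$. Each bracketed difference is the log-likelihood of $Y\mid X,\vars Z$ evaluated at the corresponding empirical plug-in estimator; by the law of large numbers they converge to $-(N_o+N_e) H_{o,e}(Y\mid X,\vars Z)$ and $-N_o H_o(Y\mid X,\vars Z)$ respectively, where $H_{o,e}$ denotes the conditional entropy under the equal mixture of $P$ and $P^*$ (since $N_e=N_o$ in the limit). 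The $(\alpha - 1/2)\log n$ corrections, summed over the $q$ configurations of $(X,\vars Z)$ and the $r$ values of $Y$, collapse to $-\tfrac{q(r-1)}{2}\bigl[\log(N_o+N_e)-\log N_o\bigr]$ plus a constant; this is the familiar BIC penalty for the $q(r-1)$ free parameters of $P(Y\mid X,\vars Z)$, paid against the combined sample size and credited back against the observational sample since the parameters are shared under $h_{\vars Z}$.

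For the $\neg h_{\vars Z}$ case, the observational data does not inform $\theta_e$ and the marginal likelihood in Table~\ref{tab:equations} reduces to the classical Dirichlet--multinomial integral. The same Stirling argument, now applied to a single set of Gamma ratios, yields the leading term $-N_e \hat H_e(Y\mid X,\vars Z) \to -N_e H_e(Y\mid X,\vars Z)$ together with the standard BIC penalty $-\tfrac{q(r-1)}{2}\log N_e$, matching Eq.~\ref{eq:nCBM}.

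The main obstacle I expect is bookkeeping: tracking the $(\alpha-1/2)\log n$ corrections across four families of Gamma factors in the $h_{\vars Z}$ case without losing or double-counting $\log N_o$ versus $\log(N_o+N_e)$ terms. A secondary subtlety is justifying that all $\log(N_j^o/M_j)$-type residuals are $O(1)$; this follows because $N_o$ and $N_e$ grow at the same rate, so $M_j/N_j^o \to (p^o_j + p^e_j)/(2p^o_j)$ is a finite positive constant. Once these bookkeeping points are handled, Eqs.~\ref{eq:CBM} and~\ref{eq:nCBM} drop out directly.
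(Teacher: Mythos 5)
Your proposal is correct and follows essentially the same route as the paper, which proves this lemma only by deferring to Lemmas 2.5--2.6 of \citet{pmlr-v206-triantafillou23a}: those results rest on exactly your argument---take logs of the Dirichlet--multinomial closed forms in Table~\ref{tab:equations}, apply Stirling's expansion to each Gamma factor (the linear terms cancel via $\sum_j N_j^e=\sum_{j,k}N_{jk}^e=N_e$, the $n\log n$ terms assemble into empirical conditional entropies that converge by the law of large numbers, and the $(\alpha-\tfrac12)\log n$ corrections collapse to the $-\tfrac{q(r-1)}{2}$ BIC-type penalties, using $\alpha_j=\sum_k\alpha_{jk}$)---and your bookkeeping of the $\log N_o$ versus $\log(N_o+N_e)$ terms checks out. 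The only quibble, shared with the paper's own statement of Assumptions A, is that linear growth of the experimental cell counts $N^e_{jk}$ requires strict positivity of the source experimental distribution too, not just of $P^*$.
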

\begin{proof} The proofs are similar to the proofs of   Lemmas 2.5, 2.6 in \citet{pmlr-v206-triantafillou23a}.
\end{proof}

\begin{lemma}\label{lem:Hoe}
Let $P_{o^*}, P_e, P_{o^*,e}$ denote the joint probability distribution in the observational, experimental, and joint data, respectively. Also, let $\vars Z \subseteq \vars O$ be a subset of variables. Then,
\begin{equation}\label{eq:Hoe}
    2H(P_{o^*, e}(Y|X, \vars Z))\geq H(P_{o^*}(Y|X, \vars Z))+H(P_e(Y|X, \vars Z)),
\end{equation}
where the equality in Equation (\ref{eq:Hoe}) holds if and only if Eq. \ref{eq:ident} holds.
\end{lemma}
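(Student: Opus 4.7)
The plan is to recognize the inequality as the concavity of conditional entropy applied to an equal-weight mixture. Under Assumptions A with $N_o = N_e = N$, the pooled empirical joint distribution of $(Y, X, \vars Z)$ converges to $P_{o^*,e} = \tfrac{1}{2} P_{o^*} + \tfrac{1}{2} P_e$, so the left-hand side of (\ref{eq:Hoe}) is the conditional entropy of $Y$ given $X, \vars Z$ under this equal mixture. The target statement is then the $\lambda = \tfrac{1}{2}$ instance of the concavity of $P \mapsto H_P(Y \mid X, \vars Z)$, multiplied by $2$.

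Next I would establish this concavity directly by a standard computation. I would write
$$P_{o^*,e}(y \mid x, z) = w_{x,z}\, P_{o^*}(y \mid x, z) + (1 - w_{x,z})\, P_e(y \mid x, z), \qquad w_{x,z} = \tfrac{P_{o^*}(x,z)/2}{P_{o^*,e}(x,z)},$$
apply concavity of Shannon entropy pointwise in $(x, z)$ to the vector $P_{o^*,e}(\cdot \mid x, z)$, and then take the $(x,z)$-average under $P_{o^*,e}$. Using the collapse identity $P_{o^*,e}(x,z)\, w_{x,z} = \tfrac{1}{2} P_{o^*}(x,z)$ and its symmetric counterpart for $P_e$, the weighted sum on the right reduces to $\tfrac{1}{2} H_{P_{o^*}}(Y \mid X, \vars Z) + \tfrac{1}{2} H_{P_e}(Y \mid X, \vars Z)$, which is (\ref{eq:Hoe}) after clearing the factor of $2$.

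For the equality case I would invoke strict concavity of $-p \log p$: at every $(x, z)$ with both $P_{o^*}(x,z) > 0$ and $P_e(x,z) > 0$, pointwise equality in Jensen's inequality forces $P_{o^*}(\cdot \mid x, z) = P_e(\cdot \mid x, z)$. Strict positivity of $P^*$ from Assumptions A, together with the randomization of $X$ in $D_e$ (which guarantees $P_e(X = x, \vars Z = z) > 0$ whenever $P_e(\vars Z = z) > 0$), ensures the conditioning supports coincide on all relevant $(x, z)$. Hence equality in (\ref{eq:Hoe}) is equivalent to $P_{o^*}(Y \mid X, \vars Z) = P_e(Y \mid X, \vars Z)$. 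Since $P_e(Y \mid X, \vars Z) = P(Y \mid do(X), \vars Z, \vars s)$ by randomization and $P_{o^*}(Y \mid X, \vars Z) = P(Y \mid X, \vars Z, \vars s^*)$, this is precisely Eq.~(\ref{eq:sabs1})---the intended cross-domain reading of (\ref{eq:ident}).

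The inequality itself is a one-line application of concavity; the part I expect to be the real obstacle is the ``only if'' direction of the equality characterization. Strict concavity only gives pointwise equality of conditionals on the overlap of the two $(X, \vars Z)$-supports, and promoting that to the global statement $P_{o^*}(Y \mid X, \vars Z) = P_e(Y \mid X, \vars Z)$ is exactly where the strict positivity of $P^*$ and the randomization assumption on the source experimental data are needed. I would make this support-matching step explicit rather than leaving it implicit, since it is what ties Lemma~\ref{lem:Hoe} to the sABS condition used in Theorem~\ref{the:proof_equality}.
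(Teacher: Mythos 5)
Your proof is correct and is essentially the argument the paper relies on: the paper's own proof of Lemma~\ref{lem:Hoe} is merely a pointer to Lemma~2.7 of \citet{pmlr-v206-triantafillou23a}, whose content is exactly your equal-weight mixture decomposition $P_{o^*,e}=\tfrac12 P_{o^*}+\tfrac12 P_e$ combined with pointwise strict concavity of entropy and the collapse identity. Your two refinements---making the support-matching step in the ``only if'' direction explicit (strict positivity of $P^*$ plus randomization of $X$ in $D_e$), and noting that equality here really characterizes the cross-domain identity of Eq.~\eqref{eq:sabs1} rather than the single-domain Eq.~\eqref{eq:ident} literally cited in the lemma statement (an artifact inherited from the single-domain setting of the cited work)---are details the paper leaves implicit and are worth recording.
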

\begin{proof} The proof is similar the proof of Lemma 2.7 in \citet{pmlr-v206-triantafillou23a}.
\end{proof}

To simplify the notation, we use  $H_{o^*, e}(Y|X, \vars Z), H_{o^*}(Y|X, \vars Z), H_{e}(Y|X, \vars Z)$ to denote $H(P_{o^*, e}(Y|X, \vars Z)),$$ H(P_{o^*}(Y|X, \vars Z))$, and $ H(P_{e}(Y|X, \vars Z))$, respectively.

\begin{repeatedtheorem}[\ref{the:proof_equality}]
Assume Assumptions \ref{ass:main}, \ref{ass:sfaith} hold, $X, Y, \vars O$ are discrete, and $ N_o$ and $N_e$ increase equally without limit ($N:=N_e=N_o$ in the limit). Then Eq. \ref{eq:hzt} will converge to 1 if and only if \vars Z is an s-admissible backdoor set.
\begin{equation}
    \begin{cases}
    \displaystyle \lim_{N \rightarrow \infty}P(h_\vars{Z}|D_e, D_o^*) = 1, & \text{\vars Z is an sABS}   \\
    \smallskip
    \displaystyle \lim_{N \rightarrow \infty} P(h_\vars{Z}|D_e, D_o^*) = 0,& \text{ otherwise}
    \end{cases}
\end{equation}
\end{repeatedtheorem}
\begin{proof}
 
\noindent For a set $\vars Z$, we have that
\begin{equation}\label{eq:approx1}
\begin{split}
 \lim_{N \rightarrow \infty} P(h_{\vars Z}|D_o^*, D_e) \lim_{N \rightarrow \infty} =\frac{P(D_e|D_o^*, h_{\vars Z})P(h_{\vars Z}|D_o^*)}{P(D_e|D_o^*, h_{\vars Z})P(h_{\vars Z}|D_o^*)+P(D_e|D_o^*, P(\neg h_{\vars Z}|D_o^*)} .   
\end{split}
\end{equation}

By inverting Equation (\ref{eq:approx1}),  and for $P(h_{\vars Z}|D_o^*) = 1/2$ we obtain the following:
\begin{equation}\label{eq:inv1}
\begin{split}
  \lim_{N \rightarrow \infty} \frac{1}{P(h_{\vars Z}|D_o^*, D_e)} = 
  \lim_{N \rightarrow \infty} \frac{P(D_e|D_o^*, h_{\vars Z})+P(D_e|D_o^*, \neg h_{\vars Z})}{P(D_e|D_o^*, h_{\vars Z})} &=\\
  1+\lim_{N \rightarrow \infty}( \frac{P(D_e|D_o^*, \neg h_{\vars Z})}{P(D_e|D_o^*, h_{\vars Z})})&=\\
  1+
   \lim_{N \rightarrow \infty} \exp (\log{\frac{P(D_e|D_o^*, \neg h_{\vars Z})}{P(D_e|D_o^*, h_{\vars Z})}})
\end{split}
\end{equation}

\noindent Using Equations (\ref{eq:CBM}) and (\ref{eq:nCBM}), we obtain $log (\frac{P(D_e|D_o^*, \neg h_{\vars Z})}{P(D_e|D_o^*, h_{\vars Z})})$ in the large sample limit as follows:

\begin{equation}\label{eq:logf2}
\begin{split}
    & \lim_{N \rightarrow \infty} log (\frac{P(D_e|D_o^*, \neg h_{\vars Z})}{P(D_e|D_o^*, h_{\vars Z})})= \lim_{N \rightarrow \infty}\log {P(D_e|D_o^*, \neg h_{\vars Z})} - \lim_{N \rightarrow \infty} \log{P(D_e|D_o^*, h_{\vars Z})} \\
    &= \lim_{N \rightarrow \infty} -N_e \cdot H_{e}(Y|X, \vars Z) + (N_o+N_e) \cdot H_{o^*, e}(Y|X, \vars Z) - N_o \cdot H_{o^*}(Y|X, \vars Z)\\
    & -\frac{q(r - 1)}{2} \log N_e + \frac{q(r - 1)}{2} \left[\log (N_o+N_e) -\log N_o\right] + const.\\
    &= \lim_{N \rightarrow \infty} N \cdot [- H_{e}(Y|X, \vars Z)+ 2  H_{o^*, e}(Y|X, \vars Z) - H_{o^*}(Y|X, \vars Z)]\\
    & -\frac{(r - 1)}{2} (q \log N - q\log 2) + const\\
    &=\lim_{N \rightarrow \infty} N \cdot [- H_{e}(Y|X, \vars Z) + 2  H_{o^*, e}(Y|X, \vars Z) - H_{o^*}(Y|X, \vars Z)]\\
    & -\frac{q(r - 1)}{2} (\log {\frac{N}{2}}) + const.
    \end{split}
\end{equation}
where the last step is possible since  $N_e = N_o \coloneqq N$.
\vspace{5mm}

\noindent If $\vars Z$ is an sABS set, it follows  from Lemma \ref{lem:Hoe} that
$$lim_{N\rightarrow \infty}H_{o, e}(Y|X, \vars Z)=lim_{N\rightarrow \infty}H_{o}(Y|X, \vars Z)=lim_{N\rightarrow \infty}H_{e}(Y|X, \vars Z);$$
therefore
\begin{equation}
    \begin{split}
        & \lim_{N \rightarrow \infty} log (\frac{P(D_e|D_o^*, \neg h_{\vars Z}}{P(D_e|D_o^*, h_{\vars Z})})=\lim_{N \rightarrow \infty}  -\frac{q(r - 1)}{2} (\log {\frac{N}{2}}) + const = -\infty
    \end{split}
\end{equation}

Hence by Eq. \ref{eq:inv1},   \[\lim_{N \rightarrow \infty} \frac{1}{P(h_{\vars Z}|D_o^*, D_e)}\rightarrow 1\]
and therefore $P(h_{\vars Z}|D_o^*, D_e)$ goes to 1 as $N$ goes to infinity.

If $\vars Z$ is not an sABS, then by Lemma \ref{lem:Hoe}, when $N\rightarrow \infty$
$$- H_{e}(Y|X,\vars Z) + 2  H_{o^*, e}(Y|X,\vars Z) - H_{o^*}(Y|X,\vars Z)>0$$ and therefore $$\lim_{N \rightarrow \infty} N \cdot [- H_{e}(Y|X,\vars Z)+ 2  H_{o^*, e}(Y|X,\vars Z) - H_{o^*}(Y|X,\vars Z)] =\infty.$$
Notice that this term is $O(N)$ and will dominate the second term, $-\frac{q(r-1)}{2} \log{\frac{N}{2}}$. Therefore 
\begin{equation}
    \begin{split}
        & \lim_{N \rightarrow \infty} log (\frac{P(D_e|D_o^*, \neg h_{\vars Z}}{P(D_e|D_o^*, h_{\vars Z})})=\infty,
    \end{split}
\end{equation}
and by Eq. \ref{eq:approx1}  $$\lim_{N \rightarrow \infty} \frac{1}{P(h_{\vars Z}|D_o^*, D_e)} = \infty,$$ thus  $P(h_{\vars Z}|D_o^*, D_e)$ goes to 0 as $N$ goes to infinity.

\end{proof}
%%%%%%%%%% Theorem 5 %%%%%%%%%

\begin{repeatedtheorem}[\ref{the:proof_superset}]
Assume Assumptions \ref{ass:main}, \ref{ass:sfaith}  hold, $X, Y, \vars O$ are discrete, and $ N_o$ and $N_e$ increase equally without limit ($N:=N_e=N_o$ in the limit). Let \vars Z, \vars Z' be s-admissible backdoor sets, $\vars Z\subset \vars Z'$, and \( (Y \perp\!\!\!\perp \vars Z'\setminus Z \mid \vars  Z)_{D_{\overline{X}}} \). Then, 
$$\lim_{N \rightarrow \infty}P(D_e|h_{\vars Z}, D_o^*)> \lim_{N \rightarrow \infty}P(D_e|h_{\vars Z'}, D_o^*)$$
\end{repeatedtheorem}
\begin{proof} Since both \vars Z, \vars Z' are s-admissible backdoor sets, the following hold:
\begin{equation}
    P(Y|do(X), \vars Z, \vars s) = P(Y|X, \vars Z, \vars s^*)
,\qquad
    P(Y|do(X), \vars Z', \vars s) = P(Y|X, \vars Z', \vars s^*)
\end{equation}
Moreover,  since   \( (Y \perp\!\!\!\perp \vars Z'\setminus \vars Z \mid \vars  Z)_{D_{\overline{X}}} \) 
\begin{equation}
    P(Y|do(X), \vars Z', \vars s^*) = P(Y|do(X), \vars Z, \vars s^*) 
\end{equation}
Hence, in the limit, the entropies in Eq. \ref{eq:CBM} are the same for \vars Z, \vars Z':

\[\lim_{N \rightarrow \infty}H_{o^*,e}(Y|X, \vars Z)= \lim_{N \rightarrow \infty}H_{o^*}(Y|X, \vars Z) = \lim_{N \rightarrow \infty}H_{e}(Y|X, \vars Z')  = \lim_{N \rightarrow \infty}H_{e}(Y|X, \vars Z)\]

\begin{equation}\label{eq:logf1}
\begin{split}
     \lim_{N \rightarrow \infty} &\log P(D_e|D_o^*, h_{\vars Z}) - \lim_{N \rightarrow \infty} \log P(D_e|D_o^*, h_{\vars Z'}) \\
    &= \lim_{N \rightarrow \infty} -(N_o+N_e) \cdot H_{o,e}(Y|X, \vars Z) + N_o \cdot H_{o}(Y|X, \vars Z) \quad - \frac{q(r - 1)}{2} \left[\log (N_o+N_e) - \log N_o\right] \\
    & + (N_o+N_e) \cdot H_{o,e}(Y|X,\vars Z') - N_o \cdot H_{o}(Y|X, \vars Z') 
     + \frac{q'(r - 1)}{2} \left[\log (N_o+N_e) - \log N_o\right]  \\
    &= \lim_{N \rightarrow \infty}(N_o+N_e) \cdot \left[H_{o,e}(Y|X, \vars Z') - H_{o,e}(Y|X, \vars Z)\right] \quad + N_o \cdot \left[H_{o}(Y|X, \vars Z) - H_{o}(Y|X, \vars Z') \right] \\
    & - \frac{(q-q') (r - 1)}{2} \left[\log (N_o+N_e) - \log N_o\right]  \\
    &= \lim_{N \rightarrow \infty} 
     - \frac{(q-q') (r - 1)}{2} \log 2.
\end{split}
\end{equation}
where the last step is possible since both $N_e = N_o \coloneqq N$. Since $q'>q$ and $r>1$ Eq. \ref{eq:logf1}$>0$.
\end{proof}

\end{document}